\newtheorem{theorem}{Theorem}[section]
\newtheorem{proposition}[theorem]{Proposition}
\newtheorem{corollary}[theorem]{Corollary}
\newtheorem{remark}[theorem]{Remark}
\title{How Patterns Dictate Learnability in Sequential Data}
\author{
  Mario Morawski\thanks{Equal Contribution} \\
  % \texttt{mario.morawski@free.fr} \\
  \And
  Anaïs Després\footnotemark[1] \\
  % \texttt{andespres@gmail.com} \\
  \AND
  Rémi Rehm \\
}
\begin{document}

\maketitle

\begin{abstract}
Sequential data—ranging from financial time series to natural language—has driven the growing adoption of autoregressive models. However, these algorithms rely on the presence of underlying patterns in the data, and their identification often depends heavily on human expertise. Misinterpreting these patterns can lead to model misspecification, resulting in increased generalization error and degraded performance. The recently proposed \texttt{evolving pattern (EvoRate)} metric addresses this by using the mutual information between the next data point and its past to guide regression order estimation and feature selection. Building on this idea, we introduce a general framework based on predictive information—the mutual information between the past and the future, $\mathbf{I}(X_{\text{past}}; X_{\text{future}})$. This quantity naturally defines an information-theoretic learning curve, which quantifies the amount of predictive information available as the observation window grows. Using this formalism, we show that the presence or absence of temporal patterns fundamentally constrains the learnability of sequential models: even an optimal predictor cannot outperform the intrinsic information limit imposed by the data. We validate our framework through experiments on synthetic data, demonstrating its ability to assess model adequacy, quantify the inherent complexity of a dataset, and reveal interpretable structure in sequential data.

\end{abstract}

\section{Introduction}
From time series in finance and healthcare \cite{ashok2023tactis, rasul2020multivariate, lim2021time} to text streams in natural language processing \cite{naveed2023comprehensive}, much of the real-world data ingested by machine learning systems is inherently sequential. Autoregressive models are commonly trained to predict such data, with their performance relying heavily on capturing evolving patterns. However, identifying these patterns still depends largely on human expertise \cite{zeng2025towards}. Misinterpretation can result in models that overfit to training data, leading to poor calibration, generalization errors, and heightened vulnerability to adversarial examples \cite{fischer2020conditional}; for instance, classifiers can memorize random label assignments in the training set \cite{zhang2016understanding}.
A substantial body of work is devoted to bounding generalization error, primarily through Bayesian theory \cite{alquier2024user}. For sequential data, these bounds have been formulated using Rademacher complexity \cite{mcdonald2011rademacher}, leveraging concentration inequalities and Bayesian-inspired methods.
We identify a critical open question in the study of sequential data: \textbf{(i) What is the minimal achievable risk for a predictor attempting to model sequential data?} This question differs subtly but importantly from bounding the gap between empirical and true risk. Empirical evidence, such as the plateau in model performance on the \emph{Exchange} dataset from \texttt{GluonTs} \cite{alexandrov2020gluonts} despite recent innovations \cite{rasul2021autoregressive}, suggests that data limitations—rather than model inadequacy—may be the binding constraint.
This leads to a critical second question:
\textbf{(ii) Can we distinguish whether poor performance stems from the model’s limitations or from the inherent unpredictability of the data?}
This distinction hinges on the nature and strength of temporal patterns: fewer patterns imply that even an optimal predictor will perform poorly, while richer structure offers more room for effective forecasting. Thus, good prediction requires both $(1)$ that the data contain exploitable patterns, and $(2)$ that the model can identify and leverage them.
To illustrate this, consider the following stylized scenarios in which a colleague attempts to predict what meal a researcher will bring to lunch each day:
\begin{itemize}
    \item Meals are sampled uniformly at random from a cookbook.
    \item Meals are chosen weekly based on preferences, weather, and prior meals.
    \item Meals follow a fixed rotation established years ago.
\end{itemize}
In the first case, even the optimal predictor cannot outperform random guessing due to the absence of structure. In contrast, the second and third cases involve latent patterns—periodicity, preference dependencies, contextual triggers—that a well-designed model could exploit. Crucially, identifying whether such structure exists—and how much of it is learnable—is a prerequisite for developing effective models.
Concurrently, recent work has pointed out the lack of effective metrics for quantifying evolving patterns in sequential data, proposing the \texttt{evolving rate (EvoRate)} metric as a preliminary solution \cite{zeng2025towards}. Building on this idea, we generalize the notion of \texttt{EvoRate} through the concept of \emph{predictive information}—the mutual information between the past and the future \cite{bialek1999predictive, crutchfield1997statistical}. This leads to the formulation of the \textbf{universal learning curve} \cite{bialek1999predictive}, which measures the rate at which predictive information increases with observation length. This curve serves as a fundamental tool to quantify the minimal achievable risk. Our framework thus provides an information-theoretic perspective that addresses both questions \textbf{(i)} and \textbf{(ii)}, while also offering insights into the nature and strength of the patterns embedded in the data.
Our main contributions are:
\begin{itemize}
\item \textbf{Establishing} a theoretical link between the presence of temporal patterns in sequential data and the minimal achievable prediction risk.
\item \textbf{Deriving} information-theoretic bounds on this minimal risk, expressed in terms of structural properties of the data such as predictive information and pattern complexity.
\item \textbf{Proposing} a practical estimator of the intrinsic risk limit, enabling quantitative comparison with model performance to assess whether learning is limited by data or by the model.
\item \textbf{Validating} the framework on synthetic data, showing how it supports model selection and reveals when further improvements are constrained by the data itself. The codes are available on GitHub: \url{https://github.com/EkMeasurable/Learnability_Ipred}
\end{itemize}

\section{Related work}

\paragraph{Patterns Estimation for Sequential Data}
Early work on measuring predictability in time series includes \texttt{ForeCA} \cite{goerg2013forecastable}, which introduced the concept of forecastability, based on the entropy of the spectral density. This metric quantifies how predictable a time series is from a frequency-domain perspective. More recently, \texttt{EvoRate} \cite{zeng2025towards} proposed a mutual information-based approach to capture evolving patterns in sequential data, and demonstrated its use for guiding regression order and feature selection.
While promising, both \texttt{ForeCA} and \texttt{EvoRate} focus primarily on intrinsic signal properties and lack a principled connection to model performance. In particular, \texttt{EvoRate} emphasizes temporal changes in its value rather than its absolute magnitude, which remains under-exploited. Moreover, these methods do not assess whether a predictive model has effectively captured the patterns present in the data.
In contrast, we introduce an information-theoretic estimator of the minimal achievable risk on a dataset, which serves two key purposes: (i) indicating the presence and strength of temporal patterns—lower risk suggests stronger patterns—and (ii) enabling direct comparison with a model's empirical risk. 
% A small gap between the two implies that the model has effectively exploited the available structure; a large gap suggests potential for improvement. This dual interpretability bridges the gap between pattern quantification and model evaluation.

\paragraph{Learning under General Stochastic Processes.}  
Recent advances in statistical learning theory have increasingly focused on relaxing the classical assumption of independent and identically distributed (i.i.d.) data. In particular, extending learning frameworks to handle stochastic processes with temporal dependencies or evolving distributions has become a central research direction. The \emph{Prospective Learning} framework~\cite{desilva2025prospectivelearninglearningdynamic} formalizes this setting by requiring a learner to produce a sequence of hypotheses that achieve low risk on future observations, given the data observed up to the present—an approach particularly relevant in non-stationary or dynamic environments where the optimal predictor may change over time. Complementary efforts have characterized learnability under general stochastic processes~\cite{dawid2022learnabilitygeneralstochasticprocesses, hanneke2020learninglearningpossibleuniversal}, providing general conditions for when consistent learning is possible, often framed in terms of regret minimization or universal consistency. These perspectives share the common goal of determining whether, and under what conditions, learning remains feasible in complex non-i.i.d. settings. Our work complements these approaches by introducing an intrinsic, information-theoretic notion of learnability. Rather than relying solely on external performance measures such as risk or regret, we quantify learnability through the predictive structure of the data itself—captured by predictive information and the universal learning curve. This allows us to bridge theoretical learnability guarantees with the structural properties of the data-generating process, offering a complementary perspective on when and why generalization is possible in sequential environments.

\paragraph{Mutual Information Estimation (MI)}
Traditional methods, such as the k-nearest neighbor estimator \cite{kraskov2004}, perform well in low-dimensional settings but struggle when applied to long or high-dimensional time series. MINE \cite{belghazi2018} addresses these challenges by leveraging the Donsker–Varadhan representation and deep neural networks to learn flexible estimators of mutual information. However, MINE exhibits high variance and instability, particularly in sequential data settings. Alternative approaches, such as InfoNCE \cite{oord2018}, originally developed for contrastive predictive coding (CPC), are better suited to time series tasks; they learn representations that maximize mutual information between past and future segments. Further advancements include CLUB \cite{cheng2020}, which provides a tractable upper bound on mutual information, and SMILE \cite{song2019}, which improves estimator stability through Jensen–Shannon–based objectives.

\paragraph{Predictive Information and Learning Curve.}
Originally introduced by \cite{bialek1999predictive, bialek2001predictability}, predictive information can be viewed as a generalization of \texttt{EvoRate}. The core idea is to measure the mutual information between a context, $X_{\text{past}}$, and a target, $X_{\text{future}}$. It has been applied in machine learning to learn meaningful data representations \cite{lee2020predictive}, and several methods for estimating it have been proposed \cite{fischer2020conditional}.
Building on this line of work, \cite{bialek1999predictive} and \cite{crutchfield2003regularities} also introduced the notion of the \emph{universal learning curve}, which describes how predictive information grows with the length of the observed context. This curve captures the fundamental limits of learnability by quantifying how much additional information about the future can be extracted from longer histories.
In this paper, we extend these theoretical ideas by establishing a novel link between the universal learning curve and the \emph{minimal achievable risk}. Specifically, we show that the learning curve can be used to derive an empirical estimator of the optimal prediction performance attainable by any model. This connection provides an operational interpretation of predictive information as a diagnostic tool to distinguish between model limitations and the intrinsic unpredictability of the data.

\paragraph{Lowest Possible Error Rate}
In classification tasks, the Bayes error rate represents the lowest achievable error rate for any classifier \cite{tumer1996estimating, chen2023evaluating, tumer2003bayes}. This concept was developed alongside the PAC-Bayes framework, which provides upper bounds on the risk \cite{rivasplata2019pac, langford2001bounds}. In the context of sequential data, prior work has focused primarily on bounding the gap between empirical and true risk, notably through Rademacher complexity \cite{mohri2008rademacher, mcdonald2011rademacher, kuznetsov2015learning}. However, to the best of our knowledge, none of these approaches explicitly connects the minimal achievable risk to the presence of patterns in sequential data.

\section{Preliminary}
\subsection{Notations and hypothesis} 

We consider a stochastic process $\mathbf{X}_{t}^{T} = \{X_u\}_{u=t}^{T} \in \mathcal{X}^{T-t+1}$, indexed over $[t, T]$, with $t \in [0, T]$. We assume:
\[
\mathbf{(H_0)}: \quad \text{The process is stationary and } H(X_t) < \infty,
\]
where $H(\cdot)$ denotes entropy. We do not require $\mathbb{E}(X_t^2) < \infty$, allowing for heavy-tailed distributions, and typically set $\mathcal{X} = \mathbb{R}^d$.

Given a predictor $g: \mathcal{X}^{k} \to \mathcal{X}$ and a loss function $\ell: \mathcal{X} \times \mathcal{X} \to \mathbb{R}$, we define the \textbf{forecasting risk} of order $k$ as:
\begin{equation}
\mathcal{R}^{(k)}(g) = \mathbb{E}_{\mathbf{X}_{t-k+1}^{t+1}} \left[ \ell(X_{t+1}, g(\mathbf{X}_{t-k+1}^{t})) \,\middle|\, \mathbf{X}_{t-k+1}^{t} \right].
\end{equation}

\subsection{Entropy and Entropy Rate}

Understanding the nature of sequential data is closely linked to studying the memory properties of the underlying process. For a stationary process $\mathbf{X}_{t}^{T}$, a fundamental quantity for measuring the information content is its entropy \cite{shannon1948mathematical}, defined as
\begin{equation}
    H(\mathbf{X}_{t}^{T}) = - \int_{S} p(\mathbf{X}_{t}^{T}) \ln p(\mathbf{X}_{t}^{T}) \, d\mathbf{X},
\end{equation}
where $S$ denotes the support of the distribution. Under Assumption $\mathbf{(H_0)}$, the joint density $p(x_t, \dots, x_T)$ is invariant under time translation, so the entropy depends only on the length of the block $[t, T]$. The entropy is concave and subadditive with respect to the block size, and satisfies $H(0) = 0$. Let $l(k) = H(X_{t} \mid \mathbf{X}_{t-k+1}^{t-1}) = -\mathbb{E}_{P(X_{t}, \mathbf{X}_{t-k+1}^{t-1})} \ln P(X_{t} \mid \mathbf{X}_{t-k+1}^{t-1})$ be the \textbf{entropy rate} of order $k$ \cite{cover1999elements}. It measures how well $X_{t}$ can be predicted by observing $k$ past values. Naturally, the more past observations are available, the lower the uncertainty, so the mapping $k \mapsto l(k)$ is non-increasing and non-negative. 
Under Assumption $\mathbf{(H_0)}$, $l(k)$ converges to a constant $l_0$ as $k \to \infty$, and the \textbf{fundamental theorem of entropy} \cite{shannon1948mathematical} states that
\begin{equation}
\frac{1}{k} H(\mathbf{X}_{t-k+1}^{t}) \underset{k\to\infty}{\longrightarrow} l_0,
\end{equation}
that is $H(k) \sim k l_0$ when $k$ is large.

\subsection{$\mathbf{I}_{\text{pred}}$ as a generalization of \texttt{EvoRate}}
In~\citet{zeng2025towards}, \texttt{EvoRate} is proposed as a measure for capturing predictive patterns in sequential data. It is defined as the mutual information between the past $k$ observations, $\mathbf{X}_{t-k+1}^{t}$, and the next observation, $X_{t+1}$, that we note $\mathbf{I}(\mathbf{X}_{t-k+1}^{t}; X_{t+1})$.
We naturally extend \texttt{EvoRate} by introducing a fundamental quantity,
\begin{equation}\label{eq:evo_pred_def}
\mathbf{I}_{\text{pred}}(k,k^\prime) = \mathbf{I}(\mathbf{X}_{t-k+1}^{t}; \mathbf{X}_{t+1}^{t+k^\prime}) = \int p(\mathbf{X}_{t-k+1}^{t+k^\prime}) \ln \frac{p(\mathbf{X}_{t-k+1}^{t+k^\prime})}{p(\mathbf{X}_{t-k+1}^{t})p(\mathbf{X}_{t+1}^{t+k^\prime})} d\mathbf{X}.
\end{equation}
Predictive information, denoted $\mathbf{I}_{\text{pred}}$, quantifies how much information from the past can be used to predict the future. Under assumption $\mathbf{(H_{0})}$, it converges in the limit as both context lengths grow:
\begin{equation}
\lim_{k, k^\prime \to \infty} \mathbf{I}_{\text{pred}}(k, k^\prime) = \mathbf{I}_{\text{pred}}(X_{\text{past}}, X_{\text{future}}),
\end{equation}

a quantity also known in the literature as \emph{excess entropy}~\cite{crutchfield1983symbolic, crutchfield1997statistical, feldman1998discovering} or the \emph{effective measure of complexity}~\cite{grassberger1986toward}. Our focus remains on finite $k$ and $k^\prime$ values to ensure practical relevance.
Under $\mathbf{(H_{0})}$ and suitable regularity conditions, predictive information corresponds to the sub-extensive component of the entropy.

The asymptotic behavior of predictive information provides a principled lens through which to assess the complexity and predictability of stochastic processes~\cite{bialek1999predictive, crutchfield2003regularities}. In machine learning, recent works have leveraged this quantity to improve learned representations~\cite{lee2020predictive, fischer2020conditional}, taking advantage of its ability to capture shared temporal structure across sequences.

\section{Theoretical analysis of $\mathbf{I}_{\text{pred}}$}

Consider the set $\mathcal{F}_k = \left\{ \left( \mathbf{X}_{0}^{t}, X_{t+1} \right) \mapsto -\ln Q\left( X_{t+1} \mid \mathbf{X}_{t-k+1}^{t} \right) \mid Q \in \mathcal{H}_k \right\}$, which defines a class of real-valued functions on $\mathbf{X}$. This class encompasses all possible loss functions that can be derived from making predictions using a model $Q \in \mathcal{H}_k = \left\{ Q_{\theta} \mid \theta \in \Theta \right\}$, where $\mathcal{H}_k$ represents a family of models of order $k$ parameterized by $\theta$—the vector of model parameters that governs the behavior and structure of the prediction model.

\subsection{From a learning perspective}

The $k^{\text{th}}$-order forecasting risk $\mathcal{R}^{k}(Q)$, defined as 
\begin{equation}
\mathcal{L}_{\mathrm{mle}}^{k} = -\mathbb{E}_{P(X_{t+1}, \mathbf{X}_{t-k+1}^t)} \ln Q(X_{t+1} \mid \mathbf{X}_{t-k+1}^t),
\end{equation}
has been widely used to assess autoregressive model performance~\cite{rangapuram2018deep, zeng2025towards}. In a similar spirit, we draw a connection between the predictive information $\mathbf{I}_{\text{pred}}$ and a central concept in information theory: the \emph{universal learning curve} $\Lambda(k) = \ell(k) - \ell_0$~\cite{bialek1999predictive, crutchfield2003regularities}, also known as the \emph{entropy gain}. This quantity measures the reduction in uncertainty about the future obtained by conditioning on $k$ past observations, and thus captures the presence of temporal patterns in sequential data.
A key theoretical connection between the predictive information and the learning curve is established by the following result from~\citet{bialek1999predictive}:
\begin{proposition}[\textit{Bialek and Tishby (1999)} \cite{bialek1999predictive}]\label{prop:Approximation_of_the_Universal_Learning_Curve}
Under hypothesis $\mathbf{(H_0)}$, we have:
\[
\mathbf{I}_{\text{pred}}(k+1,k') - \mathbf{I}_{\text{pred}}(k,k') \longrightarrow \Lambda(k) \quad \text{as} \quad k' \to \infty.
\]
\textit{Proof in Appendix~\ref{prop:learning_curve_approximation}.}
\end{proposition}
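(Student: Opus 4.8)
The plan is to collapse the predictive information into block entropies, telescope the first-argument difference into entropy rates, and then invoke the convergence of the entropy rate supplied by $\mathbf{(H_0)}$.

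First, I would rewrite $\mathbf{I}_{\text{pred}}$ purely in terms of block entropies. Writing $H(n)$ for the entropy of any length-$n$ block (well-defined by translation invariance under $\mathbf{(H_0)}$), the past block $\mathbf{X}_{t-k+1}^{t}$ has length $k$, the future block $\mathbf{X}_{t+1}^{t+k'}$ has length $k'$, and their union is the length-$(k+k')$ block $\mathbf{X}_{t-k+1}^{t+k'}$. The definition of mutual information then gives
\[
\mathbf{I}_{\text{pred}}(k,k') = H(k) + H(k') - H(k+k').
\]
To stay safe in the continuous, possibly heavy-tailed setting permitted by $\mathbf{(H_0)}$, where differential block entropies need not be finite or non-negative, I would equivalently carry the argument through the conditional form $\mathbf{I}_{\text{pred}}(k,k') = H(\mathbf{X}_{t+1}^{t+k'}) - H(\mathbf{X}_{t+1}^{t+k'}\mid \mathbf{X}_{t-k+1}^{t})$, so that only well-defined conditional entropies appear.

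Second, I would form the discrete difference in the first slot. The $H(k')$ term is independent of $k$ and cancels, leaving
\[
\mathbf{I}_{\text{pred}}(k+1,k') - \mathbf{I}_{\text{pred}}(k,k') = \bigl[H(k+1)-H(k)\bigr] - \bigl[H(k+k'+1)-H(k+k')\bigr].
\]
Each bracketed increment is an entropy rate: by the chain rule and stationarity, $H(n+1)-H(n)$ equals the conditional entropy of one symbol given $n$ adjacent past observations, i.e., the order-$n$ entropy rate $\ell(n)$. Hence the difference collapses to the clean expression $\ell(k) - \ell(k+k')$.

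Finally, I would let $k'\to\infty$ with $k$ fixed. Since $\mathbf{(H_0)}$ guarantees $\ell(n)\to\ell_0$, the term $\ell(k+k')\to\ell_0$, and therefore
\[
\mathbf{I}_{\text{pred}}(k+1,k') - \mathbf{I}_{\text{pred}}(k,k') \longrightarrow \ell(k) - \ell_0 = \Lambda(k),
\]
as claimed. I do not anticipate a deep obstacle: the argument is an exact telescoping followed by a one-parameter limit, and for fixed $k$ each term is itself finite (it converges to $H(k)-k\ell_0$). The only genuine care points are (a) ensuring the mutual-information-to-block-entropy decomposition is legitimate in the generality of $\mathbf{(H_0)}$—best guaranteed by phrasing everything through conditional entropies—and (b) fixing the indexing convention so that the increment $H(n+1)-H(n)$ is identified with the entropy rate conditioned on exactly $n$ past values, which is precisely what makes the limit land on $\Lambda(k)$ rather than a shifted $\Lambda(k\pm 1)$.
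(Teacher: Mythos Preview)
Your proposal is correct. The core is the same as the paper's --- express $\mathbf{I}_{\text{pred}}(k,k')=H(k)+H(k')-H(k+k')$ and invoke the entropy-rate limit --- but the packaging differs. The paper first establishes a separate lemma that $\mathbf{I}_{\text{pred}}(m,k')\to H_1(m)$ as $k'\to\infty$, where $H(n)=n\ell_0+H_1(n)$ is the extensive/sub-extensive split, and then subtracts the instances $m=k+1$ and $m=k$ to obtain $H_1(k+1)-H_1(k)=\Lambda(k)$. You instead take the difference \emph{first}, arriving at the exact finite-$k'$ identity $\ell(k)-\ell(k+k')$, and then pass to the limit once. Your route is a touch more elementary: it never introduces the sub-extensive remainder $H_1$ and uses only the convergence $\ell(n)\to\ell_0$ guaranteed by $\mathbf{(H_0)}$; it also sidesteps any worry about whether the individual limits $\mathbf{I}_{\text{pred}}(m,\infty)$ are finite in the continuous setting. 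The paper's two-step route, on the other hand, isolates the standalone fact $\mathbf{I}_{\text{pred}}(k,\infty)=H_1(k)$, which it reuses elsewhere. Your care point~(b) about the indexing convention is well placed: the paper's appendix adopts $\ell(k)=H(k+1)-H(k)$, which is the convention your telescoping needs.
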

The notion of a learning curve is central to our analysis. In \texttt{EvoRate}~\cite{zeng2025towards}, the authors primarily focused on how the metric evolves with the size of the past window, as its absolute values were difficult to interpret. In contrast, the universal learning curve offers a more principled alternative: it can be seen as a discrete derivative of the predictive information \cite{bialek1999predictive}, capturing the marginal contribution of each additional past observation. This perspective not only enhances interpretability but also plays a key role in our theoretical developments and empirical evaluations.

\subsection{Interpreting the Asymptotic Behavior of $\Lambda(k)$}

The asymptotic behavior of $\Lambda(k)$ offers insight into the nature of the temporal patterns present in the data. It helps characterize the structure of the underlying distribution and supports the theoretical link between $\Lambda(k)$ and the presence of predictive patterns. We now present examples that illustrate the relevance of this perspective.

\paragraph{The special case of Markov processes.}

When \( X \) is generated by a Markov process of order \( m \), dependencies are limited to the past \( m \) observations. This structure is naturally reflected in the predictive information \( \mathbf{I}_{\text{pred}} \). Previous approaches have addressed Markov order estimation using conditional mutual information~\cite{papapetrou2013markov}, with \texttt{EvoRate} showing promising empirical results~\cite{zeng2025towards}, though without formal guarantees. Classical criteria like AIC~\cite{katz1981some} and BIC~\cite{csiszar2000consistency} also remain standard tools.

Building on~\citet{crutchfield2003regularities}, we derive a closed-form expression for \( \mathbf{I}_{\text{pred}} \) under the Markov assumption and show that the universal learning curve \( \Lambda(k) \) vanishes for all \( k \geq m \), providing a theoretical link between pattern complexity and memory length.

\begin{proposition}[Predictive information in Markov processes]\label{prop:markov_combined}
Let $\mathbf{X}_{t}^{T}$ be a Markov process of order $m$. If $k' \geq k \geq m$, then:
\begin{align*}
  \text{(i)}\quad & \mathbf{I}_{\text{pred}}(k,k')
  = \mathbb{E}_{\mathbf{X}_{t-m+1}^{t+m}}
  \left[\ln\frac{P\bigl(X_{t+1}^{t+m}\mid\mathbf{X}_{t-m+1}^t\bigr)}
                {P\bigl(X_{t+1}^{t+m}\bigr)}\right], \\[6pt]
  \text{(ii)}\quad & \forall\,k\ge m,\quad \Lambda(k) = 0.
\end{align*}
In particular, for first-order Markov processes, we recover that $\mathbf{I}_{\text{pred}}(k, k') = \texttt{EvoRate}(1)$ for all $k \ge 1$, allowing $\Lambda(k)$ to identify the true Markov order.\\
\textit{Proof in Appendices~\ref{prop:value_of_evoPred_for_markovian_process} and~\ref{proof:markovian_process_universal_learning_curve}.}
\end{proposition}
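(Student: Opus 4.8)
The plan is to prove (i) by reducing both the conditioning (past) block and the target (future) block to length $m$ using two conditional-independence relations, and then to read off (ii) either directly from the entropy rate or as a corollary of (i) via Proposition~\ref{prop:Approximation_of_the_Universal_Learning_Curve}. Throughout I would use the chain rule for mutual information together with the elementary fact that if $A\perp C\mid B$ then $\mathbf{I}(A;(B,C))=\mathbf{I}(A;B)$, since $\mathbf{I}(A;C\mid B)=0$.

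For (i), the two reductions I would establish are the \emph{past reduction}
\[
\mathbf{X}_{t-k+1}^{t-m}\ \perp\ \mathbf{X}_{t+1}^{t+k'}\ \big|\ \mathbf{X}_{t-m+1}^{t},
\]
and the \emph{future reduction}
\[
\mathbf{X}_{t-m+1}^{t}\ \perp\ \mathbf{X}_{t+m+1}^{t+k'}\ \big|\ \mathbf{X}_{t+1}^{t+m}.
\]
Both follow from the order-$m$ factorization $p(x_{1:N})=p(x_{1:m})\prod_{s>m}p(x_s\mid x_{s-m},\dots,x_{s-1})$: reading it forward from time $t+1$ shows that $p(\mathbf{X}_{t+1}^{t+k'}\mid \mathbf{X}_{t-k+1}^{t})$ depends on the conditioning only through its last $m$ coordinates $\mathbf{X}_{t-m+1}^{t}$ (the past reduction), and reading it forward from time $t+m+1$ shows that $p(\mathbf{X}_{t+m+1}^{t+k'}\mid \mathbf{X}_{t-m+1}^{t+m})$ depends on the conditioning only through $\mathbf{X}_{t+1}^{t+m}$ --- the length-$m$ block that is precisely the Markov ``state'' just before time $t+m+1$ (the future reduction). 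Applying the chain-rule identity to the past reduction, with $\mathbf{X}_{t-k+1}^t=(\mathbf{X}_{t-k+1}^{t-m},\mathbf{X}_{t-m+1}^t)$, gives $\mathbf{I}_{\text{pred}}(k,k')=\mathbf{I}(\mathbf{X}_{t-m+1}^{t};\mathbf{X}_{t+1}^{t+k'})$; applying it to the future reduction, with $\mathbf{X}_{t+1}^{t+k'}=(\mathbf{X}_{t+1}^{t+m},\mathbf{X}_{t+m+1}^{t+k'})$, collapses this to $\mathbf{I}(\mathbf{X}_{t-m+1}^{t};\mathbf{X}_{t+1}^{t+m})$. Writing out the definition of this mutual information over the joint law of $\mathbf{X}_{t-m+1}^{t+m}$ yields exactly the claimed formula, and the hypotheses $k'\ge k\ge m$ are precisely what is needed for the two blocks $\mathbf{X}_{t-m+1}^t$ and $\mathbf{X}_{t+1}^{t+m}$ to fit inside the past and future windows.

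For (ii), the cleanest route is the direct one: $\ell(k)$ conditions $X_t$ on its $k$ most recent predecessors, and for an order-$m$ chain this conditional entropy already equals the infinite-past value $\ell_0$ as soon as $k\ge m$, whence $\Lambda(k)=\ell(k)-\ell_0=0$ for all $k\ge m$. Alternatively, (ii) drops out of (i): since the right-hand side of (i) is independent of $k$ for $k\ge m$, the increment $\mathbf{I}_{\text{pred}}(k+1,k')-\mathbf{I}_{\text{pred}}(k,k')$ is identically zero for $k\ge m$ and $k'\ge k+1$, so its $k'\to\infty$ limit --- which equals $\Lambda(k)$ by Proposition~\ref{prop:Approximation_of_the_Universal_Learning_Curve} --- also vanishes. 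The first-order case $m=1$ is then immediate: (i) reads $\mathbf{I}_{\text{pred}}(k,k')=\mathbf{I}(X_t;X_{t+1})=\texttt{EvoRate}(1)$ for every $k\ge 1$, and provided $m$ is the \emph{minimal} order (so that $\Lambda(k)>0$ for $k<m$), the index of the first vanishing $\Lambda(k)$ recovers $m$.

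The main obstacle is the future reduction rather than the past reduction. The past-side step is the familiar statement that the last $m$ observations are a sufficient statistic of the history, but it alone only removes the $k$-dependence and leaves $\mathbf{I}(\mathbf{X}_{t-m+1}^{t};\mathbf{X}_{t+1}^{t+k'})$, which still depends on $k'$. The symmetric collapse of the future block requires recognizing that $\mathbf{X}_{t+1}^{t+m}$ is itself a Markov state that screens the deep future from the past, and verifying the corresponding conditional independence from the forward factorization with the conditioning sets aligned correctly. The remaining work is bookkeeping: keeping the index ranges consistent and invoking the regularity needed for the mutual-information chain rule and for the finiteness of the differential entropies under $\mathbf{(H_0)}$.
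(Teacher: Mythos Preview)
Your proposal is correct. The route differs in presentation from the paper's, though the underlying mechanism is the same. The paper proves (i) by writing the KL-type ratio $P(X_{t+1}^{t+k'}\mid\mathbf X_{t-k+1}^{t})/P(X_{t+1}^{t+k'})$, expanding both numerator and denominator via the one-step chain rule $\prod_j P(X_{t+j}\mid\cdot)$, and then observing that for $j\ge m+1$ the factors coincide (both equal $P(X_{t+j}\mid X_{t+j-m}^{t+j-1})$) and cancel, leaving only the first $m$ factors in each product. You instead package the same cancellations as two conditional-independence statements and invoke the mutual-information chain rule $\mathbf I(A;(B,C))=\mathbf I(A;B)$ when $A\perp C\mid B$. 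Your argument is cleaner and makes the role of the length-$m$ ``state'' explicit on both sides; the paper's is more elementary in that it never leaves densities. For (ii) the paper's proof is exactly your second route (constancy of $\mathbf I_{\text{pred}}$ in $k$ for $k\ge m$ forces $\Lambda(k)=0$); your direct entropy-rate argument is an equally valid, slightly more self-contained alternative.
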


\paragraph{Parametric Processes.}

Let the process $\mathbf{X}_{t-k+1}^{t}$ be generated by a parametric family $Q_{\mathbf{X}_{t-k+1}^{t}}(\theta)$, with unknown parameter $\bar{\theta}$ drawn from prior $\mathcal{P}(\theta)$ and $\dim\Theta = p$. Classical results in the i.i.d.\ case \cite{bialek1999predictive, clarke1990information} show that predictive information grows logarithmically with $k$. We extend this to dependent settings under mild regularity conditions.

\begin{theorem}[Predictive information in parametric models]
\label{thr:evoPredLimitParametrizedProcess}
Assuming standard hypotheses on stationarity, weak dependence, and regularity of the parametric family, (\ref{H:ergodic}, ~\ref{H:ident}, and ~\ref{H:entropy}), let the past and future windows grow with $k\!\to\!\infty$ and
$k'\!\ge\!k$ (the ratio $k'/k$ may vary).
Then
\begin{equation*}\label{eq:evoPredLimitParametrizedProcess}
   \mathbf{I}_{\text{pred}}(k,k') \underset{k\to\infty}{=} \frac{p}{2}\ln(k) + \frac{1}{2} \ln\det(F) + \mathcal{O}(1),
\end{equation*}
where $F$ is the Fisher information matrix associated with the divergence $D_{KL}(Q_{\mathbf{X}_{t-k+1}^{t}}(\theta) \,\|\, Q_{\mathbf{X}_{t-k+1}^{t}}(\bar{\theta}))$.\\
\textit{Proof in Appendix~\ref{thm:PI}.}
\end{theorem}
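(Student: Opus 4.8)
Write-up of the proof plan:

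The plan is to exploit the hierarchical structure imposed by the parametric family: the latent parameter $\theta\sim\mathcal{P}$ is the only channel through which long-range correlation between past and future can arise, whereas any residual dependence at a \emph{fixed} parameter is controlled by the weak-dependence hypothesis \ref{H:ergodic}. Writing $A=\mathbf{X}_{t-k+1}^{t}$ for the past block and $B=\mathbf{X}_{t+1}^{t+k'}$ for the future block, I would expand $\mathbf{I}(A;\theta,B)$ by the chain rule in two ways and equate, obtaining the decomposition
\[
\mathbf{I}(A;B)=\mathbf{I}(A;\theta)+\mathbf{I}(A;B\mid\theta)-\mathbf{I}(A;\theta\mid B).
\]
The strategy is then to show that the first term carries the entire $\tfrac{p}{2}\ln k$ growth, while the two corrections remain $\mathcal{O}(1)$ under the stated assumptions.

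For the leading term I would establish the dependent-process analogue of the Clarke--Barron expansion. Writing $\mathbf{I}(A;\theta)=\mathbb{E}\!\left[\ln Q_{\bar\theta}(A)\right]-\mathbb{E}\!\left[\ln m_k(A)\right]$ with marginal $m_k(A)=\int Q_\theta(A)\,d\mathcal{P}(\theta)$, I would apply a Laplace approximation to $m_k$ around the block-likelihood maximiser $\hat\theta_k$. Hypothesis \ref{H:ergodic} supplies a law of large numbers for the normalised observed information $-\tfrac1k\nabla^2\ln Q_{\theta}(A)\to F$ together with a central limit theorem for the score under mixing; identifiability \ref{H:ident} forces posterior concentration at $\bar\theta$, yielding a Bernstein--von Mises statement $\hat\theta_k-\bar\theta\sim\mathcal{N}\!\left(0,(kF)^{-1}\right)$. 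Substituting the quadratic expansion of the log-likelihood ratio (whose expectation under Bernstein--von Mises is $-\tfrac{p}{2}$) and the Gaussian normalisation $\det(kF)^{-1/2}$ produces the terms $\tfrac{p}{2}\ln k+\tfrac12\ln\det F$, with the prior contribution $-\mathbb{E}\ln\mathcal{P}(\hat\theta_k)$ and the factor $-\tfrac{p}{2}(1+\ln 2\pi)$ absorbed into $\mathcal{O}(1)$; hypothesis \ref{H:entropy} guarantees the integrability and uniform tail control that make the Laplace remainder negligible.

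It remains to bound the two corrections. The term $\mathbf{I}(A;B\mid\theta)$ is precisely the excess entropy of the process at a fixed parameter; under the mixing assumption of \ref{H:ergodic} it is uniformly bounded in $k,k'$, hence $\mathcal{O}(1)$ — this is exactly where the argument departs from the i.i.d.\ case, in which this term vanishes identically. For the last term I would use $\mathbf{I}(A;\theta\mid B)=\mathbf{I}(\mathbf{X}_{t-k+1}^{t+k'};\theta)-\mathbf{I}(B;\theta)$ and apply the leading-order expansion just derived to the full block of length $k+k'$ and to the future block of length $k'$, giving $\tfrac{p}{2}\ln\!\big(1+\tfrac{k}{k'}\big)+\mathcal{O}(1)$. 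The constraint $k'\ge k$ keeps $k/k'$ bounded, rendering this $\mathcal{O}(1)$; this is the structural role of the hypothesis $k'\ge k$. Collecting the three contributions yields the claimed expansion.

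I expect the main obstacle to be the dependent-process Laplace/Bernstein--von Mises step: justifying the ergodic convergence of the observed information to $F$ and the score CLT under only weak dependence, and then controlling the Laplace remainder uniformly, is substantially more delicate than in the i.i.d.\ setting, and it is here that the precise form of \ref{H:ergodic}--\ref{H:entropy} is essential. A secondary subtlety is verifying that the fixed-parameter excess entropy $\mathbf{I}(A;B\mid\theta)$ is genuinely bounded rather than slowly growing, which again reduces to the decay rate of correlations assumed in \ref{H:ergodic}.
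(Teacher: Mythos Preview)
Your argument is sound and shares the same technical core (a dependent-process Laplace/Bernstein--von~Mises expansion) with the paper, but the architecture is genuinely different. The paper never decomposes through the latent parameter via the chain rule; instead it derives a single asymptotic expansion of the \emph{block entropy},
\[
H(n)=n\,\ell_0+\tfrac{p}{2}\ln n+\tfrac12\ln\det F-\tfrac{p}{2}\ln(2\pi)+\ln\mathcal{P}(\bar\theta)+\mathcal{O}(n^{-1}),
\]
by writing $p(\mathbf{X}_1^{n})=Q_{\bar\theta}^{(n)}(\mathbf{X}_1^{n})\int_\Theta e^{L_n(\theta)}\mathcal{P}(\theta)\,d\theta$ and Laplace-expanding the integral, and then simply substitutes three times into the identity $\mathbf{I}_{\mathrm{pred}}(k,k')=H(k)+H(k')-H(k+k')$, letting the extensive $n\ell_0$ parts cancel. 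This route is shorter and never isolates $\mathbf{I}(A;\theta)$ or $\mathbf{I}(A;B\mid\theta)$; the fixed-parameter excess entropy you single out is absorbed implicitly into hypothesis~\ref{H:entropy}. Your decomposition, by contrast, makes the information flow through $\theta$ transparent and pinpoints exactly why the constraint $k'\ge k$ matters (it bounds $\tfrac{p}{2}\ln(1+k/k')$), whereas in the paper this emerges only after the three-term subtraction of logarithms. One caution on your side: $\mathbf{I}(A;B\mid\theta)$ is a \emph{prior}-averaged quantity, so bounding it requires mixing of $Q_\theta$ for $\mathcal{P}$-almost every $\theta$, not merely for the true $\bar\theta$ as~\ref{H:ergodic} literally states; the paper's block-entropy route does not need this uniformity explicitly.
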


This result provides an interpretable asymptotic expansion: the first term captures the uncertainty due to parameter estimation, while the second reflects model confidence via the concentration of likelihood around $\bar{\theta}$. In contrast to \texttt{EvoRate}, whose asymptotic behavior remains unclear, $\mathbf{I}_{\text{pred}}$ exhibits a principled structure grounded in classical learning theory.

\begin{corollary}[Universal Learning Curve Decay]
\label{cor:universal_learning_curve_parametric}
Under the same assumptions, the universal learning curve satisfies:
\[
\Lambda(k) \sim \frac{p}{2k}.
\]
In particular, the dimensionality of the parameter space can be estimated as $\dim \Theta \approx 2k\,\Lambda(k)$.\\
\textit{Proof in Appendix~\ref{thm:PI}.}
\end{corollary}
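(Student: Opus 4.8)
The plan is to derive Corollary~\ref{cor:universal_learning_curve_parametric} directly from the asymptotic expansion in Theorem~\ref{thr:evoPredLimitParametrizedProcess} by combining it with the discrete-derivative characterization of the learning curve given in Proposition~\ref{prop:Approximation_of_the_Universal_Learning_Curve}. The key observation is that $\Lambda(k)$ is essentially the forward difference of $\mathbf{I}_{\text{pred}}(\cdot,k')$ in its first argument as $k'\to\infty$, so the decay rate of $\Lambda(k)$ is controlled by the derivative of the dominant term in the expansion of $\mathbf{I}_{\text{pred}}$.

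First I would write, by Proposition~\ref{prop:Approximation_of_the_Universal_Learning_Curve}, $\Lambda(k) = \lim_{k'\to\infty}\bigl[\mathbf{I}_{\text{pred}}(k+1,k') - \mathbf{I}_{\text{pred}}(k,k')\bigr]$. I would then substitute the expansion from Theorem~\ref{thr:evoPredLimitParametrizedProcess} into both terms. The crucial point is that the constant contributions—the $\tfrac12\ln\det(F)$ term and the $\mathcal{O}(1)$ remainder structure—are the \emph{same} for the $k$ and $k+1$ windows (the Fisher-information term depends on the parametric family and the prior, not on the window length to leading order), so these cancel in the difference. What survives is the difference of the logarithmic terms:
\begin{equation*}
\Lambda(k) \sim \frac{p}{2}\bigl[\ln(k+1) - \ln(k)\bigr] = \frac{p}{2}\ln\!\left(1 + \frac{1}{k}\right).
\end{equation*}
Using the first-order Taylor expansion $\ln(1+1/k) = 1/k + \mathcal{O}(1/k^2)$ then yields $\Lambda(k) \sim \tfrac{p}{2k}$, which is exactly the claimed decay. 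The estimator for $\dim\Theta$ follows immediately by rearranging: $2k\,\Lambda(k) \to p = \dim\Theta$.

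The main obstacle is making the cancellation of the $\mathcal{O}(1)$ terms rigorous. Theorem~\ref{thr:evoPredLimitParametrizedProcess} only asserts that the remainder is $\mathcal{O}(1)$, but a forward difference of two $\mathcal{O}(1)$ quantities is not automatically $o(1/k)$—one needs the \emph{difference} of the remainders to be $o(1/k)$, i.e.\ control on the increments of the subleading term rather than just its boundedness. To handle this cleanly, I would argue that the relevant expansion is in fact of the form $\mathbf{I}_{\text{pred}}(k,k') = \tfrac{p}{2}\ln k + C(F) + r(k)$ with $r(k)\to 0$ and, more importantly, $r(k+1)-r(k) = o(1/k)$, which follows from the smoothness of the Laplace/saddle-point asymptotics underlying the proof of the theorem (the next-order correction to the Laplace expansion of the marginal likelihood is $\mathcal{O}(1/k)$, whose forward difference is $\mathcal{O}(1/k^2)$). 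I would therefore either invoke the stronger form of the expansion established in Appendix~\ref{thm:PI}, or note that an analogous derivative-level argument can be carried out there, so that the increment of the remainder is genuinely negligible at the $1/k$ scale. Once this control is secured, the exchange of the limit in $k'$ with the difference in $k$—also guaranteed by the uniformity in the proof of Proposition~\ref{prop:Approximation_of_the_Universal_Learning_Curve}—completes the argument.
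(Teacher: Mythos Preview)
Your argument is correct and the key computation---taking a forward difference of the $\tfrac{p}{2}\ln k$ term and Taylor-expanding---is exactly what the paper does. The route, however, is slightly less direct than the paper's. In Appendix~\ref{thm:PI} the block-entropy expansion
\[
H(n)=n\,\ell_0+\tfrac{p}{2}\ln n + C + \mathcal O(n^{-1})
\]
is established \emph{first} (as equation~\eqref{eq:H(n)}), and $\Lambda(k)=H(k+1)-H(k)-\ell_0$ is then computed straight from that, yielding $\tfrac{p}{2k}-\tfrac{p}{4k^2}+\mathcal O(k^{-3})$ in one line. You instead pass through $\mathbf I_{\text{pred}}(k,k')$ and Proposition~\ref{prop:Approximation_of_the_Universal_Learning_Curve}, which forces you to manage an additional limit in $k'$ and argue uniformity of the remainder across that limit. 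Both work, but the paper's path sidesteps the $k'$-limit entirely and makes the cancellation of constants automatic (since $C$ is literally the same constant in $H(k+1)$ and $H(k)$). You also correctly flag that the $\mathcal O(1)$ stated in the main-text theorem is too weak for the difference argument; the appendix indeed proves the sharper $\mathcal O(k^{-1})$ remainder (equation~\eqref{eq:PI-asympt}), which is exactly what is needed, so your resolution of that obstacle is on target.
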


This relation reveals a precise connection between the decay of the universal learning curve $\Lambda(k)$ and the intrinsic dimensionality $p$ of the parameter space $\Theta$. The behavior is consistent with classical Bayesian learning theory in the i.i.d.\ case, now extended to structured and dependent sequences. This correspondence indicates that predictive information faithfully reflects the effective degrees of freedom of the generative process.

\paragraph{Beyond Finite Parametric Models.}

When the data-generating process depends on an infinite or unbounded set of latent parameters~\cite{bialek1999predictive, barron1991minimum}, predictive information may vanish or grow sub-logarithmically, indicating insufficient structure for reliable forecasting or a mismatch between model and data complexity. In such cases, structural assumptions or inductive biases may be necessary to enable generalization and avoid overfitting.

\subsection{Link Between Learning Curve and Minimal Achievable Risk}

We now show how the universal learning curve $\Lambda(k)$ provides an upper bound on how close a model of order $k$ can get to the optimal regression risk.
\begin{proposition}
For any $k \in \mathbb{N}$ and any $Q \in \mathcal{H}_k$,
\begin{equation*}
   \mathcal R^{\infty}(Q^{*})
   \;\le\;
   \mathcal R^{k}(Q) - \Lambda(k).
\end{equation*}
where $\mathcal{R}^{\infty}(Q^*)$ is the minimal risk achievable by the optimal predictor $Q^* = P(X_{t+1} \mid \mathbf{X}_{\text{past}})$.\\
\textit{Proof in Appendix~\ref{prop:B1}.}
\end{proposition}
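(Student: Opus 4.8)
The plan is to reduce the statement to Gibbs' inequality (non-negativity of the Kullback–Leibler divergence) together with the identification of the optimal infinite-order risk with the asymptotic entropy rate $\ell_0$. First I would rewrite the order-$k$ risk by adding and subtracting the true conditional log-likelihood:
\begin{equation*}
\mathcal{R}^{k}(Q) = -\mathbb{E}\ln Q(X_{t+1}\mid \mathbf{X}_{t-k+1}^{t}) = \ell(k) + \mathbb{E}_{\mathbf{X}_{t-k+1}^{t}}\!\left[D_{KL}\!\left(P(\cdot\mid \mathbf{X}_{t-k+1}^{t})\,\big\|\,Q(\cdot\mid \mathbf{X}_{t-k+1}^{t})\right)\right],
\end{equation*}
where $\ell(k) = H(X_{t+1}\mid \mathbf{X}_{t-k+1}^{t})$ is the order-$k$ entropy rate. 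Since the divergence term is non-negative, this yields the key inequality $\mathcal{R}^{k}(Q)\ge \ell(k)$ for every $Q\in\mathcal{H}_k$, with equality exactly when $Q$ coincides with the true conditional law $P(X_{t+1}\mid \mathbf{X}_{t-k+1}^{t})$.

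Next I would compute the right-hand benchmark. The optimal predictor $Q^{*}=P(X_{t+1}\mid\mathbf{X}_{\text{past}})$ makes the divergence term vanish, so its risk equals the conditional entropy of $X_{t+1}$ given the entire past. Under $\mathbf{(H_0)}$ the sequence $k\mapsto\ell(k)$ is non-increasing and bounded below, hence converges, and by the fundamental theorem of entropy its limit is $\ell_0$; identifying the infinite-past conditional entropy with this limit gives $\mathcal{R}^{\infty}(Q^{*})=\ell_0$. Combining the two steps and substituting the definition $\Lambda(k)=\ell(k)-\ell_0$ closes the argument:
\begin{equation*}
\mathcal{R}^{k}(Q)-\Lambda(k)\;\ge\;\ell(k)-\bigl(\ell(k)-\ell_0\bigr)\;=\;\ell_0\;=\;\mathcal{R}^{\infty}(Q^{*}),
\end{equation*}
which is the claimed bound.

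I expect the main obstacle to lie in the second step rather than the first: making precise the identification $\mathcal{R}^{\infty}(Q^{*})=\ell_0$ requires that the conditional entropy of $X_{t+1}$ given the infinite past be well-defined and equal to $\lim_k \ell(k)$. This is where stationarity and $H(X_t)<\infty$ from $\mathbf{(H_0)}$ are essential: they guarantee the monotone convergence $\ell(k)\downarrow\ell_0$ and, via a decreasing-conditional-entropy / martingale-convergence argument, that conditioning on the full past attains exactly this limit. I would also keep careful track of the index convention so that the $k$ feeding $\mathcal{R}^{k}$ through the window $\mathbf{X}_{t-k+1}^{t}$ is the same $k$ appearing in $\Lambda(k)$; any minor off-by-one in the definition of $\ell(\cdot)$ would otherwise have to be absorbed using the monotonicity of the entropy rate, which only strengthens the inequality.
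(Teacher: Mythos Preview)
Your proposal is correct and follows essentially the same route as the paper's proof: decompose $\mathcal{R}^{k}(Q)$ into $\ell(k)$ plus a non-negative KL term, identify $\mathcal{R}^{\infty}(Q^{*})=\ell_0$, and rearrange using $\Lambda(k)=\ell(k)-\ell_0$. If anything, your version is more explicit than the paper's, which simply asserts $\mathcal{R}^{k}(Q)\ge\mathcal{R}^{k}(Q^{*})$ and $\Lambda(k)=\mathcal{R}^{k}(Q^{*})-\mathcal{R}^{\infty}(Q^{*})$ without spelling out the Gibbs-inequality justification or the martingale/monotone-convergence argument for the limit.
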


In practice, the optimal achievable risk is given by $\mathcal{R}^{\infty}(Q^{*}) = \lim_{k \to \infty} H(X_{t+1} \mid \mathbf{X}_{t-k+1}^{t})$. This notion parallels the Bayesian risk in classification tasks~\cite{snapp1995estimating}, where the focus lies on the gap between the current and optimal losses, particularly in online learning contexts~\cite{hartline2015no, hsu2019empirical, lu2022no}.

Here, we aim to estimate $\mathcal{R}^{\infty}(Q^{*})$ directly. The previous proposition shows that no model with finite memory $k$ can attain the optimal risk unless $\Lambda(k)$ is negligible. The term $\Lambda(k)$ quantifies the irreducible uncertainty due to limited context and serves as a data-dependent upper bound on the possible reduction in risk achievable by optimizing over $\mathcal{H}_k$.

Since the true risk $\mathcal{R}^{k}(Q)$ is not directly observable, we use its empirical estimate,
\begin{equation}
\hat{\mathcal{R}}^{k}_{n}(Q) = \frac{1}{n} \sum_{i=1}^n -\ln Q\left(X_{i,t+1} \mid \mathbf{X}_{i,t-k+1}^{t}\right).
\end{equation}
To control the deviation between the true and estimated risks, we leverage Rademacher complexity for stationary sequences~\cite{mcdonald2011rademacher}, together with standard concentration inequalities.

\begin{proposition}
Assume $(X_t)_{t\in\mathbb{Z}}$ is a stationary process satisfying the conditions of \cite{mcdonald2011rademacher}. Then, for any $\delta\in(0,1)$, with probability at least $1-\delta$ over an $n$‑sample drawn from the process, every $Q\in\mathcal{H}$ satisfies
\begin{equation*}
    \mathcal{R}^{\infty}(Q^{*})  \leq \hat{\mathcal{R}}^{k}(Q)-\Lambda(k)+2 \widehat{\Re}_n\left(\mathcal{F}_k\right)+3 \frac{\ln (1 / \delta)}{n}
\end{equation*}
where 
\[
    \widehat{\Re}_n(\mathcal{F}_k)
    \;=\;
    \mathbb{E}_{\sigma}\!\Biggl[\sup_{f\in\mathcal{F}_k}
    \frac{1}{n}\sum_{i=1}^n \sigma_i\,f\bigl(\mathbf{X}_{i,0}^t,X_{i,t+1}\bigr)\Biggr]
\]
is the empirical Rademacher complexity of the class $\mathcal{F}_k$ computed on the sample, with $\sigma_1,\dots,\sigma_n$ i.i.d.\ Rademacher variables.\\
\textit{Proof in Appendix~\ref{prop:B1}.}
\end{proposition}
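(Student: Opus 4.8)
The plan is to chain the deterministic inequality of the preceding proposition with a uniform generalization bound, so that the only genuinely new work is controlling the gap between the true order-$k$ risk $\mathcal{R}^{k}(Q)$ and its empirical counterpart $\hat{\mathcal{R}}^{k}_{n}(Q)$ uniformly over the model class. Concretely, the previous proposition gives, deterministically and for every $Q\in\mathcal{H}_k$,
\[
\mathcal{R}^{\infty}(Q^{*}) \le \mathcal{R}^{k}(Q) - \Lambda(k),
\]
so it suffices to prove that, with probability at least $1-\delta$, every $Q\in\mathcal{H}_k$ satisfies $\mathcal{R}^{k}(Q) \le \hat{\mathcal{R}}^{k}_{n}(Q) + 2\widehat{\Re}_n(\mathcal{F}_k) + 3\ln(1/\delta)/n$; substituting this estimate into the displayed inequality yields the claim.

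First I would identify each order-$k$ predictor $Q$ with its associated log-loss $f_Q\in\mathcal{F}_k$, where $f_Q(\mathbf{X}_{0}^{t},X_{t+1}) = -\ln Q(X_{t+1}\mid \mathbf{X}_{t-k+1}^{t})$, so that $\mathcal{R}^{k}(Q)=\mathbb{E}[f_Q]$ and $\hat{\mathcal{R}}^{k}_{n}(Q)=\frac1n\sum_{i=1}^n f_Q(Z_i)$ with $Z_i=(\mathbf{X}_{i,0}^{t},X_{i,t+1})$. Bounding the supremum of $\mathbb{E}[f]-\frac1n\sum_{i} f(Z_i)$ over $f\in\mathcal{F}_k$ is then exactly a uniform-convergence problem, which I would handle by the standard two-step route: (a) a concentration step showing that this uniform deviation does not stray far from its expectation, and (b) a symmetrization step bounding that expectation by twice the Rademacher complexity of $\mathcal{F}_k$.

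The crucial difficulty, and the reason the \emph{i.i.d.} textbook argument does not apply verbatim, is that the sample $Z_1,\dots,Z_n$ is drawn from a \emph{dependent} stationary process, so neither McDiarmid's bounded-difference inequality nor the usual symmetrization identity is directly available. This is precisely where the hypotheses of \cite{mcdonald2011rademacher} enter: under their stationarity and mixing conditions one obtains both a concentration inequality for the uniform deviation (yielding the residual term of order $3\ln(1/\delta)/n$) and a symmetrization inequality adapted to sequential data (yielding the factor $2\widehat{\Re}_n(\mathcal{F}_k)$ after replacing the expected Rademacher complexity by its empirical value, at the cost of absorbing constants into the same residual term). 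I would invoke their main theorem as a black box for these two inequalities rather than re-derive the blocking and decoupling estimates.

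The remaining technical point to verify is that the log-loss class $\mathcal{F}_k$ meets the regularity required by \cite{mcdonald2011rademacher}, and I expect this to be the main obstacle. Since $\mathbf{(H_0)}$ guarantees only $H(X_t)<\infty$ rather than a bounded loss, the functions $f_Q=-\ln Q(\cdot\mid\cdot)$ may be unbounded under heavy tails, so the bounded-difference machinery does not apply out of the box. I would resolve this by assuming, consistently with the stated ``conditions of \cite{mcdonald2011rademacher}'', that the losses admit a uniformly controlled tail — for instance a sub-exponential envelope, or an a priori truncation of $Q$ away from $0$ — which suffices to make the concentration and symmetrization steps applicable while leaving the stated bound unchanged. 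Combining the concentration bound of step (a), the symmetrization bound of step (b), and the deterministic inequality of the preceding proposition then produces the result with the constants $2$ and $3$ exactly as stated.
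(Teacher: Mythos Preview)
Your proposal is correct and follows essentially the same route as the paper: the paper's proof also splits into the deterministic step $\mathcal{R}^{\infty}(Q^{*}) \le \mathcal{R}^{k}(Q) - \Lambda(k)$ and then invokes Theorem~2 of \cite{mcdonald2011rademacher} as a black box to obtain $\mathcal{R}^{k}(Q) \le \hat{\mathcal{R}}^{k}(Q) + 2\widehat{\Re}_n(\mathcal{F}_k) + 3\ln(1/\delta)/n$ uniformly over $Q$, combining the two exactly as you describe. Your worry about the boundedness of the log-loss class is well placed --- the paper handles it only implicitly by calling the cited result a ``bounded-loss Rademacher bound,'' so the assumption you propose to add is precisely what is being tacitly used.
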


This proposition provides a formal bound on the best achievable risk. For simplicity, we assume $\hat{\mathcal{R}}^{k}_{n}(Q) = \hat{\mathcal{R}}^{k}(Q)$ throughout the rest of the paper to reduce notational complexity. Pre-trained models on various benchmark datasets are widely available in the literature. We argue that such models can be leveraged to estimate both the optimal regression order and the best achievable risk on the dataset they were trained on.

\begin{corollary}\label{prop:oracle}
Suppose we have access to a trained model $Q_k \in \mathcal{H}_k$ for each regression order $k = 1, \dots, M$. Then:
\[
  \begin{array}{r@{\quad}c}
    (\mathrm{i}) & \displaystyle\hat{\mathcal{R}}^{\infty}(Q^{*})
      = \min_{1\le k\le M}\{\hat{\mathcal{R}}^{k}(Q_k) - \Lambda(k)\},\\[6pt]
    (\mathrm{ii}) & \displaystyle k^*
      \leq \arg\min_{1\le k\le M}\{\hat{\mathcal{R}}^{k}(Q_k) - \Lambda(k)\}.
  \end{array}
\]
$(\mathrm{i})$ provides a way to estimate the minimal achievable risk. $(\mathrm{ii})$ allows one to infer the optimal regression order from this same model collection.\\
\textit{Proof in Appendix~\ref{cor:B2}.}
\end{corollary}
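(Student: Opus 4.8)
The plan is to obtain both claims by instantiating the preceding (Rademacher) risk bound at each order $k\in\{1,\dots,M\}$ and then optimizing over $k$. Fix $\delta\in(0,1)$ and apply that proposition to each of the $M$ function classes $\mathcal{F}_1,\dots,\mathcal{F}_M$ with confidence parameter $\delta/M$; a union bound then guarantees that, with probability at least $1-\delta$, the inequality
\[
\mathcal{R}^{\infty}(Q^{*}) \;\le\; \hat{\mathcal{R}}^{k}(Q_k)-\Lambda(k)+2\widehat{\Re}_n(\mathcal{F}_k)+3\tfrac{\ln(M/\delta)}{n}
\]
holds simultaneously for all $k$ and all $Q_k\in\mathcal{H}_k$. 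Writing $B(k):=\hat{\mathcal{R}}^{k}(Q_k)-\Lambda(k)$ and adopting the paper's convention of suppressing the complexity terms, each $B(k)$ is a valid upper bound on the target $\mathcal{R}^{\infty}(Q^{*})$.

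For part (i) the argument is then immediate: since every $B(k)$ bounds the common quantity $\mathcal{R}^{\infty}(Q^{*})$ from above, so does $\min_{1\le k\le M}B(k)$, and this minimum is by construction the tightest estimate the model collection provides. Declaring $\hat{\mathcal{R}}^{\infty}(Q^{*}):=\min_{1\le k\le M}B(k)$ therefore yields a valid, and optimally tight, empirical upper bound on the minimal achievable risk.

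For part (ii) I would first fix the meaning of $k^{*}$ as the minimal sufficient order, i.e. the smallest $k$ such that $\Lambda(j)=0$ for every $j\ge k$; by Proposition~\ref{prop:markov_combined} this is exactly the memory length $m$ when the source is Markov of order $m$, and more generally it is the horizon beyond which no additional predictive structure remains. The goal is to show that the empirical minimizer $\hat{k}:=\arg\min_{1\le k\le M}B(k)$ cannot undershoot $k^{*}$. The key structural fact is that $\min_k B(k)=\mathcal{R}^{\infty}(Q^{*})$ is attained on the plateau $k\ge k^{*}$, where $\Lambda(k)=0$ and a well-trained $Q_k$ already realizes the optimal risk $\hat{\mathcal{R}}^{k}(Q_k)=\mathcal{R}^{\infty}(Q^{*})$. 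For $k<k^{*}$, the strict positivity $\Lambda(k)>0$ certifies that exploitable structure remains at longer horizons, so that even the best order-$k$ model is forced above the plateau; this keeps $B(k)$ strictly larger than $\min_j B(j)$ and rules out any minimizer below $k^{*}$, giving $k^{*}\le\hat{k}$.

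The main obstacle lies precisely in making this last step rigorous. At the population level the separation $B(k)>\min_j B(j)$ for $k<k^{*}$ must be argued carefully, because subtracting $\Lambda(k)$ could in principle over-compensate for the excess risk $\hat{\mathcal{R}}^{k}(Q_k)-\mathcal{R}^{\infty}(Q^{*})$ incurred by truncating memory; one must verify that the learning-curve term never subtracts more than this excess, so that $B(k)$ stays at or above the plateau value. At the finite-sample level the same separation must dominate the fluctuations in $\hat{\mathcal{R}}^{k}$ and in the plug-in estimate of $\Lambda(k)$: concretely, the last nonzero learning-curve increment at $k^{*}-1$ must exceed the suppressed $2\widehat{\Re}_n(\mathcal{F}_{k^{*}-1})+3\ln(M/\delta)/n$ terms. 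The inequality in (ii) should thus be read as the high-signal statement that holds once this gap condition is met, which is the precise hypothesis under which the population ordering transfers to the empirical minimizer.
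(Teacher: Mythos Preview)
Your approach matches the paper's almost exactly. For part (i) both you and the paper apply the Rademacher bound of the preceding proposition to each order $k$, drop the non-negative complexity terms, and take the minimum over $k$; you are simply more explicit about the union bound, while the paper writes ``under the high-probability event of Proposition~\ref{prop:B1}, inequality~\eqref{eq:B1-main} is valid for each $k=1,\dots,M$.'' For part (ii) the paper defines $k^{*}:=\inf\{k:\Lambda(k)=0\}$ (equivalent to your definition by monotonicity of $\Lambda$) and argues in a single sentence: if the minimizer $k^{\dagger}$ satisfied $k^{\dagger}<k^{*}$ then $\Lambda(k^{\dagger})>\Lambda(k^{*})=0$, and ``monotonicity of $\Lambda$ would contradict optimality of $k^{\dagger}$.''

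Your proposal is actually more careful than the paper on (ii): the paper's one-line contradiction does not spell out why monotonicity of $\Lambda$ alone forces $B(k^{\dagger})>B(k^{*})$, and the concerns you raise---that one needs the plateau value to be attained (well-trained $Q_k$ for $k\ge k^{*}$) and that finite-sample fluctuations must not swamp the last nonzero increment $\Lambda(k^{*}-1)$---are genuine and not addressed in the paper's appendix. One small simplification you can make: your worry that ``subtracting $\Lambda(k)$ could over-compensate'' at the population level is in fact already excluded by the excess-risk identity in the proof of Proposition~\ref{prop:B1}, which gives $B(k)\ge\mathcal{R}^{\infty}(Q^{*})$ for every $k$; so $B(k)$ never dips below $\ell_0$. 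The remaining issue---that a poorly trained $Q_{k^{*}}$ could make $B(k^{*})$ exceed $B(k)$ for some $k<k^{*}$ with a well-trained $Q_k$---is the real gap, and neither you nor the paper closes it without the implicit assumption that the models $Q_k$ are near-optimal within each $\mathcal{H}_k$.
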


If model performance plateaus beyond a certain context length \( k \)—that is, if the loss ceases to improve while \( \Lambda(k) \) remains strictly positive—then the oracle reveals a lower achievable loss than what the model currently attains. This discrepancy indicates the presence of residual predictive structure that the model fails to capture, thereby offering a principled target for improvement. Oracle-based analysis thus provides a framework for assessing whether a model fully exploits the predictive information present in the data. Unlike \texttt{EvoRate}~\cite{zeng2025towards} or the forecastability measure~\cite{goerg2013forecastable}, our estimator is model-dependent, as it relies on the hypothesis class \( \mathcal{H}_k \). While this introduces sensitivity to the model architecture, it also enables direct comparisons between the estimated oracle risk and the empirical performance of candidate models, offering a more actionable diagnostic.

Finally, under a Mean Squared Error (MSE) loss and assuming the predictive distribution is multivariate Gaussian, it holds that $\mathcal{L}_{\mathrm{mle}} = \mathcal{L}_{\mathrm{mse}} + \text{const}$~\cite{zeng2025towards}. This equivalence ensures that our framework remains applicable in standard Gaussian regression settings, thereby enhancing its practical utility.

\section{Experiment}
\label{sec:Experiment}

Throughout this section, we use empirical estimators of the predictive information $\mathbf{I}_{\text{pred}}$, following the procedure outlined in Appendix~\ref{appendix:partB}.
For clarity, we will refer to our estimate simply as $\mathbf{I}_{\text{pred}}$ in the remainder of this section. In the first part, we explain the estimation procedure for $\mathbf{I}_{\text{pred}}$, and then use it to derive the estimated learning curve $\hat{\Lambda}$ and the optimal risk $\hat{\mathcal{R}}^{\infty}(Q^*)$.

\subsection{Estimation of $\mathbf{I}_{\text{pred}}$ on a Gaussian process}

We consider the process \( \{X_t\}_{t-k+1}^{t+k'} \), a \( d \)-dimensional Gaussian process with independent and identically distributed (i.i.d.) components. In this setting, it can be shown that
\begin{equation}
\label{eq:gaussien_process_evoRate_formula}
\mathbf{I}_{\text{pred}}(k,k') = \mathbf{I}(\mathbf{X}_{t-k+1}^{t}; \mathbf{X}_{t+1}^{t+k'}) = \frac{d}{2} \ln \left( \frac{|\Sigma_1^{(1)}||\Sigma_2^{(1)}|}{|\Sigma^{(1)}|} \right)
\end{equation}
where \( \Sigma_1 \) denotes the covariance matrix of \( \mathbf{X}_{t-k+1}^{t} \), \( \Sigma_2 \) the covariance of \( \mathbf{X}_{t+1}^{t+k'} \), and \( \Sigma \) the joint covariance matrix of the full sequence \( \{X_t\}_{t-k+1}^{t+k'} \).
\footnote{The superscript $(1)$ indicates that, due to the i.i.d. nature of the dimensions, the computation can be performed on the first dimension only and scaled by $d$.}

Figure~\ref{fig:estimation_evoPred} illustrates the estimation results of $\mathbf{I}_{\text{pred}}(k, k^\prime)$ across various input dimensions and kernel types detailed in Appendix~\ref{sub_section:synthetic_data_kernel}. For this experiment, we fixed $k=5$ and $k'=10$. $\mathbf{I}_{\text{pred}}$--\texttt{True} denotes the theoretical value computed from Equation~\ref{eq:gaussien_process_evoRate_formula}. Additional combinations of $(k, k')$ and their corresponding estimation results are provided in Appendix~\ref{sub_section:additionnal_combinations}.\\

\begin{figure}[H]
    \centering
    \includegraphics[width=1\linewidth]{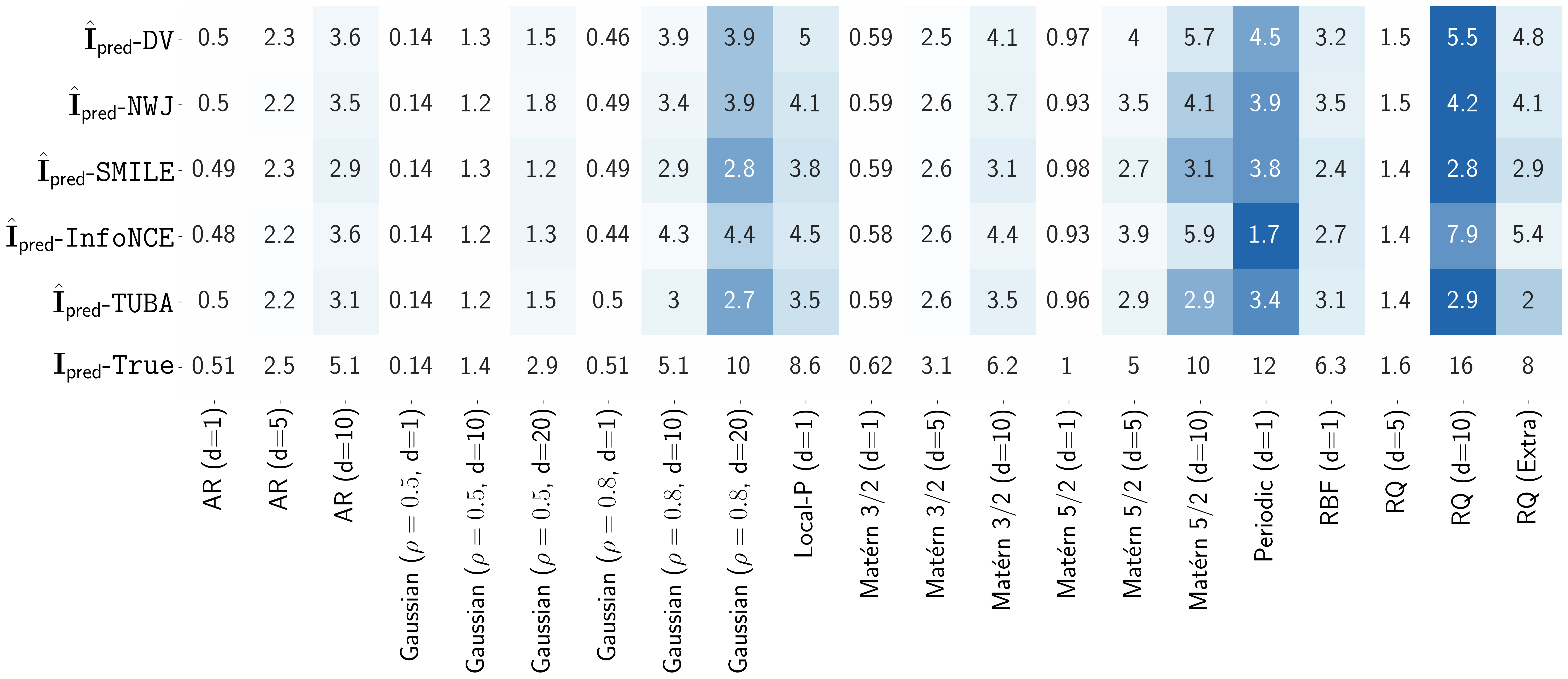} 
    \caption{
        Estimation of $\mathbf{I}_{\text{pred}}(k, k')$ using various neural-based methods. 
        Color encodes the estimation bias: blue regions indicate negative bias (underestimation), 
        while the intensity reflects the magnitude of this bias.
    }
    \label{fig:estimation_evoPred}
\end{figure}

While all methods provide relatively accurate estimates in low-dimensional settings (dimensionality $d \leq 20$), their performance deteriorates as the input space becomes more complex. Notably, methods like $\hat{\mathbf{I}}_{\text{pred}}$\text{-}\texttt{SMILE} and $\hat{\mathbf{I}}_{\text{pred}}$\text{-}\texttt{NWJ} exhibited consistent underestimation, in structured, high-dimensional regimes (e.g., Periodic and RQ kernels). Although these estimators inherently introduce variance and may lead to some instability in the results, refining the estimation of $\mathbf{I}_{\text{pred}}$ lies beyond the scope of this work and would warrant a dedicated investigation.
\subsection{Autoregressive Process}

We demonstrate the relevance of using predictive information $\mathbf{I}_{\text{pred}}$ to approximate the universal learning curve $\Lambda(k)$, as stated in Proposition~\ref{prop:Approximation_of_the_Universal_Learning_Curve}. This experiment also serves to validate the effectiveness of $\Lambda(k)$ in identifying the true memory length of a Markovian process, in accordance with Proposition~\ref{prop:markov_combined}. To this end, we simulate a stationary vector autoregressive process $\{X_t\}_{t=0}^{N-1} \subset \mathbb{R}^3$ of order $p \in \{5, 10\}$. The initial states $X_0, \dots, X_{p-1}$ are drawn independently from a standard multivariate normal distribution $\mathcal{N}(0, I_3)$. For $t \geq p$, the process evolves according to:
\begin{equation}
    X_t = \frac{\rho}{p} \sum_{j=t-p}^{t-1} X_j + \sqrt{1 - \rho^2} \, \epsilon_t,
    \label{eq:AR6}
\end{equation}
where $\epsilon_t \sim \mathcal{N}(0, I_3)$ and the parameter $\rho \in (0,1)$ controls the strength of temporal dependence.

Figure~\ref{fig:autoregressive_process_order_estimation} compares the estimated learning curve $\hat{\Lambda}(k)$—obtained via the estimator of predictive information from Proposition~\ref{prop:Approximation_of_the_Universal_Learning_Curve}—with a reference curve $\tilde{\Lambda}(k)$ computed from the known data-generating distribution. Although this theoretical curve is not available in real-world scenarios, it serves here as a useful benchmark. Since the full distribution is known, we can accurately approximate the entropy and thereby the true learning curve (see Appendix~\ref{sub_section:learning_curve} for derivation details).

The results confirm that $\hat{\Lambda}(k)$ closely tracks the theoretical curve and successfully identifies the correct model order $k = p$. This supports both the statistical consistency of the estimator and the practical usefulness of the learning curve for model selection. Despite minor fluctuations due to estimation variance, the method exhibits a sharp transition at the correct order $k = p$, underscoring its robustness and precision in capturing the underlying temporal structure.

\begin{figure}[H]
    \centering
    \includegraphics[width=0.8\textwidth]{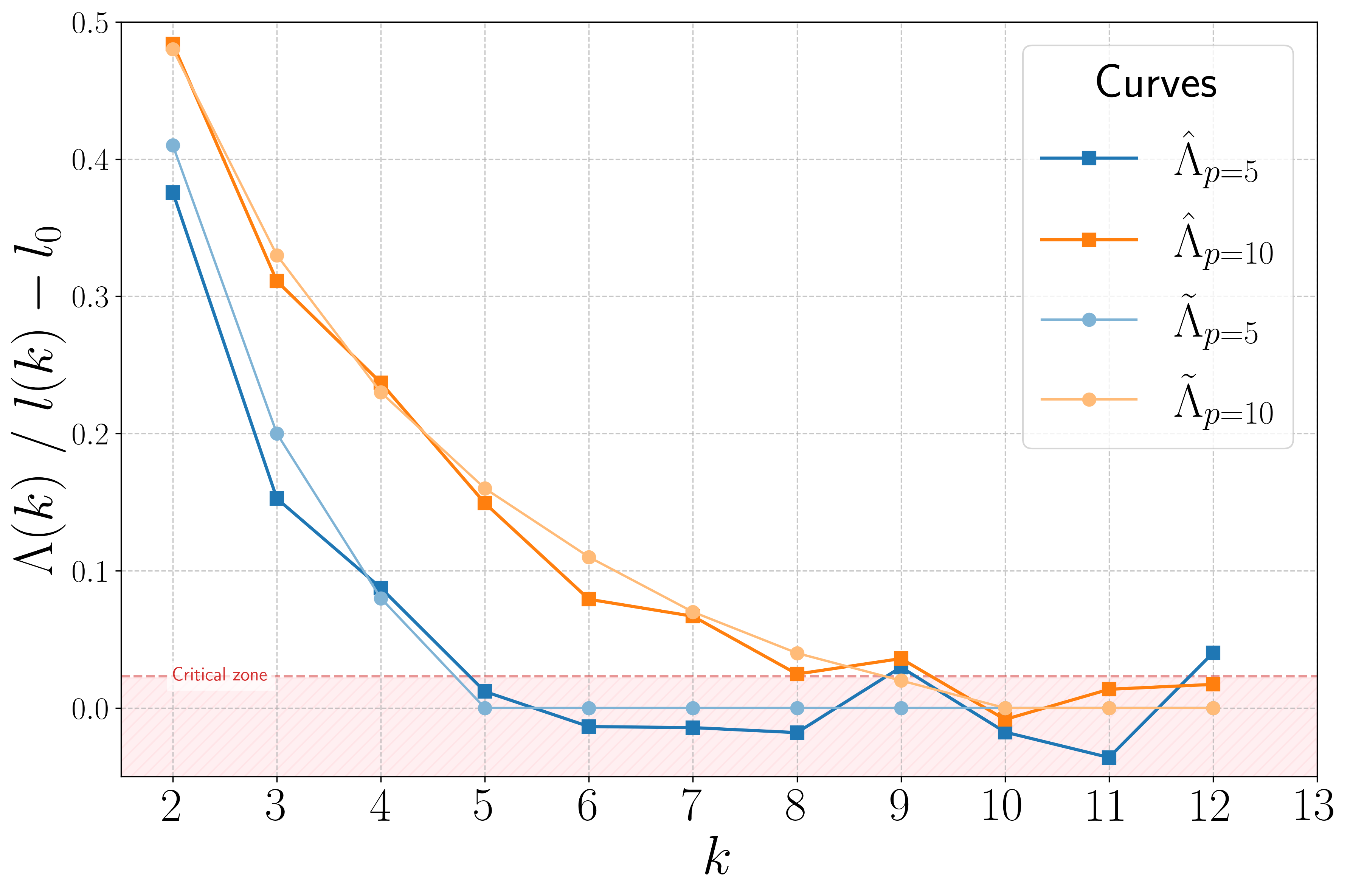}
    \caption{Learning curves $\Lambda(k)$ for AR processes with orders $p=5$ and $p=10$.}
    \label{fig:autoregressive_process_order_estimation}
\end{figure}

\begin{remark}
The ``Critical zone'' designates the range where the universal learning curve $\Lambda(k)$ falls below a small threshold (here set at $0.02$). Because estimation does not yield exact zeros due to inherent variance, it is difficult to pinpoint the exact context length at which the curve vanishes. Introducing such a cutoff therefore provides a practical criterion: values of $\Lambda(k)$ below this level are considered effectively zero. While the threshold is somewhat arbitrary, it is chosen small relative to the typical magnitude of $\Lambda(k)$, yet large enough to remain robust to estimation noise.
\end{remark}

\subsection{Estimating $\mathcal{R}^{\infty}(Q^*)$ for Ising Spin Sequences}
\label{sec:ising}
Let $\mathbf{X}_t^T = \{X_u\}_{u=t}^T \in \mathcal{X}^{T - t + 1}$ denote a sequence of binary variables where $\mathcal{X} = \{-1, +1\}$. We generate a sequence of length $T = 10{,}000{,}000$ starting with $X_0 \sim \text{Uniform}\{-1, +1\}$. The sequence is divided into blocks of size $M$. For each block, we sample $J \sim \mathcal{N}(0, 1)$ and evolve the sequence according to:
\begin{equation}
\label{eq:evolution_of_ising_chain}
P(X_i = +1 \mid X_{i-1}, J) = \frac{\exp(J X_{i-1})}{\exp(J X_{i-1}) + \exp(-J X_{i-1})}.
\end{equation}
This yields a piecewise-stationary Markov chain resembling a blockwise-random Ising process. We train both an MLP and an LSTM to predict $X_{t+1}$ from the $k$ past observations. Since an efficient predictor only requires a single parameter (logistic regression suffices), we fix $\dim\Theta = 1$, and use cross-entropy as the loss. For each block size \( M \in \{10{,}000,\ 100{,}000,\ 1{,}000{,}000, 10{,}000{,}000\} \), we evaluate, for \( 1 \leq k \leq n \) (with \( n = 19 \)), \texttt{EvoRate}(k) , along with our model-dependent estimates of the minimal achievable risk and the minimal loss empirically attained by LSTM and MLP models.

% \begin{table}[ht]
%     \centering
%     \begin{tabular}{c c c c c c}
%         \toprule
%          $M$ & $\texttt{EvoRate}(10)$ & $\hat{\mathcal{R}}^{\infty}_{\text{lstm}}(Q^*)$ & $\hat{\mathcal{R}}^{\infty}_{\text{mlp}}(Q^*)$ & $\min\limits_{1\leq k\leq n} 
%          \hat{\mathcal{R}}^{k}_{\text{lstm}}(Q)$ & $\min\limits_{1\leq k\leq n} \hat{\mathcal{R}}^{k}_{\text{mlp}}(Q)$ \\
%         \midrule
%         10{,}000 & 0.2758 & 0.3267& 0.3262& 0.4853 & 0.4872 \\
%         100{,}000 & 0.2861 & 0.3292 & 0.3329 & 0.4679 & 0.4668 \\
%         1{,}000{,}000  & 0.3269 & 0.3044 & 0.2715 & 0.3798 & 0.3660 \\
%         10{,}000{,}000  & 0.4760 & 0.0100 & 0.0860 & 0.0133 & 0.0903 \\
%         \bottomrule
%     \end{tabular}
%     \caption{Estimated minimal achievable risk and estimated minimal reached risk versus \texttt{EvoRate}.}
%     \label{tab:loss_comparison}
% \end{table}

\begin{table}[ht]
    \centering
    \begin{tabular}{c c c c c c}
        \toprule
         $M$ & $\texttt{EvoRate}(10)$ & $\hat{\mathcal{R}}^{\infty}_{\text{lstm}}(Q^*)$ & $\hat{\mathcal{R}}^{\infty}_{\text{mlp}}(Q^*)$ & $\min\limits_{1\leq k\leq n} 
         \hat{\mathcal{R}}^{k}_{\text{lstm}}(Q)$ & $\min\limits_{1\leq k\leq n} \hat{\mathcal{R}}^{k}_{\text{mlp}}(Q)$ \\
        \midrule
        10{,}000 & 0.2758 & 0.3724& 0.3719& 0.4853 & 0.4872 \\
        100{,}000 & 0.2861 & 0.3684 & 0.3720 & 0.4679 & 0.4668 \\
        1{,}000{,}000  & 0.3269 & 0.3569 & 0.3390 & 0.3798 & 0.3660 \\
        10{,}000{,}000  & 0.4760 & 0.0697 & 0.0867 & 0.0733 & 0.0903 \\
        \bottomrule
    \end{tabular}
    \caption{Estimated minimal achievable risk and estimated minimal reached risk versus \texttt{EvoRate}.}
    \label{tab:loss_comparison}
\end{table}
% \textit{See Appendix~\ref{sec:partC} for further details, statistical significance and results.}

As $M$ increases, the data becomes less complex: for $M = 10{,}000{,}000$, the coupling $J$ remains fixed, and the process reduces to a first-order Markov chain. Consequently, \texttt{EvoRate} increases, reflecting stronger underlying structure, and models achieve lower prediction losses. However, while \texttt{EvoRate} signals learnability, it does not provide a quantitative target, unlike our estimator $\hat{\mathcal{R}}^{\infty}(Q^*)$. Despite being model-dependent, the oracle estimates $\hat{\mathcal{R}}^{\infty}(Q^*)$ remain consistent across LSTM and MLP, with only minor differences relative to their scale. It also aligns qualitatively with \texttt{EvoRate}: lower estimated risk corresponds to more predictable sequences. Importantly, it enables direct comparison with actual model performance. For example, when $M = 10{,}000$, the ratio $\hat{\mathcal{R}}^{k}(Q) / \hat{\mathcal{R}}^{\infty}(Q^*) \approx 1.3$ indicates suboptimal predictions, likely due to high non-stationarity. As $M$ grows, this ratio approaches $1$, confirming improved model adequacy.
Lastly, negative $\Lambda(k)$ estimates for $M = 10{,}000{,}000$ stem from instability (\textit{see Table \ref{tab:universal_learning_4}}) in $\hat{\Lambda}(k)$ when $k \gg p$ for a true order-$p$ Markov process (\textit{see Figure~\ref{fig:autoregressive_process_order_estimation}}). Refining this estimator is necessary to avoid misinterpretation in low-complexity settings.

\subsection*{Additional Remarks on Result Interpretability}
We estimate the parameter dimension \( \dim(\Theta) \) using Corollary \ref{cor:universal_learning_curve_parametric}, with the true value being \( \mathbf{1} \). For \( k = 10 \), the estimator yields \( \hat{p} = 2 \times k \times \hat{\Lambda}(k) = 0.9580 \) when \( M = 10{,}000 \). Comparable values are obtained for other sample sizes \( M \), indicating the consistency of the procedure in the parametric regime. In contrast, when \( M = 10{,}000{,}000 \), which corresponds to the chain length, the process effectively becomes Markovian and thus departs from the parametric setting. As theoretically expected, the estimator then approaches zero; for example, at \( k = 10 \) we obtain \( \hat{p} = 0.0720 \).

We can also evaluate the optimal regression orders \( k^* \) for each model, which remain consistent across both, evolving together and staying within the same order of magnitude as \( M \) increases.

\begin{table}[ht]
    \centering
    \begin{tabular}{c c c}
        \toprule
        \( M \) & \( k^*_{\text{LSTM}} \) & \( k^*_{\text{MLP}} \) \\
        \midrule
        10{,}000     & 1 & 1 \\
        100{,}000    & 1 & 1 \\
        1{,}000{,}000  & 18 & 16 \\
        10{,}000{,}000 & 10 & 9 \\
        \bottomrule
    \end{tabular}
    \caption{Estimated optimal regression orders.}
    \label{tab:optimal_regression_orders}
\end{table}

\section{Conclusion}

% This paper introduces a unified framework that addresses two fundamental questions in sequential learning: \textbf{(i)} what is the minimal achievable risk for a predictor modeling sequential data, and \textbf{(ii)} can poor performance be attributed to model limitations or to the inherent unpredictability of the data? By generalizing the \texttt{EvoRate} metric through predictive information, we establish theoretical guarantees showing that temporal patterns fundamentally constrain learnability in sequential models.
% Our analysis provides explicit characterizations of the information-theoretic learning curve in different regimes. For Markov processes, we derive closed-form relationships between the learning curve and the Markov order. In parametric settings, we show that the learning curve decays proportionally to the inverse of the context length, with a coefficient determined by the dimensionality of the parameter space.
% Through experiments on synthetic datasets, we validated our framework for model selection and structure discovery. It accurately estimates the optimal achievable risk and distinguishes whether prediction performance is limited by model capacity or by the intrinsic complexity of the data.

\label{sec:conclusion}

This work addresses two fundamental questions in sequential modeling: 
\textbf{(i)} what is the minimal achievable risk when modeling sequential data, and 
\textbf{(ii)} is poor predictive performance due to model limitations or to the intrinsic unpredictability of the data?

To answer these questions, we introduced a unified information-theoretic framework centered around the learning curve 
$\Lambda(k)$, which quantifies the gain in predictive accuracy when extending the context from $k$ to $k+1$. 
This quantity provides a principled way to connect statistical dependencies in the data to achievable predictive performance. Building on this foundation, we proposed a practical estimator of the minimal achievable risk, $\hat{\mathcal{R}}^{\infty}(Q^{*})$ which bounds from below the performance of any predictor operating on the same data. This estimator enables a direct diagnostic test for model adequacy: if the empirical risk $\hat{\mathcal{R}}^{k}(Q)$ approaches $\hat{\mathcal{R}}^{\infty}(Q^{*})$, then the observed performance is close to the intrinsic unpredictability of the process—suggesting that increasing model capacity or context length is unlikely to yield further improvements. 
Conversely, a significant gap between these two quantities reveals that the model underfits the data, pointing to unexploited temporal dependencies.

Through theoretical analysis, we demonstrated that $\Lambda(k)$ admits explicit asymptotic forms in both parametric and Markov regimes, linking the decay of the learning curve to intrinsic properties such as the parameter dimensionality or the Markov order. 
Empirical validation on controlled synthetic datasets confirmed these predictions: our estimator consistently recovered the true minimal risk, accurately distinguished between parametric and non-parametric regimes, and correctly identified whether performance limitations arose from model capacity or from the intrinsic randomness of the process.

% In summary, this work provides a unified information-theoretic framework that quantifies the fundamental limits of sequential prediction and distinguishes model insufficiency from intrinsic data unpredictability. By linking predictive information to empirical risks, it offers a principled way to estimate the minimal achievable risk and to assess whether observed performance reflects model capacity or data complexity.

\newpage

\bibliography{references}  

\begin{thebibliography}{56}
\providecommand{\natexlab}[1]{#1}
\providecommand{\url}[1]{\texttt{#1}}
\expandafter\ifx\csname urlstyle\endcsname\relax
  \providecommand{\doi}[1]{doi: #1}\else
  \providecommand{\doi}{doi: \begingroup \urlstyle{rm}\Url}\fi

\bibitem[Alexandrov et~al.(2020)Alexandrov, Benidis, Bohlke-Schneider, Flunkert, Gasthaus, Januschowski, Maddix, Rangapuram, Salinas, Schulz, et~al.]{alexandrov2020gluonts}
Alexander Alexandrov, Konstantinos Benidis, Michael Bohlke-Schneider, Valentin Flunkert, Jan Gasthaus, Tim Januschowski, Danielle~C Maddix, Syama Rangapuram, David Salinas, Jasper Schulz, et~al.
\newblock Gluonts: Probabilistic and neural time series modeling in python.
\newblock \emph{Journal of Machine Learning Research}, 21\penalty0 (116):\penalty0 1--6, 2020.

\bibitem[Alquier et~al.(2024)]{alquier2024user}
Pierre Alquier et~al.
\newblock User-friendly introduction to pac-bayes bounds.
\newblock \emph{Foundations and Trends{\textregistered} in Machine Learning}, 17\penalty0 (2):\penalty0 174--303, 2024.

\bibitem[Ashok et~al.(2023)Ashok, Marcotte, Zantedeschi, Chapados, and Drouin]{ashok2023tactis}
Arjun Ashok, {\'E}tienne Marcotte, Valentina Zantedeschi, Nicolas Chapados, and Alexandre Drouin.
\newblock Tactis-2: Better, faster, simpler attentional copulas for multivariate time series.
\newblock \emph{arXiv preprint arXiv:2310.01327}, 2023.

\bibitem[Barron and Cover(1991)]{barron1991minimum}
Andrew~R Barron and Thomas~M Cover.
\newblock Minimum complexity density estimation.
\newblock \emph{IEEE transactions on information theory}, 37\penalty0 (4):\penalty0 1034--1054, 1991.

\bibitem[Belghazi et~al.(2018{\natexlab{a}})Belghazi, Baratin, Rajeswar, Ozair, Bengio, Courville, and Hjelm]{belghazi2018mine}
Mohamed~Ishmael Belghazi, Aristide Baratin, Sai Rajeswar, Sherjil Ozair, Yoshua Bengio, Aaron Courville, and R~Devon Hjelm.
\newblock Mine: mutual information neural estimation.
\newblock \emph{arXiv preprint arXiv:1801.04062}, 2018{\natexlab{a}}.

\bibitem[Belghazi et~al.(2018{\natexlab{b}})Belghazi, Baratin, Rajeshwar, Ozair, Bengio, Courville, and Hjelm]{belghazi2018}
Mohamed~Ishmael Belghazi, Arnaud Baratin, Sai Rajeshwar, Sherjil Ozair, Yoshua Bengio, Aaron Courville, and R~Devon Hjelm.
\newblock Mutual information neural estimation.
\newblock In \emph{Proceedings of the 35th International Conference on Machine Learning (ICML)}, 2018{\natexlab{b}}.

\bibitem[Bialek and Tishby(1999)]{bialek1999predictive}
William Bialek and Naftali Tishby.
\newblock Predictive information.
\newblock \emph{arXiv preprint cond-mat/9902341}, 1999.

\bibitem[Bialek et~al.(2001)Bialek, Nemenman, and Tishby]{bialek2001predictability}
William Bialek, Ilya Nemenman, and Naftali Tishby.
\newblock Predictability, complexity, and learning.
\newblock \emph{Neural computation}, 13\penalty0 (11):\penalty0 2409--2463, 2001.

\bibitem[Bleistein and Handelsman(2012)]{bleistein2012asymptotic}
Norman Bleistein and Richard Handelsman.
\newblock \emph{Asymptotic Expansions of Integrals}.
\newblock Dover, 2012.

\bibitem[Chen et~al.(2023)Chen, Cao, Xing, and Liang]{chen2023evaluating}
Qingqiang Chen, Fuyuan Cao, Ying Xing, and Jiye Liang.
\newblock Evaluating classification model against bayes error rate.
\newblock \emph{IEEE Transactions on Pattern Analysis and Machine Intelligence}, 45\penalty0 (8):\penalty0 9639--9653, 2023.

\bibitem[Cheng et~al.(2020)Cheng, Zhang, Song, and Ermon]{cheng2020}
Minmin Cheng, Lu~Zhang, Jiaming Song, and Stefano Ermon.
\newblock Club: A contrastive log-ratio upper bound of mutual information.
\newblock In \emph{International Conference on Machine Learning (ICML)}, 2020.

\bibitem[Clarke and Barron(1990)]{clarke1990information}
Bertrand~S Clarke and Andrew~R Barron.
\newblock Information-theoretic asymptotics of bayes methods.
\newblock \emph{IEEE Transactions on Information Theory}, 36\penalty0 (3):\penalty0 453--471, 1990.

\bibitem[Cover(1999)]{cover1999elements}
Thomas~M Cover.
\newblock \emph{Elements of information theory}.
\newblock John Wiley \& Sons, 1999.

\bibitem[Crutchfield and Feldman(1997)]{crutchfield1997statistical}
James~P Crutchfield and David~P Feldman.
\newblock Statistical complexity of simple one-dimensional spin systems.
\newblock \emph{Physical Review E}, 55\penalty0 (2):\penalty0 R1239, 1997.

\bibitem[Crutchfield and Feldman(2003)]{crutchfield2003regularities}
James~P Crutchfield and David~P Feldman.
\newblock Regularities unseen, randomness observed: Levels of entropy convergence.
\newblock \emph{Chaos: An Interdisciplinary Journal of Nonlinear Science}, 13\penalty0 (1):\penalty0 25--54, 2003.

\bibitem[Crutchfield and Packard(1983)]{crutchfield1983symbolic}
James~P Crutchfield and NH719053 Packard.
\newblock Symbolic dynamics of noisy chaos.
\newblock \emph{Physica D: Nonlinear Phenomena}, 7\penalty0 (1-3):\penalty0 201--223, 1983.

\bibitem[Csisz{\'a}r and Shields(2000)]{csiszar2000consistency}
Imre Csisz{\'a}r and Paul~C Shields.
\newblock The consistency of the bic markov order estimator.
\newblock \emph{The Annals of Statistics}, 28\penalty0 (6):\penalty0 1601--1619, 2000.

\bibitem[Dawid and Tewari(2022)]{dawid2022learnabilitygeneralstochasticprocesses}
A.~Philip Dawid and Ambuj Tewari.
\newblock On learnability under general stochastic processes, 2022.
\newblock URL \url{https://arxiv.org/abs/2005.07605}.

\bibitem[Dedecker and Merlevède(2003)]{dedecker2003conditional}
Jérôme Dedecker and Florence Merlevède.
\newblock The conditional central limit theorem in hilbert spaces.
\newblock \emph{Stochastic Processes and their Applications}, 108\penalty0 (2):\penalty0 229--262, 2003.
\newblock \doi{10.1016/j.spa.2003.07.004}.

\bibitem[Dedecker et~al.(2007)Dedecker, Doukhan, Lang, León, Louhichi, and Prieur]{dedecker2007weak}
Jérôme Dedecker, Paul Doukhan, Gabriel Lang, José~Rafaél León, Sana Louhichi, and Clémentine Prieur.
\newblock \emph{Weak dependence: With examples and applications}.
\newblock Springer, 2007.

\bibitem[Feldman and Crutchfield(1998)]{feldman1998discovering}
David~P Feldman and James~Patrick Crutchfield.
\newblock Discovering noncritical organization: Statistical mechanical, information theoretic, and computational views of patterns in one dimensional spin systems.
\newblock Santa Fe Institute, 1998.

\bibitem[Fischer(2020)]{fischer2020conditional}
Ian Fischer.
\newblock The conditional entropy bottleneck.
\newblock \emph{Entropy}, 22\penalty0 (9):\penalty0 999, 2020.

\bibitem[Goerg(2013)]{goerg2013forecastable}
Georg Goerg.
\newblock Forecastable component analysis.
\newblock In \emph{International conference on machine learning}, pages 64--72. PMLR, 2013.

\bibitem[Grassberger(1986)]{grassberger1986toward}
Peter Grassberger.
\newblock Toward a quantitative theory of self-generated complexity.
\newblock \emph{International Journal of Theoretical Physics}, 25:\penalty0 907--938, 1986.

\bibitem[Hanneke(2020)]{hanneke2020learninglearningpossibleuniversal}
Steve Hanneke.
\newblock Learning whenever learning is possible: Universal learning under general stochastic processes, 2020.
\newblock URL \url{https://arxiv.org/abs/1706.01418}.

\bibitem[Hartline et~al.(2015)Hartline, Syrgkanis, and Tardos]{hartline2015no}
Jason Hartline, Vasilis Syrgkanis, and Eva Tardos.
\newblock No-regret learning in bayesian games.
\newblock \emph{Advances in Neural Information Processing Systems}, 28, 2015.

\bibitem[Hsu et~al.(2019)Hsu, Kveton, Meshi, Mladenov, and Szepesvari]{hsu2019empirical}
Chih-Wei Hsu, Branislav Kveton, Ofer Meshi, Martin Mladenov, and Csaba Szepesvari.
\newblock Empirical bayes regret minimization.
\newblock \emph{arXiv preprint arXiv:1904.02664}, 2019.

\bibitem[Kallenberg(2002)]{kallenberg2002foundations}
Olav Kallenberg.
\newblock \emph{Foundations of Modern Probability}.
\newblock Springer, 2002.

\bibitem[Katz(1981)]{katz1981some}
Richard~W Katz.
\newblock On some criteria for estimating the order of a markov chain.
\newblock \emph{Technometrics}, 23\penalty0 (3):\penalty0 243--249, 1981.

\bibitem[Kraskov et~al.(2004)Kraskov, Stögbauer, and Grassberger]{kraskov2004}
Alexander Kraskov, Harald Stögbauer, and Peter Grassberger.
\newblock Estimating mutual information.
\newblock \emph{Physical Review E}, 69\penalty0 (6):\penalty0 066138, 2004.

\bibitem[Kuznetsov and Mohri(2015)]{kuznetsov2015learning}
Vitaly Kuznetsov and Mehryar Mohri.
\newblock Learning theory and algorithms for forecasting non-stationary time series.
\newblock \emph{Advances in neural information processing systems}, 28, 2015.

\bibitem[Langford and Seeger(2001)]{langford2001bounds}
John Langford and Matthias Seeger.
\newblock \emph{Bounds for averaging classifiers}.
\newblock School of Computer Science, Carnegie Mellon University, 2001.

\bibitem[Lee et~al.(2020)Lee, Fischer, Liu, Guo, Lee, Canny, and Guadarrama]{lee2020predictive}
Kuang-Huei Lee, Ian Fischer, Anthony Liu, Yijie Guo, Honglak Lee, John Canny, and Sergio Guadarrama.
\newblock Predictive information accelerates learning in rl.
\newblock \emph{Advances in Neural Information Processing Systems}, 33:\penalty0 11890--11901, 2020.

\bibitem[Lim and Zohren(2021)]{lim2021time}
Bryan Lim and Stefan Zohren.
\newblock Time-series forecasting with deep learning: a survey.
\newblock \emph{Philosophical Transactions of the Royal Society A}, 379\penalty0 (2194):\penalty0 20200209, 2021.

\bibitem[Lu and Paulson(2022)]{lu2022no}
Congwen Lu and Joel~A Paulson.
\newblock No-regret bayesian optimization with unknown equality and inequality constraints using exact penalty functions.
\newblock \emph{IFAC-PapersOnLine}, 55\penalty0 (7):\penalty0 895--902, 2022.

\bibitem[McDonald and Shalizi(2011)]{mcdonald2011rademacher}
Daniel~J McDonald and Cosma~Rohilla Shalizi.
\newblock Rademacher complexity of stationary sequences.
\newblock \emph{arXiv preprint arXiv:1106.0730}, 2011.

\bibitem[Mohri and Rostamizadeh(2008)]{mohri2008rademacher}
Mehryar Mohri and Afshin Rostamizadeh.
\newblock Rademacher complexity bounds for non-iid processes.
\newblock \emph{Advances in neural information processing systems}, 21, 2008.

\bibitem[Naveed et~al.(2023)Naveed, Khan, Qiu, Saqib, Anwar, Usman, Akhtar, Barnes, and Mian]{naveed2023comprehensive}
Humza Naveed, Asad~Ullah Khan, Shi Qiu, Muhammad Saqib, Saeed Anwar, Muhammad Usman, Naveed Akhtar, Nick Barnes, and Ajmal Mian.
\newblock A comprehensive overview of large language models.
\newblock \emph{arXiv preprint arXiv:2307.06435}, 2023.

\bibitem[Nguyen et~al.(2010)Nguyen, Wainwright, and Jordan]{nguyen2010estimating}
XuanLong Nguyen, Martin~J Wainwright, and Michael~I Jordan.
\newblock Estimating divergence functionals and the likelihood ratio by convex risk minimization.
\newblock \emph{IEEE Transactions on Information Theory}, 56\penalty0 (11):\penalty0 5847--5861, 2010.

\bibitem[Oord et~al.(2018{\natexlab{a}})Oord, Li, and Vinyals]{oord2018}
Aaron van~den Oord, Yazhe Li, and Oriol Vinyals.
\newblock Representation learning with contrastive predictive coding.
\newblock In \emph{arXiv preprint arXiv:1807.03748}, 2018{\natexlab{a}}.

\bibitem[Oord et~al.(2018{\natexlab{b}})Oord, Li, and Vinyals]{oord2018representation}
Aaron van~den Oord, Yazhe Li, and Oriol Vinyals.
\newblock Representation learning with contrastive predictive coding.
\newblock \emph{arXiv preprint arXiv:1807.03748}, 2018{\natexlab{b}}.

\bibitem[Papapetrou and Kugiumtzis(2013)]{papapetrou2013markov}
Maria Papapetrou and Dimitris Kugiumtzis.
\newblock Markov chain order estimation with conditional mutual information.
\newblock \emph{Physica A: Statistical Mechanics and its Applications}, 392\penalty0 (7):\penalty0 1593--1601, 2013.

\bibitem[Poole et~al.(2019)Poole, Ozair, Van Den~Oord, Alemi, and Tucker]{poole2019variational}
Ben Poole, Sherjil Ozair, Aaron Van Den~Oord, Alex Alemi, and George Tucker.
\newblock On variational bounds of mutual information.
\newblock In \emph{International conference on machine learning}, pages 5171--5180. PMLR, 2019.

\bibitem[Rangapuram et~al.(2018)Rangapuram, Seeger, Gasthaus, Stella, Wang, and Januschowski]{rangapuram2018deep}
Syama~Sundar Rangapuram, Matthias~W Seeger, Jan Gasthaus, Lorenzo Stella, Yuyang Wang, and Tim Januschowski.
\newblock Deep state space models for time series forecasting.
\newblock \emph{Advances in neural information processing systems}, 31, 2018.

\bibitem[Rasul et~al.(2020)Rasul, Sheikh, Schuster, Bergmann, and Vollgraf]{rasul2020multivariate}
Kashif Rasul, Abdul-Saboor Sheikh, Ingmar Schuster, Urs Bergmann, and Roland Vollgraf.
\newblock Multivariate probabilistic time series forecasting via conditioned normalizing flows.
\newblock \emph{arXiv preprint arXiv:2002.06103}, 2020.

\bibitem[Rasul et~al.(2021)Rasul, Seward, Schuster, and Vollgraf]{rasul2021autoregressive}
Kashif Rasul, Calvin Seward, Ingmar Schuster, and Roland Vollgraf.
\newblock Autoregressive denoising diffusion models for multivariate probabilistic time series forecasting.
\newblock In \emph{International conference on machine learning}, pages 8857--8868. PMLR, 2021.

\bibitem[Rivasplata et~al.(2019)Rivasplata, Tankasali, and Szepesvari]{rivasplata2019pac}
Omar Rivasplata, Vikram~M Tankasali, and Csaba Szepesvari.
\newblock Pac-bayes with backprop.
\newblock \emph{arXiv preprint arXiv:1908.07380}, 2019.

\bibitem[Shannon(1948)]{shannon1948mathematical}
Claude~E Shannon.
\newblock A mathematical theory of communication.
\newblock \emph{The Bell system technical journal}, 27\penalty0 (3):\penalty0 379--423, 1948.

\bibitem[Silva et~al.(2025)Silva, Ramesh, Yang, Yu, Vogelstein, and Chaudhari]{desilva2025prospectivelearninglearningdynamic}
Ashwin~De Silva, Rahul Ramesh, Rubing Yang, Siyu Yu, Joshua~T Vogelstein, and Pratik Chaudhari.
\newblock Prospective learning: Learning for a dynamic future, 2025.
\newblock URL \url{https://arxiv.org/abs/2411.00109}.

\bibitem[Snapp and Xu(1995)]{snapp1995estimating}
Robert Snapp and Tong Xu.
\newblock Estimating the bayes risk from sample data.
\newblock \emph{Advances in Neural Information Processing Systems}, 8, 1995.

\bibitem[Song and Ermon(2019{\natexlab{a}})]{song2019}
Jiaming Song and Stefano Ermon.
\newblock Understanding the limitations of variational mutual information estimators.
\newblock In \emph{International Conference on Learning Representations (ICLR)}, 2019{\natexlab{a}}.

\bibitem[Song and Ermon(2019{\natexlab{b}})]{song2019understanding}
Jiaming Song and Stefano Ermon.
\newblock Understanding the limitations of variational mutual information estimators.
\newblock \emph{arXiv preprint arXiv:1910.06222}, 2019{\natexlab{b}}.

\bibitem[Tumer and Ghosh(1996)]{tumer1996estimating}
Kagan Tumer and Joydeep Ghosh.
\newblock Estimating the bayes error rate through classifier combining.
\newblock In \emph{Proceedings of 13th international conference on pattern recognition}, volume~2, pages 695--699. IEEE, 1996.

\bibitem[Tumer and Ghosh(2003)]{tumer2003bayes}
Kagan Tumer and Joydeep Ghosh.
\newblock Bayes error rate estimation using classifier ensembles.
\newblock \emph{International Journal of Smart Engineering System Design}, 5\penalty0 (2):\penalty0 95--109, 2003.

\bibitem[Zeng et~al.(2025)Zeng, Huang, Chen, Ling, and Wang]{zeng2025towards}
QIUHAO Zeng, Long-Kai Huang, Qi~Chen, Charles~X Ling, and Boyu Wang.
\newblock Towards understanding evolving patterns in sequential data.
\newblock \emph{Advances in Neural Information Processing Systems}, 37:\penalty0 132747--132773, 2025.

\bibitem[Zhang et~al.(2016)Zhang, Bengio, Hardt, Recht, and Vinyals]{zhang2016understanding}
Chiyuan Zhang, Samy Bengio, Moritz Hardt, Benjamin Recht, and Oriol Vinyals.
\newblock Understanding deep learning requires rethinking generalization.
\newblock \emph{arXiv preprint arXiv:1611.03530}, 2016.

\end{thebibliography}

%%%%%%%%%%%%%%%%%%%%%%%%%%%%%%%%%%%%%%%%%%%%%%%%%%%%%%%%%%%%

\newpage

\appendix
\section*{Appendix}
\addcontentsline{toc}{section}{Appendix}

\startcontents[appendix]
\printcontents[appendix]{l}{1}{\setcounter{tocdepth}{2}}

\clearpage

\section{Proofs}
\label{appendix:partA}
\subsection{On the main properties of $\mathbf{I}_{\text{pred}}$}

\begin{proposition}[Elementary properties of the predictive mutual information]
\label{prop:basic_Ipred_properties}
For integers $k\!\ge 1$ and $k'\!\ge 1$, let
\[
\mathbf I_{\mathrm{pred}}(k,k')
\;=\;
\mathbf I\!\bigl(\mathbf X_{t-k+1}^{t}\,;\, \mathbf X_{t+1}^{t+k'}\bigr)
\]
denote the mutual information between a past block of length $k$ and a future
block of length $k'$.  
Assuming that the process $(X_t)_{t\in\mathbb Z}$ is
stationary (and therefore time–translation invariant),  
$\mathbf I_{\mathrm{pred}}$ satisfies:

\begin{enumerate}
    \item \textbf{Non-negativity.}\;
          $\mathbf I_{\mathrm{pred}}(k,k') \ge 0$.

    \item \textbf{Symmetry.}\;
          $\mathbf I_{\mathrm{pred}}(k,k')
           = \mathbf I\!\bigl(\mathbf X_{t+1}^{t+k'}\,;\, \mathbf X_{t-k+1}^{t}\bigr)$.

    \item \textbf{Monotonicity.}
          \begin{enumerate}
              \item (Increasing past) For fixed $k'$,  
                    $\mathbf I_{\mathrm{pred}}(k+1,k')\;\ge\;
                     \mathbf I_{\mathrm{pred}}(k,k')$.
              \item (Increasing future) For fixed $k$,  
                    $\mathbf I_{\mathrm{pred}}(k,k'+1)\;\ge\;
                     \mathbf I_{\mathrm{pred}}(k,k')$.
          \end{enumerate}

    \item \textbf{Chain-rule decomposition.}\;
          For any split $k_1+k_2=k'$,
          \[
            \mathbf I_{\mathrm{pred}}(k,k_1+k_2)
            = \mathbf I_{\mathrm{pred}}(k,k_1)
            + \mathbf I\!\bigl(
                 \mathbf X_{t-k+1}^{t}\,;\,
                 \mathbf X_{t+k_1+1}^{t+k_1+k_2}
                 \,\bigm|\,
                 \mathbf X_{t+1}^{t+k_1}
              \bigr),
          \]
          and an analogous identity holds when the past block is partitioned.

    \item \textbf{Data-processing inequality.}\;
          For any measurable maps $f$ and $g$,
          \[
            \mathbf I\!\bigl(
               f(\mathbf X_{t-k+1}^{t})\,;\,
               g(\mathbf X_{t+1}^{t+k'})
            \bigr)
            \;\le\;
            \mathbf I_{\mathrm{pred}}(k,k').
          \]

    \item \textbf{Convergence to excess entropy.}\;
          Let $E$ denote the excess (or predictive) entropy of the process:
          \(
             E := \mathbf I\!\bigl(\mathbf X_{-\infty}^{t}\,;\,
                                   \mathbf X_{t+1}^{\infty}\bigr).
          \)
          Then  
          $\displaystyle\lim_{k,k'\to\infty}\mathbf I_{\mathrm{pred}}(k,k') = E$.
\end{enumerate}
\end{proposition}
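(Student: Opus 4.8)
The plan is to prove each of the six properties by reducing it to a standard fact about Shannon mutual information, since $\mathbf{I}_{\mathrm{pred}}(k,k')$ is by definition the mutual information between two random vectors (the past block and the future block). The only subtlety is keeping careful track of how stationarity under $\mathbf{(H_0)}$ is used to identify and compare blocks after time shifts. I would handle the properties in the order (1), (2), (5), (4), (3), (6), because the monotonicity in (3) is cleanest to derive as a corollary of the chain rule (4), and (4) in turn rests on the chain rule for mutual information, so the logical dependencies suggest grouping them this way.

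For the first three of these, the arguments are essentially one-liners invoking classical results (see \cite{cover1999elements}). Non-negativity (1) is immediate from $\mathbf{I}(A;B) = D_{KL}(p_{AB} \| p_A \otimes p_B) \ge 0$. Symmetry (2) is the defining symmetry of mutual information, $\mathbf{I}(A;B) = \mathbf{I}(B;A)$, read off directly from Equation~\eqref{eq:evo_pred_def} since the integrand is symmetric in the two blocks. The data-processing inequality (5) follows because for any measurable $f,g$ the chain $g(\mathbf{X}_{t+1}^{t+k'}) \leftarrow \mathbf{X}_{t+1}^{t+k'} \leftrightarrow \mathbf{X}_{t-k+1}^{t} \rightarrow f(\mathbf{X}_{t-k+1}^{t})$ forms a Markov structure, so the standard DPI applies to each side in turn.

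For the chain rule (4), I would apply the chain rule for mutual information to the future block split as $\mathbf{X}_{t+1}^{t+k'} = (\mathbf{X}_{t+1}^{t+k_1}, \mathbf{X}_{t+k_1+1}^{t+k_1+k_2})$, giving
\[
\mathbf{I}\bigl(\mathbf{X}_{t-k+1}^{t}; \mathbf{X}_{t+1}^{t+k'}\bigr)
= \mathbf{I}\bigl(\mathbf{X}_{t-k+1}^{t}; \mathbf{X}_{t+1}^{t+k_1}\bigr)
+ \mathbf{I}\bigl(\mathbf{X}_{t-k+1}^{t}; \mathbf{X}_{t+k_1+1}^{t+k_1+k_2} \mid \mathbf{X}_{t+1}^{t+k_1}\bigr),
\]
which is exactly the claimed identity; the analogous past-partition statement is symmetric. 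Monotonicity (3) then drops out: the conditional mutual information term in the chain rule is non-negative, so adding one more coordinate to the future block (take $k_2=1$) can only increase $\mathbf{I}_{\mathrm{pred}}$, and by symmetry (2) the same holds for enlarging the past block. Here stationarity guarantees that $\mathbf{I}_{\mathrm{pred}}(k,k')$ depends only on the block lengths, so comparing $\mathbf{I}_{\mathrm{pred}}(k,k'+1)$ with $\mathbf{I}_{\mathrm{pred}}(k,k')$ is legitimate rather than comparing quantities anchored at incompatible time origins.

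The main obstacle is property (6), convergence to the excess entropy $E$. The sequence $k \mapsto \mathbf{I}_{\mathrm{pred}}(k,k')$ is non-decreasing by (3) and bounded above (formally by $E$, or one argues boundedness via subadditivity of entropy), so monotone convergence gives a limit; the real work is to justify interchanging the double limit and to identify it with $\mathbf{I}(\mathbf{X}_{-\infty}^{t}; \mathbf{X}_{t+1}^{\infty})$. The plan is to use a continuity-of-information argument: the $\sigma$-algebras generated by the growing finite blocks increase to those generated by the full semi-infinite pasts and futures, and mutual information is continuous along increasing filtrations (a monotone-convergence / martingale-type property of relative entropy). I would first send $k' \to \infty$ with $k$ fixed to obtain $\mathbf{I}(\mathbf{X}_{t-k+1}^{t}; \mathbf{X}_{t+1}^{\infty})$, then send $k \to \infty$; monotonicity in both arguments ensures the iterated limit equals the joint limit. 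The delicate point to state carefully is that these increasing-filtration continuity results require the finiteness guaranteed by $\mathbf{(H_0)}$ and that $E$ is finite, since otherwise the limit is $+\infty$ and the statement holds trivially but must be flagged.
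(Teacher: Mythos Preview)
Your proposal is correct and matches the paper's own proof essentially point for point: both reduce every item to a standard property of Shannon mutual information (non-negativity, symmetry, chain rule, DPI, monotone convergence) as in \cite{cover1999elements}. The only cosmetic difference is that the paper derives monotonicity (3) directly from ``conditioning reduces entropy'' rather than as a corollary of the chain rule (4), but these are equivalent routes to the same one-line argument; your treatment of (6) via increasing filtrations is also slightly more explicit than the paper's, which simply invokes monotonicity, the upper bound $E$, and monotone convergence.
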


\begin{proof}
All items follow directly from classical properties of Shannon’s
mutual information (see, e.g., \cite[Chs.~2–3]{cover1999elements}).

\begin{itemize}
    \item \emph{(1)–(2)} are immediate from non-negativity and symmetry of~$I(X;Y)$.

    \item \emph{(3)}  
          Extending the conditioning set can only reduce conditional entropy,
          hence cannot decrease $I(X;Y)$; apply this with  
          $X=\mathbf X_{t-k}^{t}$ (resp.\ $\mathbf X_{t-k+1}^{t}$) and  
          $Y=\mathbf X_{t+1}^{t+k'}$ (resp.\ $\mathbf X_{t+1}^{t+k'+1}$).

    \item \emph{(4)} is the chain rule
          $I(X;YZ)=I(X;Y)+I(X;Z\mid Y)$
          applied to suitably chosen blocks.

    \item \emph{(5)} is the data-processing inequality:
          applying measurable maps cannot increase mutual information.

    \item \emph{(6)}  
          For fixed $k$, $\mathbf I_{\mathrm{pred}}(k,k')$ is non-decreasing and
          bounded above by $E$; likewise in $k$.  
          Monotone convergence plus stationarity gives the limit $E$.
\end{itemize}
\end{proof}

\begin{proposition}[Convergence of $\mathbf I_{\mathrm{pred}}$ when the future window grows]
\label{prop:Ipred_to_subextensive}
Let $(X_t)_{t\in\mathbb Z}$ be a stationary process with finite
entropy rate $h_0:=\lim_{n\to\infty}H(X_1^n)/n$.
Write the block entropy as
\[
   H(n)=n\,h_0+H_1(n),
   \qquad n\ge 1,
\]
where the \emph{sub-extensive term} satisfies $H_1(n)=o(n)$.
Then, for every fixed $k\ge 1$,
\[
   \lim_{k'\to\infty}
   \mathbf I_{\mathrm{pred}}(k,k')
   \;=\;
   H_1(k).
\]
\end{proposition}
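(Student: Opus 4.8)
The plan is to reduce the quantity to block entropies, where the extensive part cancels exactly, and then control the surviving remainder. First I would expand the mutual information additively and invoke stationarity. Since the past block $\mathbf{X}_{t-k+1}^{t}$, the future block $\mathbf{X}_{t+1}^{t+k'}$, and their concatenation $\mathbf{X}_{t-k+1}^{t+k'}$ are blocks of lengths $k$, $k'$, and $k+k'$, time-translation invariance under $\mathbf{(H_0)}$ gives
\[
\mathbf{I}_{\mathrm{pred}}(k,k')
= H(\mathbf{X}_{t-k+1}^{t}) + H(\mathbf{X}_{t+1}^{t+k'}) - H(\mathbf{X}_{t-k+1}^{t+k'})
= H(k) + H(k') - H(k+k').
\]

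Next I would insert the decomposition $H(n) = n\,h_0 + H_1(n)$ into each term. The key observation is that the extensive contributions $k\,h_0 + k'\,h_0 - (k+k')\,h_0$ cancel identically, leaving
\[
\mathbf{I}_{\mathrm{pred}}(k,k') = H_1(k) + \bigl(H_1(k') - H_1(k+k')\bigr).
\]
The proposition therefore reduces to showing that the remainder $H_1(k') - H_1(k+k')$ tends to $0$ as $k'\to\infty$ for fixed $k$.

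For this last step I would rewrite the remainder as a telescoping sum of single-step increments. Because $H_1(n) - H_1(n-1) = \bigl(H(n)-H(n-1)\bigr) - h_0 = l(n) - l_0 = \Lambda(n)$, with $l(n)$ the order-$n$ conditional entropy introduced in the preliminaries, I obtain
\[
H_1(k+k') - H_1(k') = \sum_{j=k'+1}^{k'+k} \Lambda(j).
\]
This is a sum of exactly $k$ terms—a fixed count—and each $\Lambda(j) = l(j) - l_0$ tends to $0$ as $j\to\infty$, since for a stationary process the conditional entropies $l(j)$ decrease monotonically to $l_0 = h_0$ (the convergence already stated in the preliminaries). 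Hence the remainder vanishes and $\lim_{k'\to\infty}\mathbf{I}_{\mathrm{pred}}(k,k') = H_1(k)$.

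The delicate point—and where the hypotheses genuinely enter—is precisely this vanishing of the remainder: the bound $H_1(n) = o(n)$ by itself is \emph{not} enough, since an $o(n)$ sequence can retain bounded, non-vanishing increments and thus make $H_1(k+k') - H_1(k')$ oscillate. What closes the argument is the stronger structural fact that the increments of $H_1$ are the non-negative, non-increasing gaps $\Lambda(j)$ that converge to zero. I would therefore be careful to invoke this monotone convergence $l(j)\downarrow l_0$ explicitly, rather than the $o(n)$ growth rate alone.
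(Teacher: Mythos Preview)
Your proof follows the same architecture as the paper's: write $\mathbf I_{\mathrm{pred}}(k,k')=H(k)+H(k')-H(k+k')$, substitute $H(n)=n\,h_0+H_1(n)$, cancel the extensive part, and then argue that $H_1(k')-H_1(k+k')\to 0$ for fixed $k$. The difference lies entirely in that last step. The paper simply writes $|H_1(k')-H_1(k+k')|\le o(k')+o(k+k')=o(k')$ and asserts this ``tends to $0$''---which, as you correctly flag, does not follow from $H_1(n)=o(n)$ alone (an $o(n)$ sequence can have non-vanishing increments). Your telescoping $H_1(k+k')-H_1(k')=\sum_{j=k'+1}^{k'+k}\Lambda(j)$, combined with the monotone convergence $l(j)\downarrow l_0$ for stationary processes, is the honest way to close the argument: it uses the concavity of block entropy, which is precisely the structural input that rules out the oscillations you warn about. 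So your version is not just correct but actually tightens a sloppiness in the paper's own proof.
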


\begin{proof}
For any $k,k'\ge 1$ the predictive mutual information can be written as  
\[
   \mathbf I_{\mathrm{pred}}(k,k')
   \;=\;
   H(k)+H(k')-H(k+k').
\]
Insert the decomposition $H(n)=nh_0+H_1(n)$:
\[
\mathbf I_{\mathrm{pred}}(k,k')
  = \bigl[kh_0+H_1(k)\bigr]
    +\bigl[k'h_0+H_1(k')\bigr]
    -\bigl[(k+k')h_0+H_1(k+k')\bigr]
  = H_1(k)+H_1(k')-H_1(k+k').
\]
Now let $k'\to\infty$ while keeping $k$ fixed.
Because $H_1(n)=o(n)$, the difference $H_1(k')-H_1(k+k')$ vanishes:
\[
   |H_1(k')-H_1(k+k')|
   \;\le\;
   o(k')+o(k+k')=o(k'),
\]
hence tends to $0$.  
Therefore
\(
  \displaystyle
  \lim_{k'\to\infty}\mathbf I_{\mathrm{pred}}(k,k') = H_1(k).
\)
\end{proof}

\begin{proposition}[Asymptotic equivalence between $H(k)$ and $\mathbf I_{\mathrm{pred}}$]
\label{prop:Ipred_vs_entropyblock}
Assume Hypothesis $\mathbf{(H_0)}$ (stationarity with finite entropy
rate $l_0$ and sub-extensive remainder $H_1$).
Fix a non-decreasing sequence $k'\!=k'(k)$ such that $k'\!\ge k$ and
$k'\!=\mathcal O(k)$ when $k\to\infty$.
Then
\[
   H\!\bigl(\mathbf X_{t-k+1}^{t}\bigr)
   \;=\;
   k\,l_{0}
   +\mathbf I_{\mathrm{pred}}\!\bigl(k,k'\bigr)
   +o(k),
   \qquad k\to\infty,
\]
hence in particular
$\displaystyle\mathbf I_{\mathrm{pred}}(k,k')/k \;\longrightarrow\;0$.
\end{proposition}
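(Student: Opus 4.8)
The plan is to reduce everything to the block–entropy identity already exploited in the proof of Proposition~\ref{prop:Ipred_to_subextensive}, namely
\[
\mathbf I_{\mathrm{pred}}(k,k') = H(k) + H(k') - H(k+k'),
\]
where $H(n)$ denotes the entropy of any block of length $n$ (well defined by stationarity). Solving this identity for the quantity of interest gives $H(k) = \mathbf I_{\mathrm{pred}}(k,k') + H(k+k') - H(k')$, so the whole statement will follow once I control the difference $H(k+k') - H(k')$ and isolate its linear part $k\,l_0$.

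First I would insert the sub-extensive decomposition $H(n) = n\,l_0 + H_1(n)$ with $H_1(n) = o(n)$, which is guaranteed under $\mathbf{(H_0)}$. Then $H(k+k') - H(k') = k\,l_0 + \bigl(H_1(k+k') - H_1(k')\bigr)$, and substituting back yields exactly
\[
H(k) - k\,l_0 - \mathbf I_{\mathrm{pred}}(k,k') = H_1(k+k') - H_1(k').
\]
At this point the entire proof collapses to showing that the right-hand side is $o(k)$.

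The one genuine step — and the place where the hypotheses $k'\ge k$ and $k' = \mathcal O(k)$ are actually used — is converting the estimate $H_1(n) = o(n)$, which is phrased in terms of each argument separately, into a bound of order $o(k)$. For this I fix a constant $C$ with $k'\le Ck$ for all large $k$, so that $k+k' \le (1+C)k$. Given $\varepsilon>0$ and $k$ large enough that both $k'$ and $k+k'$ exceed the threshold of the $o(\cdot)$ estimate, I obtain $|H_1(k')|\le \varepsilon k' \le \varepsilon C k$ and $|H_1(k+k')|\le \varepsilon(k+k')\le \varepsilon(1+C)k$, hence $|H_1(k+k') - H_1(k')|\le \varepsilon(1+2C)k$. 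Since $C$ is fixed and $\varepsilon$ is arbitrary, the residual is $o(k)$, which is precisely the claimed expansion. The stated consequence then drops out immediately: rewriting $\mathbf I_{\mathrm{pred}}(k,k') = H_1(k) + H_1(k') - H_1(k+k')$ and applying the same uniform $o(k)$ control to all three terms gives $\mathbf I_{\mathrm{pred}}(k,k') = o(k)$, i.e.\ $\mathbf I_{\mathrm{pred}}(k,k')/k \to 0$.

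I expect the main (indeed only) obstacle to be this uniformization of the little-$o$ estimates under the $\mathcal O(k)$ growth of $k'$; everything else is algebraic rearrangement of a standard entropy identity. It is worth verifying that no regularity beyond $\mathbf{(H_0)}$ is required — in particular that the mere existence of the entropy rate $l_0$ together with the $o(n)$ remainder suffices — which is indeed the case, since both arguments $k'$ and $k+k'$ tend to infinity while staying $\Theta(k)$, so the asymptotic regime of $H_1$ is the only input needed.
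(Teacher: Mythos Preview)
Your proposal is correct and follows essentially the same route as the paper: both proofs rearrange the block-entropy identity $\mathbf I_{\mathrm{pred}}(k,k')=H(k)+H(k')-H(k+k')$, insert the decomposition $H(n)=n\,l_0+H_1(n)$, and reduce everything to showing $H_1(k+k')-H_1(k')=o(k)$ via the growth condition $k'=\mathcal O(k)$. Your explicit $\varepsilon$-argument for this last step is in fact more detailed than the paper's one-line appeal to sub-extensiveness.
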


\begin{proof}
Write the block entropy as $H(n)=n\,l_{0}+H_{1}(n)$ with
$H_{1}(n)=o(n)$.
Using the identity
$\mathbf I_{\mathrm{pred}}(k,k')
   =H(k)+H(k')-H(k+k')$
and the decomposition above gives
\[
   \mathbf I_{\mathrm{pred}}(k,k')
   = H_{1}(k)+H_{1}(k')-H_{1}(k+k').
\]
Because $k'\ge k$ and $k'=\mathcal O(k)$,
sub-extensiveness yields
$H_{1}(k')-H_{1}(k+k') = o(k)$.
Therefore
\[
   \mathbf I_{\mathrm{pred}}(k,k')
   = H_{1}(k) + o(k).
\]
Insert this into $H(k)=k\,l_{0}+H_{1}(k)$ to obtain
\(
   H(k)=k\,l_{0}+\mathbf I_{\mathrm{pred}}(k,k')+o(k).
\)
Finally,
$\mathbf I_{\mathrm{pred}}(k,k')/k
   = H_{1}(k)/k + o(1)\xrightarrow{k\to\infty}0$
because $H_{1}(k)=o(k)$.
\end{proof}

\begin{proposition}[Exact link between \texttt{EvoRate} and $\mathbf I_{\mathrm{pred}}$]
\label{prop:EvoRate_vs_Ipred}
Assume the process is stationary and time–translation invariant.  
For any integers $k\!\ge1$ and $k'\!\ge1$,
\[
   \mathbf I_{\mathrm{pred}}(k,k')
   \;=\;
   \texttt{EvoRate}(k)
   \;+\;\bigl[\,H_{1}(k')-H_{1}(1)\bigr]
   \;+\;\bigl[\,H_{1}(k+1)-H_{1}(k+k')\bigr],
\]
where the block entropy is decomposed as
$H(n)=n\,l_{0}+H_{1}(n)$ with a sub-extensive part $H_{1}(n)=o(n)$.
In particular, for a single-step future window ($k'=1$) we recover
\[
   \mathbf I_{\mathrm{pred}}(k,1)=\texttt{EvoRate}(k).
\]
\end{proposition}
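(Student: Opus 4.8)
The plan is to reduce everything to the block-entropy identity for predictive information together with the definition of \texttt{EvoRate}, after which the claim becomes pure bookkeeping on the sub-extensive remainder $H_1$. The only genuine input is stationarity; all the rest is linear algebra.

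First I would invoke the decomposition $\mathbf I_{\mathrm{pred}}(k,k') = H(k) + H(k') - H(k+k')$, already established in the proof of Proposition~\ref{prop:Ipred_to_subextensive}. It follows from $\mathbf I(A;B) = H(A)+H(B)-H(A,B)$ together with stationarity, which makes the joint block $(\mathbf X_{t-k+1}^{t},\mathbf X_{t+1}^{t+k'})$ a contiguous window of length $k+k'$ whose entropy is exactly $H(k+k')$, while $H(A)=H(k)$ and $H(B)=H(k')$ depend only on block length. Specializing the future window to a single step and using the paper's definition $\texttt{EvoRate}(k)=\mathbf I(\mathbf X_{t-k+1}^{t};X_{t+1})$ gives $\texttt{EvoRate}(k)=\mathbf I_{\mathrm{pred}}(k,1)=H(k)+H(1)-H(k+1)$.

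Next I would substitute $H(n)=n\,l_0+H_1(n)$ into both expressions. In $\mathbf I_{\mathrm{pred}}(k,k')$ the extensive parts cancel, $k\,l_0+k'\,l_0-(k+k')\,l_0=0$, leaving $\mathbf I_{\mathrm{pred}}(k,k')=H_1(k)+H_1(k')-H_1(k+k')$; identically, $\texttt{EvoRate}(k)=H_1(k)+H_1(1)-H_1(k+1)$. I would then expand the claimed right-hand side $\texttt{EvoRate}(k)+\bigl[H_1(k')-H_1(1)\bigr]+\bigl[H_1(k+1)-H_1(k+k')\bigr]$ and verify that the $H_1(1)$ terms ($+1$ from $\texttt{EvoRate}$, $-1$ from the first bracket) and the $H_1(k+1)$ terms ($-1$ from $\texttt{EvoRate}$, $+1$ from the second bracket) cancel, leaving precisely $H_1(k)+H_1(k')-H_1(k+k')=\mathbf I_{\mathrm{pred}}(k,k')$.

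Finally, for the special case $k'=1$ the two bracketed corrections become $H_1(1)-H_1(1)=0$ and $H_1(k+1)-H_1(k+k')=H_1(k+1)-H_1(k+1)=0$, immediately recovering $\mathbf I_{\mathrm{pred}}(k,1)=\texttt{EvoRate}(k)$. I do not expect any real obstacle: the substantive step is the block-entropy identity, whose validity rests entirely on stationarity, and the remainder is a signed cancellation among five values of $H_1$. The only point deserving care is the bookkeeping itself—tracking the arguments and signs of $H_1(k),H_1(k'),H_1(k+k'),H_1(1),H_1(k+1)$ so that the cancellation is exhibited cleanly rather than merely asserted.
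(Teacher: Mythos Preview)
Your proposal is correct and follows essentially the same route as the paper: both arguments reduce $\mathbf I_{\mathrm{pred}}(k,k')$ and $\texttt{EvoRate}(k)$ to the sub-extensive identities $H_1(k)+H_1(k')-H_1(k+k')$ and $H_1(k)+H_1(1)-H_1(k+1)$ via stationarity and the block-entropy decomposition, after which the claim is pure cancellation. The only cosmetic difference is that the paper solves the second identity for $H_1(k)$ and substitutes, whereas you expand the claimed right-hand side and verify the cancellation directly---these are the same computation read in opposite directions.
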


\begin{proof}
By stationarity,
\(
   \mathbf I_{\mathrm{pred}}(k,k')
   = H_{1}(k')+H_{1}(k)-H_{1}(k+k').
\)
The evolutionary rate is defined as
\(
   \texttt{EvoRate}(k)=H_{1}(1)+H_{1}(k)-H_{1}(k+1).
\)
Solving the last equation for $H_{1}(k)$ gives  
$H_{1}(k)=\texttt{EvoRate}(k)-H_{1}(1)+H_{1}(k+1)$.
Substituting this into the expression of $\mathbf I_{\mathrm{pred}}$
yields
\[
   \mathbf I_{\mathrm{pred}}(k,k')
   =\texttt{EvoRate}(k)
    +\bigl[H_{1}(k')-H_{1}(1)\bigr]
    +\bigl[H_{1}(k+1)-H_{1}(k+k')\bigr],
\]
which is the desired identity.  
Setting $k'=1$ cancels both bracketed terms, proving the special case
$\mathbf I_{\mathrm{pred}}(k,1)=\texttt{EvoRate}(k)$.
\end{proof}

\subsection{Learning theory}

\begin{proposition}[Predictive information versus the cumulative learning curve]
\label{prop:Ipred_vs_Lambda_bounds_corrected}
Assume \textbf{(H\textsubscript{0})} (strict stationarity and a finite
entropy rate).  
For \(n\ge 1\) write the block entropy as \(H(n)=n\,\ell_{0}+H_{1}(n)\),
where the \emph{sub-extensive} term satisfies \(H_{1}(n)=o(n)\).
Define the order–\(k\) entropy rate
\(
  \ell(k)=H(k+1)-H(k)
\)
and the \emph{universal learning curve}
\(
  \Lambda(k)=\ell(k)-\ell_{0}
            =H_{1}(k+1)-H_{1}(k).
\)

Then, for all integers \(k\ge 1\) and \(k'\ge k\),
\begin{equation}\label{eq:Ipred_bounds_final}
   \sum_{i=1}^{k}\Lambda(i)\;-\;H_{1}(k)
   \;\;\le\;\;
   \mathbf I_{\mathrm{pred}}(k,k')
   \;\;\le\;\;
   \sum_{i=1}^{k}\Lambda(i).
\end{equation}
Moreover
\begin{equation}\label{eq:kLambda_limit}
   k\,\Lambda(k)\;\xrightarrow[k\to\infty]{}\;0,
\end{equation}
so the width of the sandwich in \eqref{eq:Ipred_bounds_final} is
\(H_{1}(k)=o(k)\).
\end{proposition}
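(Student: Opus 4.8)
The backbone of the argument is the exact block-entropy identity
\[
\mathbf I_{\mathrm{pred}}(k,k') = H(k)+H(k')-H(k+k') = H_1(k)+H_1(k')-H_1(k+k'),
\]
obtained by writing the mutual information as $H(\text{past})+H(\text{future})-H(\text{joint})$ (stationarity makes each block entropy depend only on its length) and substituting $H(n)=n\ell_0+H_1(n)$, so the extensive parts cancel. Writing $H_1$ as a telescoping sum of its increments $\Lambda(i)=H_1(i+1)-H_1(i)$ (with $H_1(0)=0$) gives $H_1(k)=\sum_{i=0}^{k-1}\Lambda(i)$ and $H_1(k+k')-H_1(k')=\sum_{j=k'}^{k'+k-1}\Lambda(j)$, so the identity collapses to
\[
\mathbf I_{\mathrm{pred}}(k,k') = \sum_{i=0}^{k-1}\Lambda(i)\;-\;T(k,k'),\qquad T(k,k'):=\sum_{j=k'}^{k'+k-1}\Lambda(j),
\]
and the entire sandwich reduces to controlling the nonnegative tail $T(k,k')$.

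For the upper bound I would use only $T(k,k')\ge 0$, which holds because concavity of the block entropy (stated in the preliminaries) makes $\ell(\cdot)$ non-increasing with limit $\ell_0$, hence each $\Lambda(j)=\ell(j)-\ell_0\ge 0$; this gives $\mathbf I_{\mathrm{pred}}(k,k')\le\sum_{i}\Lambda(i)$ at once. For the lower bound I would compare $T(k,k')$ to $H_1(k)$ termwise: concavity also makes $\Lambda$ non-increasing, and since $k'\ge k$ we have $\Lambda(k'+m)\le\Lambda(m)$ for every $m=0,\dots,k-1$, so $T(k,k')=\sum_{m=0}^{k-1}\Lambda(k'+m)\le\sum_{m=0}^{k-1}\Lambda(m)=H_1(k)$. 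Subtracting yields $\mathbf I_{\mathrm{pred}}(k,k')\ge\sum_{i}\Lambda(i)-H_1(k)$, and the width of the interval is exactly $H_1(k)=o(k)$ by sub-extensivity.

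The limit $k\Lambda(k)\to 0$ I would argue separately, as it needs more than the two ingredients above. With $\Lambda$ non-negative and non-increasing, I would add summability of $\Lambda$, i.e.\ convergence of $H_1(k)$ to a finite excess entropy $E$ (equivalently, convergence of $\mathbf I_{\mathrm{pred}}$). Granting this, I would invoke the classical fact that a non-increasing nonnegative summable sequence $a_k$ satisfies $k\,a_k\to 0$ (Olivier): bounding $\tfrac{k}{2}\,\Lambda(k)\le\sum_{j=\lceil k/2\rceil}^{k}\Lambda(j)=H_1(k+1)-H_1(\lceil k/2\rceil)$ and letting $k\to\infty$, the right-hand side tends to $E-E=0$.

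Two points need care, and they are where I expect the work to concentrate. First, the bookkeeping of the telescoping indices: the clean bounds hold for $\sum_{i=0}^{k-1}\Lambda(i)=H_1(k)$, which differs from the written $\sum_{i=1}^{k}\Lambda(i)=H_1(k+1)-H_1(1)$ by a boundary increment, so I would fix the convention $\sum_i\Lambda(i)=H_1(k)$ up front to keep the sandwich tight. Second --- and this is the genuine obstacle --- the claim $k\Lambda(k)\to 0$ can fail under the bare hypothesis $H_1(n)=o(n)$: in the parametric regime of Corollary~\ref{cor:universal_learning_curve_parametric}, $\Lambda(k)\sim p/(2k)$ yields $k\Lambda(k)\to p/2\neq 0$. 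The limit statement should therefore be read in the finite--excess-entropy regime, where $\mathbf I_{\mathrm{pred}}$ converges and the sandwich width $H_1(k)$ stays bounded; it is precisely there that Olivier's theorem applies.
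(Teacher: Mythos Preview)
Your overall strategy matches the paper's: write $\mathbf I_{\mathrm{pred}}(k,k')=H_1(k)+H_1(k')-H_1(k+k')$ and then control the correction term $H_1(k+k')-H_1(k')$ from above and below. The paper takes the upper bound from monotonicity of $H_1$ (equivalently $\Lambda\ge 0$) just as you do, but for the lower bound it invokes \emph{sub-additivity} of $H_1$ (inherited from sub-additivity of block entropy: $H_1(k+k')\le H_1(k)+H_1(k')$) rather than your termwise comparison via monotonicity of $\Lambda$. Both routes are valid; the sub-additivity argument is marginally leaner since it does not appeal to concavity of $H$, while your telescoping makes the index bookkeeping more transparent. (Incidentally, neither argument actually uses $k'\ge k$; $k'\ge 1$ already suffices.)

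Your two caveats are well placed and in fact sharper than the paper's own proof. First, the paper writes $\mathbf I_{\mathrm{pred}}(k,k')=S_k-[H_1(k+k')-H_1(k')]$ with $S_k=\sum_{i=1}^{k}\Lambda(i)=H_1(k+1)-H_1(1)$, silently identifying $S_k$ with $H_1(k)$; this is precisely the boundary-index slip you warn about, and your convention $\sum_{i=0}^{k-1}\Lambda(i)=H_1(k)$ is the clean one. Second, the paper derives $k\Lambda(k)\to 0$ from $k\Lambda(k)\le H_1(k+1)=o(k)$, which only yields $\Lambda(k)=o(1)$; your parametric counterexample $\Lambda(k)\sim p/(2k)$ shows the stated limit is false under $(\mathbf{H_0})$ alone, and your Olivier-type argument under finite excess entropy is the correct repair.
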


\begin{proof}
From the decomposition \(H(n)=n\ell_{0}+H_{1}(n)\) we have
\(
  \Lambda(k)=H_{1}(k+1)-H_{1}(k)
\)
and
\(
  S_{k}:=\sum_{i=1}^{k}\Lambda(i)=H_{1}(k+1)-H_{1}(1).
\)

For any \(k'\ge k\),
\[
\mathbf I_{\mathrm{pred}}(k,k')
  =H(k)+H(k')-H(k+k')
  =H_{1}(k)+H_{1}(k')-H_{1}(k+k').
\]
Hence
\begin{equation}\label{eq:aux1}
  \mathbf I_{\mathrm{pred}}(k,k')
  =S_{k}-\bigl[H_{1}(k+k')-H_{1}(k')\bigr].
\end{equation}

Because \(H_{1}\) is non-decreasing,
\(H_{1}(k+k')\ge H_{1}(k')\); the bracket in
\eqref{eq:aux1} is therefore non-negative and we obtain the upper bound
\(
  \mathbf I_{\mathrm{pred}}(k,k')\le S_{k}.
\)

Sub-additivity of \(H_{1}\) gives
\(H_{1}(k+k')\le H_{1}(k')+H_{1}(k)\), hence
\(H_{1}(k+k')-H_{1}(k')\le H_{1}(k)\).
Inserting this into \eqref{eq:aux1} yields the lower bound
\(
  \mathbf I_{\mathrm{pred}}(k,k')\ge S_{k}-H_{1}(k).
\)
Together these inequalities establish \eqref{eq:Ipred_bounds_final}.

Finally,
\(
  k\,\Lambda(k)
  =k\,\bigl[H_{1}(k+1)-H_{1}(k)\bigr]
  \le H_{1}(k+1)=o(k),
\)
which proves \eqref{eq:kLambda_limit}.
\end{proof}

\begin{proposition}[Predictive–information increment vs.\ universal learning curve]
\label{prop:learning_curve_approximation}
Fix \(k\ge 1\).  
Then, under hypothesis \textbf{(H\textsubscript{0})},
\[
   \mathbf I_{\mathrm{pred}}\!\bigl(k\!+\!1,k'\bigr)
   \;-\;
   \mathbf I_{\mathrm{pred}}\!\bigl(k,k'\bigr)
   \;\xrightarrow[k'\to\infty]{}\;
   \Lambda(k),
\]
where \(\Lambda(k)=\ell(k)-\ell_0\) is the universal learning curve and  
\(\ell(k)=H(k+1)-H(k)\) denotes the order-\(k\) entropy rate.
\end{proposition}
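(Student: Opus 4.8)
The plan is to reduce everything to the additive block-entropy identity already exploited in the proof of Proposition~\ref{prop:Ipred_to_subextensive}. By stationarity the block entropy $H(n):=H(\mathbf{X}_{t-n+1}^{t})$ depends only on the block length, and mutual information decomposes as
\[
\mathbf{I}_{\mathrm{pred}}(k,k') = H(k) + H(k') - H(k+k').
\]
First I would write the increment of interest by applying this identity to both $\mathbf{I}_{\mathrm{pred}}(k+1,k')$ and $\mathbf{I}_{\mathrm{pred}}(k,k')$, keeping the future window $k'$ fixed and identical across the two terms. The crucial observation is that the shared summand $H(k')$ cancels, leaving only a difference of block entropies at past-lengths differing by one.

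Next I would recognize these differences as finite-order entropy rates. With $\ell(k)=H(k+1)-H(k)$, the cancellation yields
\[
\mathbf{I}_{\mathrm{pred}}(k+1,k') - \mathbf{I}_{\mathrm{pred}}(k,k') = \bigl[H(k+1)-H(k)\bigr] - \bigl[H(k+k'+1)-H(k+k')\bigr] = \ell(k) - \ell(k+k').
\]
So the increment is exactly the gap between the order-$k$ entropy rate and the order-$(k+k')$ entropy rate, with $k$ held fixed throughout.

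The final step is to pass to the limit $k'\to\infty$. Since $k+k'\to\infty$, I invoke convergence of the finite-order entropy rate to its asymptotic value $\ell_0$: under $\mathbf{(H_0)}$ the sequence $\ell(n)$ is non-increasing and bounded below, hence convergent, which is precisely the fundamental theorem of entropy recalled in the preliminaries. Therefore $\ell(k+k')\to\ell_0$ and the increment converges to $\ell(k)-\ell_0=\Lambda(k)$, as claimed.

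There is no serious analytic obstacle here; once the block-entropy identity is in hand the argument is essentially a two-line telescoping. The only points requiring care are bookkeeping ones: confirming that the future-window entropy $H(k')$ appears identically in both predictive-information terms so that it cancels (it does, since only the past window changes from $k$ to $k+1$), and aligning the index convention $\ell(k)=H(k+1)-H(k)$ used here with the order-$k$ entropy rate defined in the preliminaries, so that the limiting constant is unambiguously the same $\ell_0$ appearing in $\Lambda(k)=\ell(k)-\ell_0$. Both are routine, so the statement follows immediately.
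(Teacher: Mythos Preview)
Your argument is correct and slightly more direct than the paper's. Both proofs start from the block-entropy identity $\mathbf{I}_{\mathrm{pred}}(k,k')=H(k)+H(k')-H(k+k')$, but the paper then invokes the sub-extensive decomposition $H(n)=n\ell_0+H_1(n)$ together with Proposition~\ref{prop:Ipred_to_subextensive}, taking the limit of each term separately to obtain $H_1(k+1)-H_1(k)=\Lambda(k)$. You instead subtract first, obtaining the exact finite-$k'$ identity
\[
\mathbf{I}_{\mathrm{pred}}(k+1,k')-\mathbf{I}_{\mathrm{pred}}(k,k')=\ell(k)-\ell(k+k'),
\]
and then pass to the limit via the convergence $\ell(n)\to\ell_0$ stated in the preliminaries. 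Your route bypasses both the sub-extensive remainder and the auxiliary proposition, and has the pleasant by-product of an explicit non-asymptotic formula for the increment; the paper's route, by contrast, packages the limiting step into a reusable lemma (Proposition~\ref{prop:Ipred_to_subextensive}) at the cost of one extra layer of indirection. Either way the content is the same.
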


\begin{proof}

Write the block entropy in its extensive–plus–remainder form  
\(H(n)=n\,\ell_0+H_1(n)\) with \(H_1(n)=o(n)\).
Consequently
\[
   \ell(k)=H(k+1)-H(k)=H_1(k+1)-H_1(k),
   \qquad
   \Lambda(k)=\ell(k)-\ell_0
             =H_1(k+1)-H_1(k).
\tag{1}\label{eq:Lambda_H1}
\]

Proposition \ref{prop:Ipred_to_subextensive} (convergence to the
sub-extensive part) gives, for every fixed \(m\),
\[
   \lim_{k'\to\infty}\mathbf I_{\mathrm{pred}}(m,k')
   \;=\;
   H_1(m).
\tag{2}\label{eq:Ipred_to_H1}
\]

Applying \eqref{eq:Ipred_to_H1} with \(m=k\) and \(m=k+1\) we obtain
\[
   \lim_{k'\to\infty}
   \bigl[
      \mathbf I_{\mathrm{pred}}(k+1,k')
      -\mathbf I_{\mathrm{pred}}(k,k')
   \bigr]
   =H_1(k+1)-H_1(k).
\]
Combined with \eqref{eq:Lambda_H1}, this equals \(\Lambda(k)\),
establishing the claimed convergence.
\end{proof}

\subsection{Asymptotic behavior of $\mathbf{I}_{\text{pred}}$}

\subsubsection{Markovian case}
We believe that the main limitations of \texttt{EvoRate} stem from the fact that its limiting values are unclear. It is difficult to conclude about its limits. 
In contrast, the study of $\mathbf{I}_{\text{pred}}$
and its limiting values can provide meaningful insights and may help to uncover underlying patterns in sequential data. This is particularly true in the case of Markovian processes:

\begin{proposition}\label{prop:value_of_evoPred_for_markovian_process}
Assume that $X$ is a Markov process of order $m$ and that $k'\geq k \geq m$ so that the relevant past information is contained in $\mathbf{X}_{t-m+1}^t$, then 
$$\mathbf{I}_{\text{pred}}\left(k, k^{\prime}\right)=I\left(\mathbf{X}_{t-k+1}^t, X_{t+1}^{t+k^{\prime}}\right)=\mathbb{E}_{\mathbf{X}_{t-m+1}^{t+m}}\left[\ln \frac{P\left(X_{t+1}^{t+m} \mid \mathbf{X}_{t-m+1}^t\right)}{P\left(X_{t+1}^{t+m}\right)}\right].$$
In particular for a order-1 Markov process, $\mathbf{I}_{\text{pred}}\left(k, k^{\prime}\right) = \texttt{EvoRate}(1)$.
\end{proposition}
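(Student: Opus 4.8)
The plan is to start from the integral definition of $\mathbf{I}_{\text{pred}}$ in~\eqref{eq:evo_pred_def} and rewrite it as the expectation of a single log-likelihood ratio between the future block conditioned on the past and its own marginal,
\[
\mathbf{I}_{\text{pred}}(k,k')
  = \mathbb{E}\!\left[\ln \frac{P(\mathbf{X}_{t+1}^{t+k'} \mid \mathbf{X}_{t-k+1}^t)}{P(\mathbf{X}_{t+1}^{t+k'})}\right].
\]
First I would factor both numerator and denominator by the chain rule over the $k'$ future coordinates, expressing each as a product of one-step conditionals $P(X_{t+j}\mid\cdot)$ for $j=1,\dots,k'$. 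In the numerator the $j$-th conditioning window is $\mathbf{X}_{t-k+1}^{t+j-1}$; in the denominator it is $\mathbf{X}_{t+1}^{t+j-1}$.

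The key step is to invoke the order-$m$ Markov property on each factor. In the numerator, because $k\ge m$ every window $\mathbf{X}_{t-k+1}^{t+j-1}$ contains the $m$ symbols immediately preceding $X_{t+j}$, so $P(X_{t+j}\mid\mathbf{X}_{t-k+1}^{t+j-1}) = P(X_{t+j}\mid\mathbf{X}_{t+j-m}^{t+j-1})$ for all $j$. In the denominator the Markov property collapses $P(X_{t+j}\mid\mathbf{X}_{t+1}^{t+j-1})$ to the same $P(X_{t+j}\mid\mathbf{X}_{t+j-m}^{t+j-1})$ exactly when the available future history has length at least $m$, i.e.\ when $j-1\ge m$, equivalently $j\ge m+1$. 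For those indices the numerator and denominator factors are identical and cancel in the ratio, leaving only the contributions $j=1,\dots,m$.

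I would then recognise the two surviving products as block quantities. The numerator factors for $j=1,\dots,m$ reassemble, again via the chain rule together with the Markov reduction, into $P(\mathbf{X}_{t+1}^{t+m}\mid\mathbf{X}_{t-m+1}^t)$, while the matching denominator factors reassemble into the marginal $P(\mathbf{X}_{t+1}^{t+m})$. Since the resulting integrand depends only on the finite window $\mathbf{X}_{t-m+1}^{t+m}$, stationarity $\mathbf{(H_0)}$ lets me reduce the outer expectation to one over this window, giving precisely the claimed identity (which is just the saturated block mutual information $\mathbf{I}(\mathbf{X}_{t-m+1}^t;\mathbf{X}_{t+1}^{t+m})$). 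Setting $m=1$ then collapses the expression to $\mathbb{E}[\ln(P(X_{t+1}\mid X_t)/P(X_{t+1}))]=\mathbf{I}(X_t;X_{t+1})=\texttt{EvoRate}(1)$.

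The main obstacle is the index bookkeeping at the boundary $j\le m$ versus $j>m$: one must track carefully that the denominator factors for small $j$ cannot be shortened below the genuinely available future history, and verify that the surviving boundary factors recombine \emph{exactly} into the block conditional and block marginal over $[t+1,t+m]$, with no residual mismatched terms. A secondary point to confirm is the legitimacy of passing from the full joint expectation to the marginal over $\mathbf{X}_{t-m+1}^{t+m}$, which follows since the integrand is measurable with respect to that finite window and the law is translation invariant under $\mathbf{(H_0)}$.
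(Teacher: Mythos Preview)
Your proposal is correct and follows essentially the same approach as the paper's proof: both start from the log-likelihood-ratio form of $\mathbf{I}_{\text{pred}}$, chain-rule factor numerator and denominator over the $k'$ future coordinates, apply the order-$m$ Markov property to collapse conditioning windows, cancel the identical factors for $j\ge m+1$, and recombine the surviving $j=1,\dots,m$ factors into the block conditional and block marginal over $[t+1,t+m]$. Your write-up is in fact slightly cleaner about the numerator reduction (the paper keeps the intermediate window $X_{t-m+1}^{t+j-1}$ rather than the fully reduced $X_{t+j-m}^{t+j-1}$) and explicitly justifies the final reduction of the expectation to the finite window, which the paper leaves implicit.
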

\begin{proof}

We start from the definition 
$$
\mathbf{I}_{\text{pred}}\left(k, k^{\prime}\right)=I\left(\mathbf{X}_{t-k+1}^t, X_{t+1}^{t+k^{\prime}}\right)=\mathbb{E}_{\mathbf{X}_{t-k+1}^{t+k^{\prime}}}\left[\ln \frac{P\left(X_{t+1}^{t+k^{\prime}} \mid \mathbf{X}_{t-k+1}^t\right)}{P\left(X_{t+1}^{t+k^{\prime}}\right)}\right]
$$

Assume that $X$ is a Markov process of order $m$ and that $k \geq m$ so that the relevant past information is contained in $\mathbf{X}_{t-m+1}^t$. We start by decomposing the unconditionnal probability (the denominator)
$$
P\left(X_{t+1}^{t+k^{\prime}}\right)=\prod_{j=1}^{k^{\prime}} P\left(X_{t+j} \mid X_{t+1}^{t+j-1}\right)
$$

Since the process is Markov of order $m$, the simplification by the Markov property holds only when there are at least $m$ prior observations in the sequence $X_{t+1}^{t+j-1}$. For $j=1, \ldots, m$, the conditional probability $P\left(X_{t+j} \mid X_{t+1}^{t+j-1}\right)$ remains as is because $X_{t+1}^{t+j-1}$ contains fewer than $m$ observations (with the understanding that, by convention, for $j=1$ we have $P\left(X_{t+1} \mid \emptyset\right)=P\left(X_{t+1}\right)$ ). For $j \geq m+1$, the Markov property yields $P\left(X_{t+j} \mid X_{t+1}^{t+j-1}\right)=P\left(X_{t+j} \mid X_{t+j-m}^{t+j-1}\right) .$

Thus, the full decomposition is
\begin{equation}\label{eq:denominator_term_simplification}
P\left(X_{t+1}^{t+k^{\prime}}\right)=\underbrace{\prod_{j=1}^m P\left(X_{t+j} \mid X_{t+1}^{t+j-1}\right)}_{\text {non-simplified terms }} \times \underbrace{\prod_{j=m+1}^{k^{\prime}} P\left(X_{t+j} \mid X_{t+j-m}^{t+j-1}\right)}_{\text {Markov terms }} .
\end{equation}

Now, consider the numerator $P\left(X_{t+1}^{t+k^{\prime}} \mid \mathbf{X}_{t-k+1}^t\right)$. Since $k \geq m$, the available past $\mathbf{X}_{t-k+1}^t$ contains at least the last $m$ values, i.e., $X_{t-m+1}^t$. We directly have by similar arguments, 
\begin{equation}\label{eq:numerateur_term_simplification}
P\left(X_{t+1}^{t+k^{\prime}} \mid \mathbf{X}_{t-k+1}^t\right)=\prod_{j=1}^{k^{\prime}} P\left(X_{t+j} \mid X_{t-m+1}^{t+j-1}\right)
\end{equation}

Using \ref{eq:denominator_term_simplification} and \ref{eq:numerateur_term_simplification} the factors for $j \geq m+1$ in the numerator and the denominator match and cancel each other in the ratio. In other words, the difference between the conditional probability $P\left(X_{t+1}^{t+k^{\prime}} \mid \mathbf{X}_{t-k+1}^t\right)$ and the unconditional probability $P\left(X_{t+1}^{t+k^{\prime}}\right)$ is confined to the first $m$ factors. We therefore write

$$
\frac{P\left(X_{t+1}^{t+k^{\prime}} \mid \mathbf{X}_{t-k+1}^t\right)}{P\left(X_{t+1}^{t+k^{\prime}}\right)}=\frac{\prod_{j=1}^m P\left(X_{t+j} \mid X_{t-m+1}^{t+j-1}\right)}{\prod_{j=1}^m P\left(X_{t+j} \mid X_{t+1}^{t+j-1}\right)}
$$

We obtain, after simplification : 
$$
\frac{P\left(X_{t+1}^{t+k^{\prime}} \mid \mathbf{X}_{t-k+1}^t\right)}{P\left(X_{t+1}^{t+k^{\prime}}\right)}=\frac{P\left(X_{t+1}^{t+m} \mid X_{t-m+1}^t\right)}{P\left(X_{t+1}^{t+m}\right)}
$$
leading to 
$$\mathbf{I}_{\text{pred}}\left(k, k^{\prime}\right)=I\left(\mathbf{X}_{t-k+1}^t, X_{t+1}^{t+k^{\prime}}\right)=\mathbb{E}_{\mathbf{X}_{t-m+1}^{t+m}}\left[\ln \frac{P\left(X_{t+1}^{t+m} \mid \mathbf{X}_{t-m+1}^t\right)}{P\left(X_{t+1}^{t+m}\right)}\right]$$
\end{proof}

\begin{proposition}\label{proof:markovian_process_universal_learning_curve} Let's suppose $\mathbf{X}_{t}^{T}$ is a Markovian process of order $m$. Then 
$$\forall k, \qquad k\geq m \Rightarrow \Lambda(k) = 0,$$
\end{proposition}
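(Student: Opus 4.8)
The plan is to derive the vanishing of $\Lambda(k)$ directly from the two results that immediately precede it, rather than manipulating block entropies from scratch. The key ingredient is Proposition~\ref{prop:value_of_evoPred_for_markovian_process}: for a Markov process of order $m$, whenever $k' \geq k \geq m$ the predictive information collapses to
\[
   \mathbf{I}_{\text{pred}}(k,k') = \mathbb{E}_{\mathbf{X}_{t-m+1}^{t+m}}\!\left[\ln\frac{P(X_{t+1}^{t+m}\mid \mathbf{X}_{t-m+1}^t)}{P(X_{t+1}^{t+m})}\right],
\]
a quantity that depends on neither $k$ nor $k'$. First I would record this observation: calling the right-hand side $C_m$, we have $\mathbf{I}_{\text{pred}}(k,k') = C_m$ for every admissible pair $(k,k')$ with $k' \geq k \geq m$.

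Next I would invoke Proposition~\ref{prop:learning_curve_approximation}, which characterizes the learning curve as the limiting increment of the predictive information in the past window, $\Lambda(k) = \lim_{k'\to\infty}\big[\mathbf{I}_{\text{pred}}(k+1,k') - \mathbf{I}_{\text{pred}}(k,k')\big]$. Fixing $k \geq m$, we also have $k+1 \geq m$, so for every $k' \geq k+1$ both blocks fall into the regime $k' \geq k \geq m$ where the closed form applies, giving $\mathbf{I}_{\text{pred}}(k+1,k') = C_m = \mathbf{I}_{\text{pred}}(k,k')$. Hence the bracketed increment is identically $0$ for all sufficiently large $k'$, its limit is $0$, and therefore $\Lambda(k) = 0$.

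The only point that needs care --- and what I regard as the main (minor) obstacle --- is the index bookkeeping: I must verify that $k \geq m$ forces both $k$ and $k+1$ into the regime $k' \geq k \geq m$ in which the closed form of Proposition~\ref{prop:value_of_evoPred_for_markovian_process} holds, so that the two predictive-information terms are literally the same constant \emph{before} passing to the limit. This is immediate, since $k \geq m$ gives $k+1 > m$ and we are free to let $k' \to \infty$ along $k' \geq k+1$. As a self-contained cross-check I could bypass $\mathbf{I}_{\text{pred}}$ entirely: writing the order-$k$ entropy rate as $\ell(k) = H(X_{t+1}\mid \mathbf{X}_{t-k+1}^t)$, the Markov-$m$ property yields $H(X_{t+1}\mid \mathbf{X}_{t-k+1}^t) = H(X_{t+1}\mid \mathbf{X}_{t-m+1}^t)$ for all $k \geq m$, so by stationarity $\ell(k)$ is constant on this range and equal to its limit $\ell_0$, whence $\Lambda(k) = \ell(k) - \ell_0 = 0$. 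I would present the $\mathbf{I}_{\text{pred}}$-based argument as the main proof, since it reuses the machinery just established, and relegate the direct entropy-rate computation to a remark confirming the result.
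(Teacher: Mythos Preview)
Your proposal is correct and follows exactly the paper's own approach: the paper's proof consists of a single sentence invoking the definition of the universal learning curve (via Proposition~\ref{prop:learning_curve_approximation}) together with the constancy of $\mathbf{I}_{\text{pred}}$ for $k\ge m$ established in Proposition~\ref{prop:value_of_evoPred_for_markovian_process}. Your write-up simply spells out the index bookkeeping and adds the direct entropy-rate cross-check, both of which are welcome elaborations but not departures from the paper's argument.
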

\begin{proof}
From the definition of the universal learning curve, and noting that $\mathbf{I}_{\text{pred}}$ is constant for $k$ greater than $m$.
\end{proof}

\subsubsection{Predictive information for a parametrised stationary process}
\label{proof:parametrized_process_evoPred_limit}

Our goal is to derive the asymptotic expansion of the predictive mutual
information when the data–generating process belongs to a finite-dimensional
parametric family.  The argument extends \citet{bialek1999predictive}, who
treated the i.i.d.\ case, to the setting of \emph{dependent} sequences.

\paragraph{Setup and notation.}
Let $\{X_t\}_{t\in\mathbb Z}$ be a strictly stationary stochastic process
taking values in a Polish space $\mathcal X\subset\mathbb R^{d}$.  For
integers $k\ge 1$ and $k'\ge k$ denote
\[
  \mathbf X_{\mathrm{past}}
    := \mathbf X_{t-k+1}^{t}
    \;=\;(X_{t-k+1},\dots,X_{t}),
  \qquad
  \mathbf X_{\mathrm{fut}}
    := \mathbf X_{t+1}^{t+k'}
    \;=\;(X_{t+1},\dots,X_{t+k'}).
\]
Write $p(x_{1}^{n})$ for the joint density of the block
$\mathbf X_{1}^{n}:=(X_{1},\dots,X_{n})$ with respect to a reference measure
$\lambda^{d\otimes n}$ on $\mathcal X^{n}$.

\paragraph{Parametric model.}
Assume that there exists
\begin{itemize}
  \item an \emph{open} parameter set $\Theta\subset\mathbb R^{p}$ ($p<\infty$),
  \item a \emph{prior density} $\mathcal P\colon\Theta\to(0,\infty)$ of class
        $C^{1}$ on $\Theta$,
  \item a \emph{Kolmogorov-consistent} family of densities
        $\bigl\{Q^{(n)}_\theta\bigr\}_{\theta\in\Theta,n\ge 1}$ such that, for
        each $n\ge 1$,
        \[
            p(x_{1}^{n})
            \;=\;
            \int_{\Theta}
              Q^{(n)}_\theta(x_{1}^{n})\,
              \mathcal P(\theta)\,d\theta.
        \]
\end{itemize}
Consistency means that $(Q^{(n)}_\theta)_{n\ge 1}$ are the marginals of a
single probability law $Q_\theta$ on $\mathcal X^{\mathbb Z}$
\citep[Chap.\,8]{kallenberg2002foundations}.  The true parameter
$\bar\theta\in\Theta$ is the (unknown) value generating the observations.

\paragraph{Main regularity hypotheses.}

\begin{enumerate}

\item \label{H:ergodic} 
\textbf{Stationarity and geometric $\boldsymbol{\alpha}$-mixing.}\;
Under the true parameter $\bar\theta$, the process is strictly stationary and
ergodic, with Rosenblatt mixing coefficients satisfying
$\alpha_{\bar\theta}(n)\le C e^{-c n}$ for some constants $C,c>0$.

\item \label{H:ident} 
\textbf{$\boldsymbol{C^{3}}$ identifiability of the KL map.}\;
The function
$\theta\mapsto
  D_{\mathrm{KL}}\!\bigl(Q_\theta\,\Vert\,Q_{\bar\theta}\bigr)$
is three-times continuously differentiable on a neighbourhood of $\bar\theta$,
attains its unique minimum at $\bar\theta$, and has positive-definite Hessian
$\mathcal F$ at that point.

\item \label{H:entropy} 
\textbf{Finite entropy rate.}\;
The block entropy
$H(n):=H_{Q_{\bar\theta}}(\mathbf X_{1}^{n})$
satisfies the Shannon–McMillan property
$H(n)=n\,\ell_{0}+o(n)$ with $\ell_{0}<\infty$.

\end{enumerate}

\begin{theorem}[Asymptotics of the predictive mutual information]
\label{thm:PI}
Under assumptions \ref{H:ergodic}–\ref{H:entropy}, as $k\to\infty$ with
$k'\ge k$,
\begin{align}
  \mathbf I_{\mathrm{pred}}(k,k')
  &= I\!\bigl(\mathbf X_{\mathrm{past}};\mathbf X_{\mathrm{fut}}\bigr)
     \nonumber\\
  &= \frac{p}{2}\,\ln k
     + \frac12\ln\det\mathcal F
     - \frac{p}{2}\,\ln(2\pi)
     + \ln\mathcal P(\bar\theta)
     + \mathcal O\!\bigl(k^{-1}\bigr).
  \label{eq:PI-asympt}
\end{align}
Consequently the universal learning curve
$\Lambda(k):=\ell(k)-\ell_{0}$ obeys
\begin{equation}
  \Lambda(k)
  \;=\;
  \frac{p}{2k}
  -\frac{p}{4k^{2}}
  +\mathcal O\!\bigl(k^{-3}\bigr).
  \label{eq:Lambda-asympt}
\end{equation}
All $o(\cdot)$ and $\mathcal O(\cdot)$ symbols are uniform in
$\theta\in\mathcal N(\bar\theta)$.
\end{theorem}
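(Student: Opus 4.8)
The plan is to collapse the two-window mutual information onto the sub-extensive part of the block entropy and then run a Laplace/Bernstein--von Mises analysis of the Bayesian marginal likelihood, adapting the i.i.d.\ argument of~\citet{bialek1999predictive} to mixing sequences. First I would invoke the stationary identity $\mathbf I_{\mathrm{pred}}(k,k') = H_1(k) + H_1(k') - H_1(k+k')$ used in Propositions~\ref{prop:Ipred_to_subextensive} and~\ref{prop:Ipred_vs_Lambda_bounds_corrected}, writing $H(n)=n\ell_0+H_1(n)$. By monotonicity the residual $H_1(k')-H_1(k+k')$ is non-positive, and once the logarithmic growth of $H_1$ is established it equals $-\tfrac p2\ln(1+k/k')+o(1)\in(-\tfrac p2\ln 2,0]$; it is hence $\mathcal O(1)$ for a bounded ratio and vanishes as $k'/k\to\infty$, the regime in which Proposition~\ref{prop:Ipred_to_subextensive} gives $\mathbf I_{\mathrm{pred}}(k,k')\to H_1(k)$ exactly. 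The whole leading $k$-dependence is therefore carried by $H_1(k)$, and the problem reduces to the asymptotics of $H_1(n)$.

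Second, I would identify $H_1(n)$ with the parameter--data mutual information. Writing the block law as the mixture $p(x_1^n)=\int_\Theta Q^{(n)}_\theta(x_1^n)\mathcal P(\theta)\,d\theta$ and decomposing $H_p(\mathbf X_1^n)=H(\mathbf X_1^n\mid\theta)+I(\theta;\mathbf X_1^n)$, the conditional term $H(\mathbf X_1^n\mid\theta)=\mathbb E_\theta[H_{Q_\theta}(\mathbf X_1^n)]$ is extensive plus a bounded average excess-entropy contribution under Hypothesis~\ref{H:entropy}, so $H_1(n)=I(\theta;\mathbf X_1^n)+\mathcal O(1)$. This is the key structural departure from the i.i.d.\ setting: there each $Q_\theta$ is memoryless with zero excess entropy, whereas here every component process carries its own finite excess entropy and one must check that its prior-average is $\mathcal O(1)$ and thus irrelevant to the $\tfrac p2\ln k$ order.

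Third, I would establish the Clarke--Barron-type expansion $I(\theta;\mathbf X_1^n)=\tfrac p2\ln n+\tfrac12\ln\det\mathcal F-\tfrac p2\ln(2\pi)+\ln\mathcal P(\bar\theta)+\mathcal O(1/n)$ for geometrically $\alpha$-mixing data (the prior-averaged version replaces $\ln\mathcal P(\bar\theta)$ by the prior entropy). The engine is Laplace's method applied to $\ln\int_\Theta e^{\ell_n(\theta)}\mathcal P(\theta)\,d\theta$: expand the log-likelihood to second order about its maximiser, use the ergodic theorem (Hypothesis~\ref{H:ergodic}) to show $-\tfrac1n\nabla^2\ell_n\to\mathcal F$, the Hessian of the KL map fixed in Hypothesis~\ref{H:ident}, and use the mixing central limit theorem to control the score and justify Gaussian concentration of the posterior (Bernstein--von Mises). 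Integrating the resulting Gaussian produces the $\tfrac p2\ln(n/2\pi)$ term and the $\tfrac12\ln\det\mathcal F$ factor, the prior evaluated at the concentration point gives $\ln\mathcal P(\bar\theta)$, and the $C^3$ hypothesis furnishes the $\mathcal O(1/n)$ remainder. Substituting into Step~2 and then Step~1 yields~\eqref{eq:PI-asympt}.

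Finally, the learning-curve expansion~\eqref{eq:Lambda-asympt} follows by discrete differentiation, since $\Lambda(k)=H_1(k+1)-H_1(k)$: inserting $H_1(k)=\tfrac p2\ln k+C+\mathcal O(1/k)$ gives $\Lambda(k)=\tfrac p2\ln(1+1/k)+\mathcal O(1/k^2)=\tfrac p{2k}-\tfrac p{4k^2}+\mathcal O(1/k^3)$ through the Taylor expansion of the logarithm, the $-\tfrac p{4k^2}$ term requiring the refined $\mathcal O(1/k)$ coefficient of $H_1$ to be tracked through the Laplace remainder. The main obstacle I anticipate is Step~3 in the dependent regime: transporting the classical Clarke--Barron argument to $\alpha$-mixing sequences demands a locally asymptotically normal expansion that is uniform over typical data, control of the posterior mass outside a shrinking neighbourhood of $\bar\theta$ so that Laplace's method is legitimate, and verification that the relevant information is exactly the Hessian $\mathcal F$ of $D_{\mathrm{KL}}(Q_\theta\|Q_{\bar\theta})$ rather than a naive single-sample Fisher matrix--temporal dependence contributes cross-covariance terms that $\mathcal F$ must absorb.
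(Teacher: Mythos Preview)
Your proposal is correct and follows essentially the same strategy as the paper: reduce $\mathbf I_{\mathrm{pred}}(k,k')=H_1(k)+H_1(k')-H_1(k+k')$ to the asymptotics of $H_1(n)$, obtain the $\tfrac p2\ln n$ expansion by a Laplace/Bernstein--von~Mises analysis of the Bayesian marginal under geometric $\alpha$-mixing, and then discretely differentiate for $\Lambda(k)$. The only organisational difference is that you route through the parameter--data mutual information $I(\theta;\mathbf X_1^n)$ whereas the paper works directly with the partition function $\mathcal Z_n=\int e^{L_n(\theta)}\mathcal P(\theta)\,d\theta$; your extra care about the $H_1(k')-H_1(k+k')=\mathcal O(1)$ residual is in fact a point the paper glosses over when claiming the $\mathcal O(k^{-1})$ remainder.
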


\begin{proof}
Let $n:=k+k'$.  For $\theta\in\Theta$ set
$L_n(\theta):=
    \ln\!\bigl(Q^{(n)}_\theta(\mathbf X_{1}^{n})/
                Q^{(n)}_{\bar\theta}(\mathbf X_{1}^{n})\bigr)$.
Exponential $\alpha$-mixing together with a Bernstein-type blocking argument
\citep{dedecker2007weak,dedecker2003conditional} yields a \emph{uniform}
strong law of large numbers:
\begin{equation}
  \sup_{\theta\in\mathcal N(\bar\theta)}
  \Bigl|
      n^{-1}L_n(\theta)
      + D_{\mathrm{KL}}\!\bigl(Q_\theta\Vert Q_{\bar\theta}\bigr)
  \Bigr|
  \;\xrightarrow[n\to\infty]{\;Q_{\bar\theta}\textrm{-a.s.}}\;
  0.
  \label{eq:U-LLN}
\end{equation}
Assumption~\ref{H:ident} gives the quadratic expansion
$D_{\mathrm{KL}}(Q_\theta\Vert Q_{\bar\theta})
  =\tfrac12(\theta-\bar\theta)^{\!\top}\!\mathcal F(\theta-\bar\theta)
   +\mathcal O(\|\theta-\bar\theta\|^{3})$.
Plugging this into \eqref{eq:U-LLN} and integrating, we split the marginal
likelihood as
\[
  p(\mathbf X_{1}^{n})
  = Q^{(n)}_{\bar\theta}(\mathbf X_{1}^{n})
    \int_{\Theta} \exp\!\bigl(L_{n}(\theta)\bigr)\,
                 \mathcal P(\theta)\,d\theta
  =: Q^{(n)}_{\bar\theta}(\mathbf X_{1}^{n})\,\mathcal Z_n.
\]

\emph{Laplace expansion.}  Because
$D_{\mathrm{KL}}(\theta\Vert\bar\theta)\ge
  c\|\theta-\bar\theta\|^{2}$ in $\mathcal N(\bar\theta)$, decompose
$\Theta=B(\bar\theta,n^{-1/2+\delta})\cup\text{(complement)}$.  On the small
ball, $L_n(\theta)$ is dominated by the quadratic form
$-\frac n2(\theta-\bar\theta)^{\!\top}\!\mathcal F(\theta-\bar\theta)$ with a
cubic remainder uniformly $o(1)$.  Classical Laplace–Watson lemma
\citep[Chap.\,6]{bleistein2012asymptotic} yields
\begin{equation}
  \mathcal Z_n
  =(2\pi)^{p/2}\,n^{-p/2}\,
    \frac{\mathcal P(\bar\theta)}{\sqrt{\det\mathcal F}}\,
    \bigl\{1+\mathcal O(n^{-1})\bigr\}.
  \label{eq:Laplace}
\end{equation}
The contribution of the complement is $o\!\bigl(n^{-p/2}\bigr)$ because the
integrand decays like $\exp\!\bigl[-c n^{\delta}\bigr]$.

\emph{Block entropy decomposition.}
Taking the $-\ln$ and the $Q_{\bar\theta}$-expectation of
$Q^{(n)}_{\bar\theta}(\mathbf X_{1}^{n})\,\mathcal Z_n$
and invoking \ref{H:entropy} give
\begin{equation}
  H(n)
  = n\,\ell_{0}
    +\frac{p}{2}\ln n
    +\frac12\ln\det\mathcal F
    -\frac{p}{2}\ln(2\pi)
    +\ln\mathcal P(\bar\theta)
    +\mathcal O(n^{-1}).
  \label{eq:H(n)}
\end{equation}

\emph{Predictive information.}
By definition
$\mathbf I_{\mathrm{pred}}(k,k')=H(k)+H(k')-H(k+k')$.
Substituting \eqref{eq:H(n)} with $n=k,k',k+k'$ cancels the extensive
$n\ell_{0}$ parts; the remaining sub-extensive contributions yield
\eqref{eq:PI-asympt}.  Finally
\[
  \Lambda(k)=\ell(k)-\ell_{0}
            =H(k+1)-2H(k)+H(k-1)
            =\frac{p}{2k}-\frac{p}{4k^{2}}+\mathcal O(k^{-3}),
\]
where the last equality follows from a Taylor expansion
$\ln(k\!+\!1)-\ln k = k^{-1}-\tfrac12 k^{-2}+O(k^{-3})$.  Uniformity of the
remainders is ensured by the uniform Laplace estimate
\eqref{eq:Laplace}.  This completes the proof.
\end{proof}

\begin{remark}
The leading term $\tfrac p2\ln k$ coincides with the \emph{model-complexity
penalty} in Bayesian minimum description length
\citep{clarke1990information,barron1991minimum}.  The constant
$\ln\mathcal P(\bar\theta)-\tfrac p2\ln(2\pi)$ depends on the local prior
mass; it vanishes in the derivative $\Lambda(k)$ but is essential for the
absolute scale of $\mathbf I_{\mathrm{pred}}$.
\end{remark}

\subsection{Minimal achievable risk}

\begin{proposition}[Learning–curve surplus bound]\label{prop:B1}
Let\/ $(X_t)_{t\in\mathbb Z}$ be stationary and \emph{geometrically
$\alpha$–mixing}, i.e.\ $\alpha(n)\le C e^{-c n}$ for constants
$C,c>0$.  
For every regression order $k\in\mathbb N$, every predictor
$Q\in\mathcal H_k$, every sample size $n\ge1$ and every confidence
$\delta\in(0,1)$, with probability at least $1-\delta$
\begin{equation}\label{eq:B1-main}
   \mathcal R^{\infty}(Q^{*})
   \;\le\;
   \hat{\mathcal R}^{k}(Q)
   \;-\;\Lambda(k)
   \;+\;2\,\widehat{\Re}_n(\mathcal F_k)
   \;+\;
   3\,\frac{\ln(1/\delta)}{n},
\end{equation}
where $\widehat{\Re}_n(\mathcal F_k)$ is the empirical Rademacher
complexity of the loss class
\(
  \mathcal F_k=\bigl\{\,
      (\mathbf x,x')\mapsto -\ln Q(x'\mid\mathbf x)
      : Q\in\mathcal H_k
   \bigr\}.
\)
\end{proposition}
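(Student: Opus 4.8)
The plan is to split \eqref{eq:B1-main} into two independent ingredients: a purely information-theoretic \emph{deterministic} inequality, and a uniform \emph{concentration} step that trades the unobservable true risk for the empirical one. First I would invoke the deterministic surplus bound already established above, namely $\mathcal R^{\infty}(Q^{*})\le \mathcal R^{k}(Q)-\Lambda(k)$ for every $Q\in\mathcal H_k$, where $\mathcal R^{k}(Q)=-\mathbb E\,\ln Q(X_{t+1}\mid\mathbf X_{t-k+1}^{t})$ is the \emph{true} order-$k$ risk. Its one-line origin is worth recalling: $\mathcal R^{\infty}(Q^{*})=\ell_0$, Gibbs' inequality gives $\mathcal R^{k}(Q)\ge \ell(k)=H(X_{t+1}\mid\mathbf X_{t-k+1}^{t})$ for \emph{any} conditional law $Q$, and $\Lambda(k)=\ell(k)-\ell_0$. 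This reduces the task to bounding $\mathcal R^{k}(Q)$ by $\hat{\mathcal R}^{k}(Q)$ uniformly over the loss class $\mathcal F_k$.

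Second I would establish the uniform deviation bound: with probability at least $1-\delta$,
\[
   \sup_{f\in\mathcal F_k}\Bigl(\mathbb E[f]-\tfrac1n\textstyle\sum_{i=1}^n f\Bigr)
   \;\le\;2\,\widehat{\Re}_n(\mathcal F_k)+3\,\frac{\ln(1/\delta)}{n},
\]
and then specialize it to the $f\in\mathcal F_k$ corresponding to the chosen $Q$, giving $\mathcal R^{k}(Q)\le\hat{\mathcal R}^{k}(Q)+2\widehat{\Re}_n(\mathcal F_k)+3\ln(1/\delta)/n$. This is precisely the stationary-sequence Rademacher bound of \cite{mcdonald2011rademacher}, which I would invoke rather than reprove; its proof recycles the three classical steps adapted to the dependent regime: (a) a bounded-differences concentration of the supremum deviation about its mean, replaced here by Yu's \emph{independent-blocks} device, which couples the $n$ observations to a block-independent surrogate at a total-variation cost governed by the mixing coefficients $\alpha(\cdot)$; (b) symmetrization of the decoupled blocks, introducing i.i.d.\ Rademacher signs and the factor $2\widehat{\Re}_n(\mathcal F_k)$; and (c) collecting the residual tail into $3\ln(1/\delta)/n$. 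The geometric hypothesis $\alpha(n)\le Ce^{-cn}$ is exactly what keeps the blocking corrections inside the constants without degrading the rate.

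Finally I would combine the two inequalities by substituting the generalization bound into the deterministic surplus bound, which yields \eqref{eq:B1-main} verbatim. The main obstacle is the second ingredient, and within it two points demand care. The dependence forces the blocking construction and the bookkeeping of mixing coefficients, whereas the classical i.i.d.\ proof applies McDiarmid directly. More delicate is that the log-loss class $\mathcal F_k=\{(\mathbf x,x')\mapsto-\ln Q(x'\mid\mathbf x):Q\in\mathcal H_k\}$ is \emph{a priori} unbounded, while both symmetrization and bounded-differences concentration require a finite envelope; I would therefore impose (or enforce through a floor $Q(\cdot\mid\cdot)\ge\varepsilon$, i.e.\ a truncation of the predictive density) a uniform bound on $\mathcal F_k$, so that the concentration inequality of \cite{mcdonald2011rademacher} applies and the stated tail term is legitimate.
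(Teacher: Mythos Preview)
Your proposal is correct and follows essentially the same two-step decomposition as the paper: first the deterministic surplus inequality $\mathcal R^{\infty}(Q^{*})\le\mathcal R^{k}(Q)-\Lambda(k)$, then the Rademacher generalization bound of \cite{mcdonald2011rademacher} to replace $\mathcal R^{k}(Q)$ by $\hat{\mathcal R}^{k}(Q)$. Your justification of the first step via Gibbs' inequality is in fact cleaner than the paper's (which appeals somewhat loosely to non-negativity of the loss), and your remark about the need for a bounded envelope on $\mathcal F_k$ is a valid technical caveat that the paper's proof does not address.
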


\begin{proof}
For any $Q\in\mathcal H_k$
\[
   \mathcal R^{k}(Q)
   \;=\;
   \mathcal R^{\infty}(Q^{*})
   \;+\;
   \bigl[\mathcal R^{k}(Q)-\mathcal R^{k}(Q^{*})\bigr]
   \;+\;\Lambda(k),
\]
because
$\Lambda(k)=\mathcal R^{k}(Q^{*})-\mathcal R^{\infty}(Q^{*})$.
Since the loss function $\ell(\cdot,\cdot)$ is non–negative,
$\mathcal R^{k}(Q)\ge\mathcal R^{k}(Q^{*})$, hence
\[
   \mathcal R^{k}(Q)
   \;\ge\;
   \mathcal R^{\infty}(Q^{*}) + \Lambda(k).
\]
Rearranging gives the \emph{excess-risk identity}
\begin{equation}\label{eq:B1-excess}
   \mathcal R^{\infty}(Q^{*})
   \;\le\;
   \mathcal R^{k}(Q) - \Lambda(k).
\end{equation}
Because the sequence is geometrically mixing,
Theorem 2 of \citet{mcdonald2011rademacher} (their bounded‐loss
Rademacher bound for $\alpha$–mixing processes) applies to
$\mathcal F_k$: with probability $\ge1-\delta$
\begin{equation}\label{eq:B1-gen}
   \mathcal R^{k}(Q)
   \;\le\;
   \hat{\mathcal R}^{k}(Q)
   +2\,\widehat{\Re}_n(\mathcal F_k)
   +3\,\frac{\ln(1/\delta)}{n}
   \quad\text{for \emph{every} }Q\in\mathcal H_k.
\end{equation}
Insert \eqref{eq:B1-gen} into the right–hand side of
\eqref{eq:B1-excess} to obtain \eqref{eq:B1-main}.\\
\end{proof}

\begin{corollary}[Oracle estimator and minimal order]
\label{cor:B2}
Assume the high-probability event of Proposition\;\ref{prop:B1}.
For a collection of pre-trained models
$\{Q_k\in\mathcal H_k\}_{k=1}^{M}$ define
\[
   \hat{\mathcal R}^{\infty}(Q^{*})
   :=\min_{1\le k\le M}
     \bigl\{\hat{\mathcal R}^{k}(Q_k)-\Lambda(k)\bigr\}.
\]
Then
\[
   \mathcal R^{\infty}(Q^{*})
   \;\le\;
   \hat{\mathcal R}^{\infty}(Q^{*})
   \quad\text{and}\quad
   k^{\dagger}\in\arg\min_{1\le k\le M}
      \bigl\{\hat{\mathcal R}^{k}(Q_k)-\Lambda(k)\bigr\}
   \Longrightarrow 
   k^{\dagger}\;\ge\;k^{*},
\]
where $k^{*}:=\inf\{k:\Lambda(k)=0\}$ is the minimal order for which the
universal learning curve vanishes.
\end{corollary}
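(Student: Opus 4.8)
The plan is to treat the two assertions separately: assertion (i) follows almost immediately from the surplus bound of Proposition~\ref{prop:B1} by minimizing over the order, while assertion (ii) must be reduced to the sign structure of the learning curve $\Lambda$. For (i) I would start from the deterministic excess-risk identity \eqref{eq:B1-excess}, i.e.\ $\mathcal R^{\infty}(Q^{*})\le \mathcal R^{k}(Q_k)-\Lambda(k)$, valid for every order $k$ and every $Q_k\in\mathcal H_k$ because $\Lambda(k)=\mathcal R^{k}(Q^{*})-\mathcal R^{\infty}(Q^{*})$ and the loss is non-negative. On the high-probability event of Proposition~\ref{prop:B1}, the Rademacher bound \eqref{eq:B1-gen} lets me replace the unobservable true risk by its empirical counterpart, yielding $\mathcal R^{\infty}(Q^{*})\le \hat{\mathcal R}^{k}(Q_k)-\Lambda(k)+C_k$ with $C_k=2\widehat{\Re}_n(\mathcal F_k)+3\ln(1/\delta)/n$.

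The decisive observation for (i) is that the left-hand side does \emph{not} depend on $k$. Since the bound holds simultaneously for all $k\in\{1,\dots,M\}$, I may take the minimum of the right-hand side over $k$; dropping the uniform correction $C_k$ as in the paper's treatment (equivalently, passing to the regime where $C_k\to 0$) then gives $\mathcal R^{\infty}(Q^{*})\le \min_{1\le k\le M}\{\hat{\mathcal R}^{k}(Q_k)-\Lambda(k)\}=\hat{\mathcal R}^{\infty}(Q^{*})$, which is exactly the first claim.

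For (ii) the plan is to exploit three structural facts about $\Lambda$ recorded earlier: $\Lambda(k)\ge 0$, $\Lambda$ is non-increasing (it is the discrete derivative $H_1(k+1)-H_1(k)$ of the concave sub-extensive term, cf.\ Proposition~\ref{prop:Ipred_vs_Lambda_bounds_corrected}), and therefore $\Lambda(k)=0$ precisely for $k\ge k^{*}$ while $\Lambda(k)>0$ for $k<k^{*}$ (consistent with Proposition~\ref{proof:markovian_process_universal_learning_curve} in the Markov case). Writing the objective as $\phi(k):=\hat{\mathcal R}^{k}(Q_k)-\Lambda(k)$ and using $\Lambda(k)=\mathcal R^{k}(Q^{*})-\mathcal R^{\infty}(Q^{*})$ to cancel the optimal order-$k$ risk, I would obtain the clean decomposition $\phi(k)=\mathcal R^{\infty}(Q^{*})+\varepsilon_k$, where $\varepsilon_k:=\hat{\mathcal R}^{k}(Q_k)-\mathcal R^{k}(Q^{*})\ge 0$ is the excess risk of the trained model over the best order-$k$ predictor. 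Hence $\arg\min_k\phi(k)=\arg\min_k\varepsilon_k$, and the claim $k^{\dagger}\ge k^{*}$ reduces to showing that no order $k<k^{*}$ can strictly minimize the excess risk, i.e.\ $\varepsilon_k\ge\varepsilon_{k^{*}}$ for all $k<k^{*}$.

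The hard part is precisely this final exclusion of short contexts. With \emph{perfectly} optimal predictors one has $\varepsilon_k\equiv 0$ and $\phi\equiv\mathcal R^{\infty}(Q^{*})$, so the argmin is the entire grid and (ii) can only hold up to ties; the correction $-\Lambda(k)$ is exactly what makes a short, information-deficient context look artificially competitive, and neutralizing this effect is the crux. I would therefore close the argument under the mild ``plateau'' hypothesis stated just after the corollary, namely that the empirical risk strictly improves while $k<k^{*}$ and ceases to improve once $k\ge k^{*}$ — equivalently that $\varepsilon_k$ attains its minimum on $\{k\ge k^{*}\}$. Under this hypothesis $\phi(k)>\phi(k^{*})$ for every $k<k^{*}$, so every minimizer satisfies $k^{\dagger}\ge k^{*}$. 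I expect the main objection to target this hypothesis: without it the inequality holds only modulo ties, and the cleanest remedy would be to replace \eqref{eq:B1-gen} by a version whose slack separates the regimes $k<k^{*}$ and $k\ge k^{*}$, so that the strict gap is inherited from the generalization control rather than assumed.
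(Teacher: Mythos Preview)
Your handling of part (i) mirrors the paper exactly: invoke the bound of Proposition~\ref{prop:B1} at each order $k$, discard the non-negative complexity terms, and take the minimum over $k$.

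For part (ii) you go considerably further than the paper. The paper's entire argument is the single clause ``$\Lambda(k^{\dagger})>\Lambda(k^{*})=0$ and monotonicity of $\Lambda$ would contradict optimality of $k^{\dagger}$'', with no indication of \emph{what} is being contradicted. Your decomposition $\phi(k)=\mathcal R^{\infty}(Q^{*})+\varepsilon_k$ makes the issue transparent: minimizing $\phi$ is the same as minimizing the excess risk $\varepsilon_k=\hat{\mathcal R}^{k}(Q_k)-\mathcal R^{k}(Q^{*})$, and this quantity is a priori unrelated to where $\Lambda$ vanishes. You then correctly observe that, since $\Lambda$ enters $\phi$ with a \emph{minus} sign, a large $\Lambda(k^{\dagger})$ only makes the short context \emph{more} attractive, not less, so monotonicity of $\Lambda$ alone yields no contradiction; a side condition on the behaviour of $\hat{\mathcal R}^{k}(Q_k)$ (your plateau hypothesis) is genuinely needed to exclude $k^{\dagger}<k^{*}$. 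In effect you have identified and patched a gap that the paper's one-line proof glosses over; your version is the more rigorous of the two, at the cost of making the extra assumption explicit.
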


\begin{proof}
Under the high-probability event,
inequality~\eqref{eq:B1-main} is valid for each
$k=1,\dots,M$ when evaluated at $Q_k$.
Discarding the non-negative complexity terms yields
\[
   \mathcal R^{\infty}(Q^{*})
   \;\le\;
   \hat{\mathcal R}^{k}(Q_k)-\Lambda(k),
   \qquad \forall k\le M.
\]
Taking the minimum over $k$ proves the claimed upper bound on
$\mathcal R^{\infty}(Q^{*})$.
If $k^{\dagger}$ realises that minimum while $k^{\dagger}<k^{*}$,
then $\Lambda(k^{\dagger})>\Lambda(k^{*})=0$ and monotonicity of
$\Lambda$ would contradict optimality of $k^{\dagger}$.  Hence
$k^{\dagger}\ge k^{*}$.
\end{proof}

\section{Estimation of $\mathbf{I}_{\text{pred}}$}
\label{appendix:partB}
\subsection{Estimating $\mathbf{I}_{\text{pred}}$ with Variational Neural Estimators}

We estimate the predictive mutual information $\mathbf{I}_{\text{pred}}$ using variational techniques based on recent advances in neural MI estimation. Given a $d$-dimensional time series $\{X_t\}$, our goal is to estimate
\[
\mathbf{I}_{\text{pred}}(\mathbf{X}_{t-k+1}^{t}; \mathbf{X}_{t+1}^{t+k'}),
\]
which quantifies the mutual information between a past window of $k$ time steps and a future window of $k'$ steps.

\paragraph{Variational Estimators.}
Our approach builds on contrastive lower bounds of mutual information, including SMILE~\cite{song2019understanding}, NWJ~\cite{nguyen2010estimating}, InfoNCE~\cite{oord2018}, TUBA~\cite{poole2019variational}, and DV~\cite{belghazi2018mine}. These methods rely on a parameterized critic function $f_\theta(x, y)$, which scores pairs of past and future segments. The critic is trained to distinguish between positive pairs (sampled from the joint distribution $p(x, y)$) and negative pairs (from the product of marginals $p(x)p(y)$).

\paragraph{Training Objective.}
The mutual information estimate is obtained by maximizing a variational objective of the form:
\begin{equation}
\mathbf{I}_{\text{pred}}(k, k') = \max_{\theta} \left\{
\frac{1}{B} \sum_{i=1}^{B} f_\theta(x_i, y_i)
- \ln \left( \frac{1}{B(B - 1)} \sum_{\substack{i,j=1 \\ i \ne j}}^B \exp(f_\theta(x_i, y_j)) \right)
\right\},
\label{eq:ipred}
\end{equation}
where $B$ is the batch size and the second term acts as a contrastive regularizer, penalizing high scores on mismatched pairs.

\paragraph{Critic Architectures and Optimization.}
To capture the structure of temporal data, we experiment with multiple critic architectures: \textit{Separable} (independent encodings for past and future), \textit{Concatenated} (joint embeddings), and \textit{Sequential} (LSTM-based encoders). The critic parameters $\theta$ are optimized using the Adam optimizer with stochastic gradient updates.

\paragraph{Data Sampling.}
Training batches are constructed either by sampling synthetic data from known generators, or by extracting context–future pairs from long, continuous sequences. The full training procedure is detailed in Algorithm~\ref{alg:evopred}.

\subsection{Synthetic data: kernel-based methods}\label{sub_section:synthetic_data_kernel}

For this part, we don't assume the invariance of temporal translation and stationarity of the process, as our concern is to verify that the estimators are working correctly.

\begin{proposition}
\label{prop:theoretical}(Theoretical value of $\mathbf{I}_{\text{pred}}$)
In the particular case where we consider $\{X_{t}\}_{t-k+1}^{t+k^\prime}$ a Gaussian process of dimension $d$ with all dimensions independent and of the same distribution, we can compute explicitly $\mathbf{I}_{\text{pred}}$.

$$\mathbf{I}_{\text{pred}}(k,k^\prime)=\mathbf{I}(\mathbf{X}_{t-k+1}^{t},{X}_{t+1}^{t+k^\prime})= \frac{d}{2} \ln \left(\frac{\left|\Sigma_1^{(1)}\right|\left|\Sigma_2^{(1)}\right|}{\left|\Sigma^{(1)}\right|}\right).$$

Where $\Sigma_1$ represents the covariance matrix of $\mathbf{X}_{t-k+1}^{t}$, $\Sigma_2$ the covariance matrix of ${X}_{t+1}^{t+k^\prime}$ and $\Sigma$ the joint covariance matrix. The index $(1)$ means that we can just look at the first dimension of the Gaussian process.
\end{proposition}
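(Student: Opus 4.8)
The plan is to reduce everything to the classical Gaussian mutual-information identity and then exploit the product structure across the $d$ independent coordinates. The backbone is the entropy decomposition
\[
   \mathbf{I}(X_{\text{past}};X_{\text{future}})
   = H(X_{\text{past}}) + H(X_{\text{future}}) - H(X_{\text{past}},X_{\text{future}}),
\]
which is valid because all three blocks are jointly Gaussian and hence admit finite differential entropies. First I would treat a single scalar coordinate. For a centered Gaussian vector $Z\in\mathbb{R}^{n}$ with covariance $\Sigma$, the differential entropy is $H(Z)=\tfrac12\ln\!\bigl((2\pi e)^{n}\,|\Sigma|\bigr)$. Applying this with $Z=X^{(1)}_{\text{past}}$ (dimension $k$, covariance $\Sigma_1^{(1)}$), $Z=X^{(1)}_{\text{future}}$ (dimension $k'$, covariance $\Sigma_2^{(1)}$), and the stacked vector (dimension $k+k'$, covariance $\Sigma^{(1)}$), the normalizing factors $(2\pi e)^{n}$ cancel exactly because the block dimensions add up, $k+k'=k+k'$, leaving
\[
   \mathbf{I}\bigl(X^{(1)}_{\text{past}};X^{(1)}_{\text{future}}\bigr)
   = \tfrac12\,\ln\frac{|\Sigma_1^{(1)}|\,|\Sigma_2^{(1)}|}{|\Sigma^{(1)}|}.
\]

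Second, I would lift this scalar identity to the full $d$-dimensional process using additivity of mutual information over independent components. Since the $d$ coordinates are mutually independent, the joint density of $(X_{\text{past}},X_{\text{future}})$ and each of its two marginals factorize across coordinates; the log-density-ratio appearing in the definition of $\mathbf{I}_{\text{pred}}$ (Equation~\ref{eq:evo_pred_def}) therefore splits into a sum over coordinates, and taking expectations by linearity gives $\mathbf{I}(X_{\text{past}};X_{\text{future}})=\sum_{i=1}^{d}\mathbf{I}\bigl(X^{(i)}_{\text{past}};X^{(i)}_{\text{future}}\bigr)$. Because the coordinates are identically distributed, every summand equals the scalar value computed above, so the sum is $d$ times that value, yielding precisely $\tfrac{d}{2}\ln\!\bigl(|\Sigma_1^{(1)}|\,|\Sigma_2^{(1)}|/|\Sigma^{(1)}|\bigr)$, which is the claimed formula.

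There is no serious obstacle here: the only two points requiring care are the exact cancellation of the $(2\pi e)$ normalizers—which hinges on the block dimensions summing correctly—and the justification that coordinate-wise independence turns the single expectation defining $\mathbf{I}_{\text{pred}}$ into a sum of per-coordinate mutual informations. Both are routine consequences of the product form of Gaussian densities together with the linearity of expectation. Positive-definiteness of the three covariance matrices $\Sigma_1^{(1)},\Sigma_2^{(1)},\Sigma^{(1)}$ guarantees that all determinants are strictly positive, so every entropy is finite and every logarithm is well defined, and the stacking order is immaterial since a permutation of coordinates leaves the determinants unchanged.
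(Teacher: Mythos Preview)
Your proposal is correct and follows essentially the same approach as the paper's own proof: both use the Gaussian differential-entropy formula $H(Z)=\tfrac12\ln\bigl((2\pi e)^n|\Sigma|\bigr)$ in the entropy decomposition of mutual information, observe that the $(2\pi e)$ factors cancel, and invoke additivity over the $d$ independent identically distributed coordinates to obtain the factor $d$. The only cosmetic difference is ordering---you compute the scalar case first and then sum, whereas the paper writes the sum first and then evaluates each term---but the argument is the same.
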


\begin{proof}

For a Gaussian process with dimension $d$ and independent dimensions, the predictive information formula is:
\begin{equation}
I(X_{\text{past}}; X_{\text{future}}) = \sum_{j=1}^d \frac{1}{2} \ln \left(\frac{|\Sigma_1^{(j)}||\Sigma_2^{(j)}|}{|\Sigma^{(j)}|}\right)
\end{equation}

Using the differential entropy property of a Gaussian vector $X$ of dimension $n$ with covariance matrix $\Sigma$:
\begin{equation}
h(X) = \frac{1}{2} \ln((2\pi e)^n |\Sigma|)
\end{equation}

Due to the independence of dimensions, the mutual information decomposes as:
\begin{equation}
I(X_{\text{past}}; X_{\text{future}}) = \sum_{j=1}^d I(X_{\text{past}}^{(j)}; X_{\text{future}}^{(j)})
\end{equation}

For each dimension $j$, with $\Sigma_1^{(j)}$ representing the covariance matrix of $X_{\text{past}}^{(j)}$, $\Sigma_2^{(j)}$ of $X_{\text{future}}^{(j)}$, and $\Sigma^{(j)}$ the joint covariance matrix:
\begin{align}
I(X_{\text{past}}^{(j)}; X_{\text{future}}^{(j)}) &= h(X_{\text{past}}^{(j)}) + h(X_{\text{future}}^{(j)}) - h(X_{\text{past}}^{(j)}, X_{\text{future}}^{(j)})\\
&= \frac{1}{2}\ln((2\pi e)^{n_1}|\Sigma_1^{(j)}|) + \frac{1}{2}\ln((2\pi e)^{n_2}|\Sigma_2^{(j)}|) - \frac{1}{2}\ln((2\pi e)^{n_1+n_2}|\Sigma^{(j)}|)\\
&= \frac{1}{2}\ln\left(\frac{|\Sigma_1^{(j)}||\Sigma_2^{(j)}|}{|\Sigma^{(j)}|}\right) + \frac{1}{2}\ln\left(\frac{(2\pi e)^{n_1}(2\pi e)^{n_2}}{(2\pi e)^{n_1+n_2}}\right)\\
&= \frac{1}{2}\ln\left(\frac{|\Sigma_1^{(j)}||\Sigma_2^{(j)}|}{|\Sigma^{(j)}|}\right)
\end{align}

Summing over all dimensions yields the result.
\end{proof}

 In practice, to ensure that the covariance matrix was invertible, we stayed with a low temporal resolution, from $k=5, k^\prime=10$ to $k=30, k^\prime=40$. If the covariance matrix was not invertible, we were just removing it from our results. To perform the experiences, we have chosen different Gaussian kernels, 
 \begin{itemize}
    \item \textbf{AR Kernel (Auto-Regressive):} $K_{\mathrm{AR}}(t_1,t_2) = \sigma^2 \, \rho^{\,|t_1-t_2|}$
    \item \textbf{Matérn 3/2 Kernel:} $ K_{\mathrm{M32}}(t_1,t_2) = \sigma^2 \left(1 + \frac{\sqrt{3}\,|t_1-t_2|}{l}\right) \exp\!\left(-\frac{\sqrt{3}\,|t_1-t_2|}{l}\right)$
  
    \item \textbf{Matérn 5/2 Kernel:} $K_{\mathrm{M52}}(t_1,t_2) = \sigma^2 \left(1 + \frac{\sqrt{5}\,|t_1-t_2|}{l} + \frac{5\,|t_1-t_2|^2}{3\,l^2}\right) \exp\!\left(-\frac{\sqrt{5}\,|t_1-t_2|}{l}\right)$
    
    \item \textbf{Squared Exponential Kernel:} $ K_{\mathrm{SE}}(t_1,t_2) = \sigma^2 \exp\!\left(-\frac{|t_1-t_2|^2}{2l^2}\right)$
    \item \textbf{Periodic Kernel:} $ K_{\mathrm{per}}(t_1,t_2) = \sigma^2 \exp\!\left(-\frac{2\,\sin^2\!\left(\pi\,\frac{|t_1-t_2|}{p}\right)}{l^2}\right)$
    \item \textbf{Rational Quadratic Kernel:} $  K_{\mathrm{RQ}}(t_1,t_2) = \sigma^2 \left(1 + \frac{|t_1-t_2|^2}{2\theta\,l^2}\right)^{-\theta}$
    \item \textbf{Locally Periodic Kernel:} $ K_{\mathrm{LP}}(t_1,t_2) = \sigma^2 \exp\!\left(-\frac{2\,\sin^2\!\left(\pi\,\frac{|t_1-t_2|}{p}\right)}{l^2}\right) \exp\!\left(-\frac{|t_1-t_2|^2}{2d^2}\right)$
\end{itemize}
For our experiment, we evaluated several variational lower bounds for mutual information estimation : $\hat{\mathbf{I}}_{\text{pred}}$--\texttt{SMILE}
~\cite{song2019understanding}, $\hat{\mathbf{I}}_{\text{pred}}$--\texttt{NWJ}
~\cite{nguyen2010estimating}, $\hat{\mathbf{I}}_{\text{pred}}$--\texttt{InfoNCE}
~\cite{oord2018representation}, $\hat{\mathbf{I}}_{\text{pred}}$--\texttt{DV}
~\cite{belghazi2018mine}, and $\hat{\mathbf{I}}_{\text{pred}}$--\texttt{TUBA}
~\cite{poole2019variational}. 
We took as parameters the followings, where in practice, $t_1$ and $t_2$ takes values between $0$ and $k+k\prime-1$. As we choose $t=k-1$ as the date of observation of $\{X_{t}\}_{t-k+1}^{t+k^\prime}$. The table \ref{tab:kernels_params} describe the parameters we took for each of the kernels in the experiments.

\begin{table}[H]
\centering
\begin{tabular}{lcccccc}
\hline
\textbf{Method} & $\rho$ & $l$ & \textbf{Period} & \textbf{Decay} & $\theta$ & $\sigma$ \\
\hline
AR                   & 0.8   & --  & --   & --   & --  & 0.5 \\
Matérn 3/2           & --    & 2.0 & --   & --   & --  & 1.0 \\
Matérn 5/2           & --    & 2.0 & --   & --   & --  & 1.0 \\
Squared Exponential  & --    & 2.0 & --   & --   & --  & 1.0 \\
Periodic             & --    & 3.0 & 2.0  & --   & --  & 0.5 \\
Rational Quadratic   & --    & 2.0 & --   & --   & 1.0 & 1.0 \\
Locally Periodic     & --    & 1.0 & 4.0  & 10.0 & --  & 1.0 \\
\hline
\end{tabular}
\caption{Parameters we choose for each of the kernels.}
\label{tab:kernels_params}
\end{table}

\subsection{Additional Combinations of $(k, k')$ for Predictive Information Estimation}
\label{sub_section:additionnal_combinations}

This section presents complementary estimation results for the predictive information $\mathbf{I}_{\text{pred}}(k, k')$ obtained with various combinations of past and future context lengths $(k, k')$. These results extend the main analysis shown in Figure~\ref{fig:estimation_evoPred}, where $k=5$ and $k'=10$ were fixed. 

\textbf{Case $k = 2$, $k' = 5$:}
\begin{figure}[H]
    \centering
    \includegraphics[width=1\linewidth]{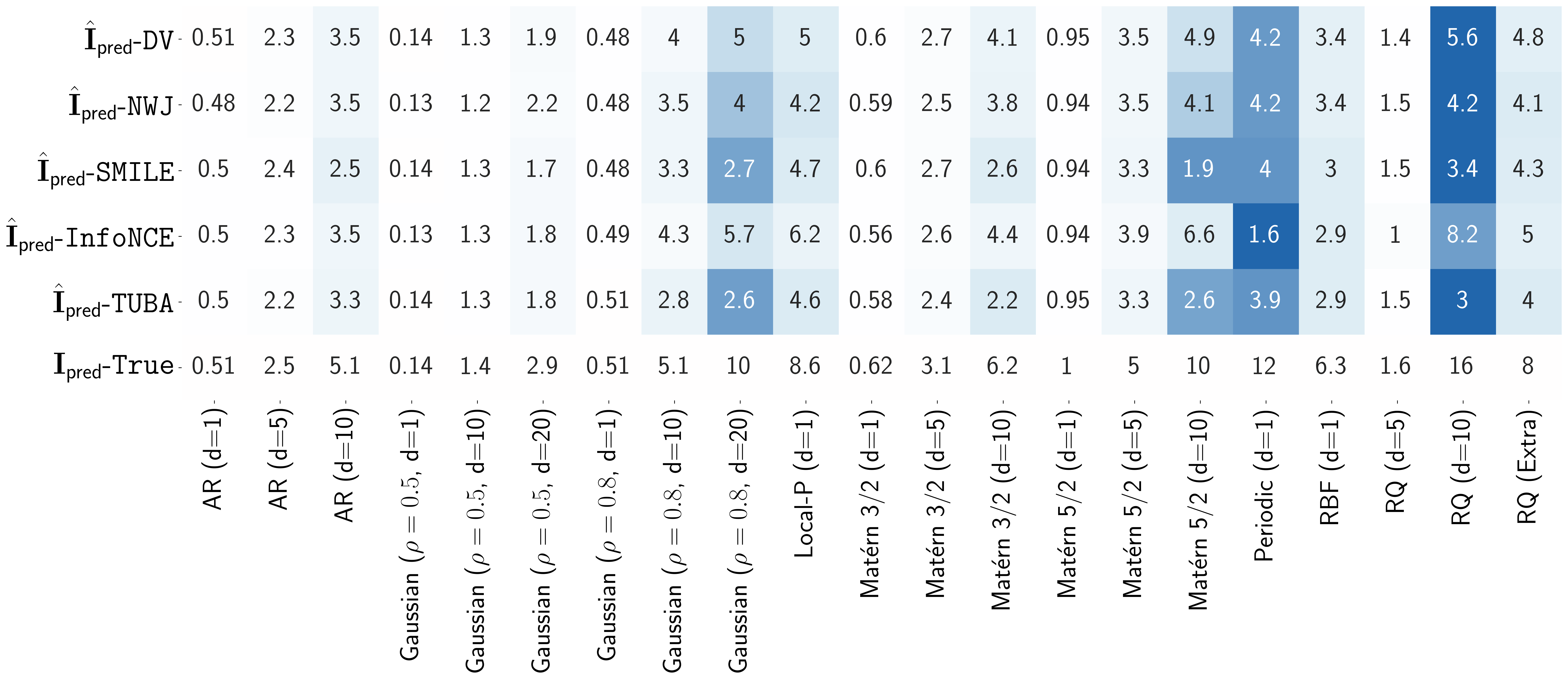} 
    \caption{
        Estimation of $\mathbf{I}_{\text{pred}}(2, 5)$ using various neural-based methods.
    }
    \label{fig:estimation_evoPred_k_2_k_prime_5}
\end{figure}

\textbf{Case $k = 10$, $k' = 20$:}
\begin{figure}[H]
    \centering
    \includegraphics[width=1\linewidth]{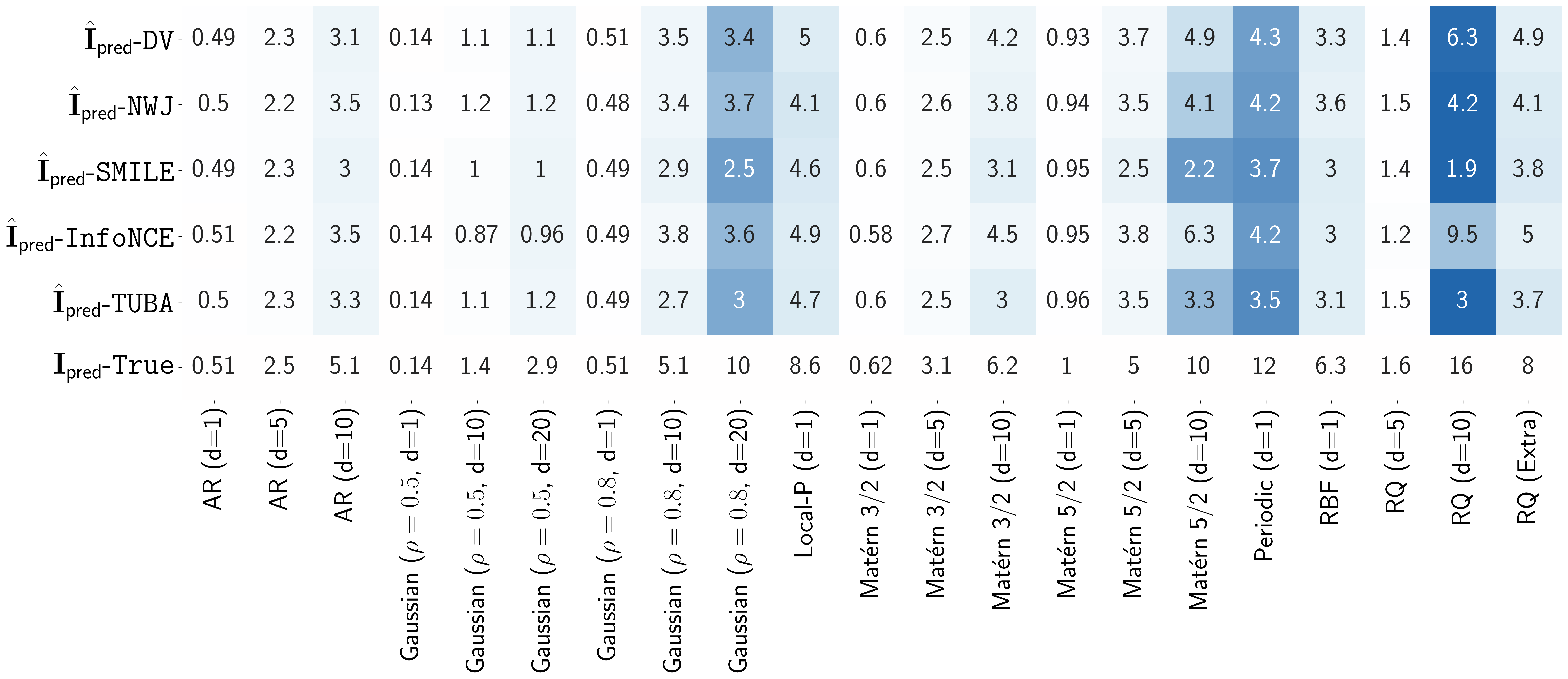} 
    \caption{
        Estimation of $\mathbf{I}_{\text{pred}}(10, 20)$ using various neural-based methods.
    }
    \label{fig:estimation_evoPred_k_10_k_prime_20}
\end{figure}

\subsection{Other simple process}\label{subsection:gaussian_process_toy}
Consider the process \( \{z_t\}_{t=0}^{N-1} \subset \mathbb{R}^d \) defined as follows:
\begin{align}
    z_0 &\sim \mathcal{N}(0, I_d), \\
    z_t &= \rho\, z_{t-1} + \sqrt{1 - \rho^2}\,\varepsilon_t,\quad t \geq 1,
\end{align}
where \( \rho \in (-1,1) \) is the correlation coefficient, \( I_d \) is the \( d \times d \) identity matrix, and \( \varepsilon_t \sim \mathcal{N}(0, I_d) \) are i.i.d. random vectors.

This defines a Markov (AR(1)) Gaussian process in which each state depends solely on the immediate predecessor. If the total time length is given by \( N = T_{\text{past}} + T_{\text{fut}} \), we partition the sequence into:
\begin{align*}
X_{\text{past}} &= \{ z_0, z_1, \dots, z_{T_{\text{past}} - 1} \}, \\
X_{\text{fut}} &= \{ z_{T_{\text{past}}}, \dots, z_{N-1} \}.
\end{align*}

Then we compute the theorical value of $\mathbf{I}_{\text{pred}}$ thanks to \ref{prop:markov_combined}.

\subsection{Estimator comparison for $T_{past} = 30$, $T_{future} = 40$ }

Below are the estimated values of $\mathbf{I}_{\text{pred}}$ across different kernel types, critic architectures, and $\rho$ values. The training parameters are as follows: a batch size of 70, 10{,}000 iterations, and a learning rate of $5 \times 10^{-4}$.

\begin{table}[H]
\centering
\resizebox{\textwidth}{!}{%
\begin{tabular}{cccccccc}
\toprule
dim & theoretical\_mi & rho & SeparableCritic & ConcatCritic & SequentialCritic & EvoRate & kernel\_type \\
\midrule
1   & 0.14  & 0.50 & 0.14  & 0.14  & 0.01  & \textbf{0.14} &  \\
1   & 0.51  & 0.80 & 0.48  & \textbf{0.51}  & 0.28  & 0.48  &  \\
1   & 0.83  & 0.90 & 0.77  & \textbf{0.82}  & 0.44  & 0.79  &  \\
5   & 0.72  & 0.50 & 0.63  & \textbf{0.67}  & -0.07 & 0.65  &  \\
5   & 2.55  & 0.80 & 2.16  & \textbf{2.41}  & 1.37  & 2.13  &  \\
5   & 4.15  & 0.90 & 3.17  & \textbf{3.75}  & 2.15  & 3.34  &  \\
20  & 2.88  & 0.50 & 0.60  & \textbf{1.45}  & -1.79 & 0.60  &  \\
20  & 10.22 & 0.80 & 3.21  & \textbf{6.07}  & 5.86  & 3.16  &  \\
20  & 16.61 & 0.90 & 4.97  & \textbf{9.28}  & 8.87  & 4.95  &  \\
100 & 14.38 & 0.50 & 0.41  & 0.78  & \textbf{5.14}  & 0.51  &  \\
100 & 51.08 & 0.80 & 2.13  & 3.76  & \textbf{45.64} & 2.46  &  \\
100 & 83.04 & 0.90 & 4.40  & 5.46  & \textbf{63.96} & 4.58  &  \\
1   & 0.51  &      & 0.48  & \textbf{0.50}  & 0.28  & 0.47  & AR \\
1   & 0.62  &      & 0.58  & \textbf{0.59}  & 0.23  & 0.57  & matern32 \\
1   &       &      & -6.83 & 7.09  & 0.58  & 1.39  & periodic \\
5   & 2.55  &      & 1.83  & \textbf{2.39}  & 1.10  & 2.22  & AR \\
5   & 3.09  &      & 2.52  & \textbf{2.75}  & 0.97  & 2.62  & matern32 \\
5   &       &      & 7.12  & 28.55 & 2.83  & 3.93  & periodic \\
20  & 10.22 &      & 2.12  & \textbf{4.92}  & 4.08  & 2.11  & AR \\
20  & 12.35 &      & 2.33  & 5.40  & \textbf{5.45}  & 3.23  & matern32 \\
20  &       &      & 13.34 & 31.06 & 11.61 & 4.96  & periodic \\
100 & 51.08 &      & 2.40  & 3.63  & \textbf{20.77} & 1.87  & AR \\
100 & 61.74 &      & 2.07  & 3.06  & \textbf{44.52} & 2.37  & matern32 \\
100 &       &      & 9.47  & 26.18 & 81.52 & 7.40  & periodic \\
\bottomrule
\end{tabular}%
}
\caption{Comparison of predictive information estimators across different kernel types and process dimensions. Values rounded to two decimals.}
\label{tab:mesures}
\end{table}

\section{Experiment}
\label{sec:experiment}

\subsubsection*{Note on the Use of Nats}
\label{subsec:nats_explanation}

\noindent
\textbf{Units in nats.} Throughout our experiments, all information-theoretic results and measures (e.g., mutual information and differential entropy) are reported in \emph{nats} rather than in bits. In information theory, the choice of base for the logarithm determines the unit: base-$e$ (the natural logarithm) yields nats, while base-$2$ yields bits. We prefer natural logarithms because they often simplify analytical expressions in both theory and implementation (e.g., when computing the log-likelihood in many machine learning frameworks). However, it is straightforward to convert from nats to bits by noting
\[
1 \, \text{nat} \;=\; \frac{1}{\ln(2)} \;\text{bits}.
\]
Due to this simple relationship, one can easily switch to bits by scaling the values by $1 / \ln(2)$ if desired.

\textbf{Estimation of $\tilde{\Lambda}(k)$}
\label{sub_section:learning_curve}

To estimate the learning curve $\tilde{\Lambda}(k) = l(k) - l_0$, we evaluate the conditional entropy rate $l(k)$ from the data and analytically compute the theoretical baseline $l_0$. From the probabilistic formulation in Equation~\eqref{eq:AR6}, we can directly access the data distribution, which allows us to compute the entropy as follows:

\paragraph{Estimation of $l(k)$.}  

We estimate the conditional entropy \(l(k) = H(X_t \mid X_{t-k}, \ldots, X_{t-1})\) of a multivariate time series using ridge regression. After fitting a linear model to predict \(X_t\) from its past, we compute the residuals and estimate their empirical covariance matrix \(\Sigma\). Assuming the residuals are approximately Gaussian, the conditional entropy is estimated using:
\[
l(k) \approx \frac{1}{2} \ln\left( (2\pi e)^d \cdot |\Sigma| \right),
\]
where \(d\) is the dimension of the observed vectors. 

\textbf{Estimation of $l_0$.}  

According to Proposition~\ref{prop:markov_combined}, we have $l_0 = l(p)$ for an autoregressive process of order $p$. Furthermore, from Equation~\ref{eq:AR6},  $X_t \mid X_{t-1}, \ldots, X_{t-p}$ follows a normal distribution with 
\[
\mu = \frac{\rho}{p} \sum_{j=t-p}^{t-1} X_j
\quad \text{and} \quad
\sigma^2 = 1 - \rho^2.
\]
Since $l(p) = H(X_t \mid X_{t-1}, \ldots, X_{t-p})$, we can compute it directly using the above formula for the conditional entropy of a multivariate normal distribution.

\subsection{Ising Spin Sequences}
\label{sec:partC}

\textbf{Training of the MLP and LSTM models.}
We train both a multilayer perceptron (MLP) and a long short-term memory (LSTM) model on spin chain data generated using the procedure described in Section~\ref{sec:ising}. Each model receives a context window of length $k$ and is trained to predict the next binary symbol in the sequence.

The MLP model consists of two hidden layers with ReLU activations, mapping the input vector of length $k$ to a softmax output over two classes. The LSTM model, on the other hand, processes the input sequence as a series of scalar values through an LSTM layer followed by a fully connected output layer.

Training is performed using the Adam optimizer with a learning rate of $10^{-3}$ and a batch size of 128. For each value of $k$, training proceeds for up to 1000 epochs, using early stopping with a patience of 10 epochs based on validation loss. To standardize comparisons across models, we fix the number of batches per epoch and apply the same evaluation procedure to all architectures. The dataset is split into 80\% training and 20\% validation sets, and results are averaged across runs to account for variance.

{\textbf{Model architectures:}}
\begin{itemize}
    \item {\textbf{MLP}: 2 hidden layers (64, then 32 neurons), ReLU activations, output layer with 2 units (softmax).}
    \item {\textbf{LSTM}: Single-layer LSTM (32 hidden units), fully connected layer mapping to 2 output classes.}
\end{itemize}

{\textbf{Training setup:} Adam optimizer (lr=$10^{-3}$), batch size=128, max epochs=1000, early stopping (patience=10), 50 batches per epoch, 80/20 train-validation split.}\\

Below are the results obtained for different block sizes \textit{M}: (1) 10{,}000, (2) 100{,}000, (3) 1{,}000{,}000, and (4) 10{,}000{,}000.

\begin{table}[H]
    \centering
     \resizebox{\textwidth}{!}{%
    \begin{tabular}{c c c c c c c}
        \toprule
        $k$ & $\hat{\mathcal{R}}^{k}_{\text{LSTM}}(Q)$ & $\hat{\mathcal{R}}^{k}_{\text{MLP}}(Q)$ & $\hat{\Lambda}(k)$ & 
        $\hat{\mathcal{R}}^{k}_{\text{LSTM}}(Q) - \hat{\Lambda}(k)$ & 
        $\hat{\mathcal{R}}^{k}_{\text{MLP}}(Q) - \hat{\Lambda}(k)$ & 
        \texttt{EvoRate}$(k)$ \\
        \midrule
        1  & 0.6932 & 0.6927 & 0.3208 ± 0.0043 & \textbf{0.3724 ± 0.0043} & \textbf{0.3719 ± 0.0043} & 0.0005 ± 0.0012 \\
        2  & 0.6220 & 0.6200 & 0.2058 ± 0.0023 & 0.4162 ± 0.0023 & 0.4142 ± 0.0023 & 0.1171 ± 0.0080 \\
        3  & 0.5955 & 0.5571 & 0.1503 ± 0.0031 & 0.4452 ± 0.0031 & 0.4068 ± 0.0031 & 0.1731 ± 0.0105 \\
        4  & 0.5611 & 0.5560 & 0.1177 ± 0.0034 & 0.4434 ± 0.0034 & 0.4383 ± 0.0034 & 0.2059 ± 0.0115 \\
        5  & 0.5394 & 0.5429 & 0.0962 ± 0.0035 & 0.4432 ± 0.0035 & 0.4467 ± 0.0035 & 0.2274 ± 0.0120 \\
        6  & 0.5347 & 0.5343 & 0.0810 ± 0.0036 & 0.4537 ± 0.0036 & 0.4533 ± 0.0036 & 0.2427 ± 0.0122 \\
        7  & 0.5167 & 0.5199 & 0.0696 ± 0.0035 & 0.4471 ± 0.0035 & 0.4503 ± 0.0035 & 0.2541 ± 0.0124 \\
        8  & 0.5105 & 0.5088 & 0.0607 ± 0.0035 & 0.4498 ± 0.0035 & 0.4481 ± 0.0035 & 0.2629 ± 0.0125 \\
        9  & 0.5036 & 0.5092 & 0.0537 ± 0.0034 & 0.4499 ± 0.0034 & 0.4555 ± 0.0034 & 0.2700 ± 0.0126 \\
        10 & 0.5079 & 0.5064 & 0.0479 ± 0.0034 & 0.4600 ± 0.0034 & 0.4585 ± 0.0034 & 0.2758 ± 0.0126 \\
        11 & 0.5077 & 0.5232 & 0.0430 ± 0.0034 & 0.4647 ± 0.0034 & 0.4802 ± 0.0034 & 0.2807 ± 0.0127 \\
        12 & 0.5181 & 0.4997 & 0.0389 ± 0.0034 & 0.4792 ± 0.0034 & 0.4608 ± 0.0034 & 0.2850 ± 0.0127 \\
        13 & 0.4965 & 0.5012 & 0.0354 ± 0.0033 & 0.4611 ± 0.0033 & 0.4658 ± 0.0033 & 0.2887 ± 0.0128 \\
        14 & 0.4989 & 0.4992 & 0.0324 ± 0.0033 & 0.4665 ± 0.0033 & 0.4668 ± 0.0033 & 0.2924 ± 0.0128 \\
        15 & 0.4864 & 0.5000 & 0.0297 ± 0.0032 & 0.4567 ± 0.0032 & 0.4703 ± 0.0032 & 0.2962 ± 0.0128 \\
        16 & 0.4949 & 0.5062 & 0.0273 ± 0.0032 & 0.4676 ± 0.0032 & 0.4789 ± 0.0032 & 0.3010 ± 0.0128 \\
        17 & 0.4972 & 0.4968 & 0.0252 ± 0.0032 & 0.4720 ± 0.0032 & 0.4716 ± 0.0032 & 0.3080 ± 0.0128 \\
        18 & 0.4885 & \textbf{0.4872} & 0.0232 ± 0.0033 & 0.4653 ± 0.0033 & 0.4640 ± 0.0033 & 0.3202 ± 0.0128 \\
        19 & \textbf{0.4853} & 0.5041 & 0.0213 ± 0.0034 & 0.4640 ± 0.0034 & 0.4828 ± 0.0034 & 0.3452 ± 0.0129 \\
        \midrule
        \textbf{Minima} & \textbf{0.4853} & \textbf{0.4872} & -- & $\hat{\mathcal{R}}^\infty(Q^*) = 0.3724 \pm 0.0043$ & $\hat{\mathcal{R}}^\infty(Q^*) = 0.3719 \pm 0.0043$ & -- \\
        \bottomrule
    \end{tabular}
    }
    \caption{Estimated risks and $\Lambda(k)$ values for different history lengths $k$ (\textit{M} = 10{,}000).}
    \label{tab:universal_learning_1}
\end{table}

\begin{table}[H]
\centering
 \resizebox{\textwidth}{!}{%
\begin{tabular}{c c c c c c c}
\toprule
Order $k$ & $\hat{\mathcal{R}}^{k}_{\text{LSTM}}(Q)$ & $\hat{\mathcal{R}}^{k}_{\text{MLP}}(Q)$ & $\hat{\Lambda}(k)$ & 
$\hat{\mathcal{R}}^{k}_{\text{LSTM}}(Q) - \hat{\Lambda}(k)$ & $\hat{\mathcal{R}}^{k}_{\text{MLP}}(Q) - \hat{\Lambda}(k)$ & \texttt{EvoRate}$(k)$ \\
\midrule
1  & 0.6897 & 0.6933 & 0.3213 ± 0.0153 & \textbf{0.3684 ± 0.0153} & \textbf{0.3720 ± 0.0153} & 0.0056 ± 0.0030 \\
2  & 0.5789 & 0.6070 & 0.2061 ± 0.0058 & 0.3728 ± 0.0058 & 0.4009 ± 0.0058 & 0.1261 ± 0.0208 \\
3  & 0.6036 & 0.5745 & 0.1506 ± 0.0071 & 0.4530 ± 0.0071 & 0.4239 ± 0.0071 & 0.1832 ± 0.0262 \\
4  & 0.5497 & 0.5615 & 0.1179 ± 0.0081 & 0.4318 ± 0.0081 & 0.4436 ± 0.0081 & 0.2163 ± 0.0282 \\
5  & 0.5146 & 0.5383 & 0.0964 ± 0.0085 & 0.4182 ± 0.0085 & 0.4419 ± 0.0085 & 0.2379 ± 0.0290 \\
6  & 0.5226 & 0.5537 & 0.0811 ± 0.0087 & 0.4415 ± 0.0087 & 0.4726 ± 0.0087 & 0.2531 ± 0.0294 \\
7  & 0.5311 & 0.5409 & 0.0697 ± 0.0088 & 0.4614 ± 0.0088 & 0.4712 ± 0.0088 & 0.2645 ± 0.0296 \\
8  & 0.5124 & 0.5531 & 0.0608 ± 0.0088 & 0.4516 ± 0.0088 & 0.4923 ± 0.0088 & 0.2733 ± 0.0296 \\
9  & 0.5236 & 0.5331 & 0.0537 ± 0.0088 & 0.4699 ± 0.0088 & 0.4794 ± 0.0088 & 0.2803 ± 0.0297 \\
10 & 0.4901 & 0.5186 & 0.0479 ± 0.0088 & 0.4422 ± 0.0088 & 0.4707 ± 0.0088 & 0.2861 ± 0.0297 \\
11 & \textbf{0.4679} & 0.5119 & 0.0430 ± 0.0088 & 0.4249 ± 0.0088 & 0.4689 ± 0.0088 & 0.2910 ± 0.0296 \\
12 & 0.4975 & 0.5273 & 0.0389 ± 0.0088 & 0.4586 ± 0.0088 & 0.4884 ± 0.0088 & 0.2952 ± 0.0296 \\
13 & 0.4692 & 0.4736 & 0.0353 ± 0.0088 & 0.4339 ± 0.0088 & 0.4383 ± 0.0088 & 0.2990 ± 0.0296 \\
14 & 0.5152 & 0.5259 & 0.0323 ± 0.0088 & 0.4829 ± 0.0088 & 0.4936 ± 0.0088 & 0.3026 ± 0.0296 \\
15 & 0.4971 & 0.4733 & 0.0296 ± 0.0087 & 0.4675 ± 0.0087 & 0.4437 ± 0.0087 & 0.3064 ± 0.0296 \\
16 & 0.4714 & 0.5184 & 0.0272 ± 0.0087 & 0.4442 ± 0.0087 & 0.4912 ± 0.0087 & 0.3112 ± 0.0296 \\
17 & 0.4996 & 0.5278 & 0.0250 ± 0.0087 & 0.4746 ± 0.0087 & 0.5028 ± 0.0087 & 0.3182 ± 0.0295 \\
18 & 0.5037 & 0.4748 & 0.0230 ± 0.0089 & 0.4807 ± 0.0089 & 0.4518 ± 0.0089 & 0.3305 ± 0.0297 \\
19 & 0.5189 & \textbf{0.4668} & 0.0211 ± 0.0092 & 0.4978 ± 0.0092 & 0.4457 ± 0.0092 & 0.3556 ± 0.0299 \\
\midrule
\multicolumn{1}{r}{\textbf{Minima}} &
\textbf{0.4679} &
\textbf{0.4668} &
-- &
$\hat{\mathcal{R}}^\infty(Q^*) = 0.3684 \pm 0.0153$ &
$\hat{\mathcal{R}}^\infty(Q^*) = 0.3720 \pm 0.0153$ &
-- \\
\bottomrule
\end{tabular}
}
\caption{Estimated risks and $\Lambda(k)$ values for different history lengths $k$ (\textit{M} = 100{,}000).}
\label{tab:universal_learning_2}
\end{table}

\begin{table}[H]
\centering
 \resizebox{\textwidth}{!}{%
\begin{tabular}{c c c c c c c}
\toprule
Order $k$ & $\hat{\mathcal{R}}^{k}_{\text{LSTM}}(Q)$ & $\hat{\mathcal{R}}^{k}_{\text{MLP}}(Q)$ & $\hat{\Lambda}(k)$ & 
$\hat{\mathcal{R}}^{k}_{\text{LSTM}}(Q) - \hat{\Lambda}(k)$ & $\hat{\mathcal{R}}^{k}_{\text{MLP}}(Q) - \hat{\Lambda}(k)$ & \texttt{EvoRate}$(k)$ \\
\midrule
1  & 0.6910 & 0.6808 & 0.3235 ± 0.0718 & 0.3675 ± 0.0718 & 0.3573 ± 0.0718 & 0.0009 ± 0.0320 \\
2  & 0.5641 & 0.6290 & 0.2069 ± 0.0188 & 0.3572 ± 0.0188 & 0.4221 ± 0.0188 & 0.1450 ± 0.0465 \\
3  & 0.6157 & 0.5502 & 0.1510 ± 0.0314 & 0.4647 ± 0.0314 & 0.3992 ± 0.0314 & 0.2090 ± 0.0554 \\
4  & 0.5913 & 0.5847 & 0.1181 ± 0.0376 & 0.4732 ± 0.0376 & 0.4666 ± 0.0376 & 0.2460 ± 0.0588 \\
5  & 0.5621 & 0.5145 & 0.0965 ± 0.0395 & 0.4656 ± 0.0395 & 0.4180 ± 0.0395 & 0.2706 ± 0.0603 \\
6  & 0.5087 & 0.6114 & 0.0811 ± 0.0402 & 0.4276 ± 0.0402 & 0.5303 ± 0.0402 & 0.2883 ± 0.0609 \\
7  & 0.4705 & 0.4423 & 0.0697 ± 0.0401 & 0.4008 ± 0.0401 & 0.3726 ± 0.0401 & 0.3016 ± 0.0613 \\
8  & 0.5378 & 0.5469 & 0.0607 ± 0.0398 & 0.4771 ± 0.0398 & 0.4862 ± 0.0398 & 0.3120 ± 0.0614 \\
9  & 0.5195 & 0.5008 & 0.0536 ± 0.0394 & 0.4659 ± 0.0394 & 0.4472 ± 0.0394 & 0.3202 ± 0.0615 \\
10 & 0.4480 & 0.4690 & 0.0478 ± 0.0390 & 0.4002 ± 0.0390 & 0.4212 ± 0.0390 & 0.3269 ± 0.0615 \\
11 & 0.5612 & 0.5034 & 0.0429 ± 0.0386 & 0.5183 ± 0.0386 & 0.4605 ± 0.0386 & 0.3326 ± 0.0615 \\
12 & 0.5172 & 0.4650 & 0.0388 ± 0.0383 & 0.4784 ± 0.0383 & 0.4262 ± 0.0383 & 0.3373 ± 0.0615 \\
13 & 0.5039 & 0.5169 & 0.0352 ± 0.0379 & 0.4687 ± 0.0379 & 0.4817 ± 0.0379 & 0.3416 ± 0.0615 \\
14 & 0.5588 & 0.4728 & 0.0321 ± 0.0376 & 0.5267 ± 0.0376 & 0.4407 ± 0.0376 & 0.3456 ± 0.0615 \\
15 & 0.4760 & 0.4402 & 0.0294 ± 0.0374 & 0.4466 ± 0.0374 & 0.4108 ± 0.0374 & 0.3498 ± 0.0614 \\
16 & 0.5425 & \textbf{0.3660} & 0.0270 ± 0.0373 & 0.5155 ± 0.0373 & \textbf{0.3390 ± 0.0373} & 0.3548 ± 0.0614 \\
17 & 0.4003 & 0.4738 & 0.0249 ± 0.0370 & 0.3754 ± 0.0370 & 0.4489 ± 0.0370 & 0.3619 ± 0.0616 \\
18 & \textbf{0.3798} & 0.4653 & 0.0229 ± 0.0359 & \textbf{0.3569 ± 0.0359} & 0.4424 ± 0.0359 & 0.3741 ± 0.0618 \\
19 & 0.5082 & 0.5629 & 0.0210 ± 0.0315 & 0.4872 ± 0.0315 & 0.5419 ± 0.0315 & 0.3989 ± 0.0618 \\
\midrule
\multicolumn{1}{r}{\textbf{Minima}} &
\textbf{0.3798} &
\textbf{0.3660} &
-- &
$\hat{\mathcal{R}}^\infty(Q^*) = 0.3569 \pm 0.0359$ &
$\hat{\mathcal{R}}^\infty(Q^*) = 0.3390 \pm 0.0373$ &
-- \\
\bottomrule
\end{tabular}
}
\captionsetup{justification=centering}
\caption{Estimated risks and $\Lambda(k)$ values for different history lengths $k$ (\textit{M} = 1{,}000{,}000).}
\label{tab:universal_learning_3}
\end{table}

\begin{table}[H]
\centering
\resizebox{\textwidth}{!}{%
\begin{tabular}{c c c c c c c}
\toprule
Order $k$ & $\hat{\mathcal{R}}^{k}_{\text{LSTM}}(Q)$ & $\hat{\mathcal{R}}^{k}_{\text{MLP}}(Q)$ & $\hat{\Lambda}(k)$ & 
$\hat{\mathcal{R}}^{k}_{\text{LSTM}}(Q) - \hat{\Lambda}(k)$ & $\hat{\mathcal{R}}^{k}_{\text{MLP}}(Q) - \hat{\Lambda}(k)$ & \texttt{EvoRate}(k) \\
\midrule
1  & 0.6709 & 0.3551 & 0.0037 ± 0.0929 & 0.6672 ± 0.0929 & 0.3514 ± 0.0929 & 0.4760 ± 0.2277 \\
2  & 0.5209 & 0.2713 & 0.0037 ± 0.0067 & 0.5172 ± 0.0067 & 0.2676 ± 0.0067 & 0.4760 ± 0.2277 \\
3  & 0.1961 & 0.6698 & 0.0037 ± 0.0086 & 0.1924 ± 0.0086 & 0.6661 ± 0.0086 & 0.4760 ± 0.2277 \\
4  & 0.5440 & 0.6037 & 0.0037 ± 0.0075 & 0.5403 ± 0.0075 & 0.6000 ± 0.0075 & 0.4760 ± 0.2277 \\
5  & 0.2271 & 0.1029 & 0.0037 ± 0.0108 & 0.2234 ± 0.0108 & 0.0992 ± 0.0108 & 0.4760 ± 0.2277 \\
6  & 0.2312 & 0.4198 & 0.0037 ± 0.0101 & 0.2275 ± 0.0101 & 0.4161 ± 0.0101 & 0.4760 ± 0.2277 \\
7  & 0.6919 & 0.6896 & 0.0037 ± 0.0139 & 0.6882 ± 0.0139 & 0.6859 ± 0.0139 & 0.4760 ± 0.2277 \\
8  & 0.6824 & 0.5694 & 0.0037 ± 0.0143 & 0.6787 ± 0.0143 & 0.5657 ± 0.0143 & 0.4760 ± 0.2277 \\
9  & 0.6913 & \textbf{0.0903} & 0.0036 ± 0.0061 & 0.6877 ± 0.0061 & \textbf{0.0867 ± 0.0061} & 0.4760 ± 0.2277 \\
10 & \textbf{0.0733} & 0.1276 & 0.0036 ± 0.0206 & \textbf{0.0697 ± 0.0206} & 0.1240 ± 0.0206 & 0.4760 ± 0.2277 \\
11 & 0.5529 & 0.6932 & 0.0035 ± 0.0067 & 0.5494 ± 0.0067 & 0.6897 ± 0.0067 & 0.4761 ± 0.2277 \\
12 & 0.5981 & 0.6909 & 0.0034 ± 0.0098 & 0.5947 ± 0.0098 & 0.6875 ± 0.0098 & 0.4762 ± 0.2277 \\
13 & 0.5111 & 0.6873 & 0.0031 ± 0.0079 & 0.5080 ± 0.0079 & 0.6842 ± 0.0079 & 0.4764 ± 0.2276 \\
14 & 0.4116 & 0.5636 & 0.0025 ± 0.0063 & 0.4091 ± 0.0063 & 0.5611 ± 0.0063 & 0.4767 ± 0.2275 \\
15 & 0.6620 & 0.5387 & 0.0013 ± 0.0054 & 0.6607 ± 0.0054 & 0.5374 ± 0.0054 & 0.4772 ± 0.2271 \\
16 & 0.2525 & 0.6352 & -0.0010 ± 0.0099 & 0.2535 ± 0.0099 & 0.6362 ± 0.0099 & 0.4779 ± 0.2263 \\
17 & 0.3450 & 0.5577 & -0.0059 ± 0.0138 & 0.3509 ± 0.0138 & 0.5636 ± 0.0138 & 0.4788 ± 0.2246 \\
18 & 0.6929 & 0.6884 & -0.0156 ± 0.0142 & 0.7085 ± 0.0142 & 0.7040 ± 0.0142 & 0.4801 ± 0.2209 \\
19 & 0.2151 & 0.2619 & -0.0355 ± 0.0050 & 0.2506 ± 0.0050 & 0.2974 ± 0.0050 & 0.4819 ± 0.2128 \\
\midrule
\multicolumn{1}{r}{\textbf{Minima}} &
\textbf{0.0733} &
\textbf{0.0903} &
-- &
$\hat{\mathcal{R}}^\infty(Q^*) = 0.0697 \pm 0.0206$ &
$\hat{\mathcal{R}}^\infty(Q^*) = 0.0867 \pm 0.0061$ &
-- \\
\bottomrule
\end{tabular}
}
\caption{Estimated risks and $\Lambda(k)$ values for different history lengths $k$ (\textit{M} = 10,000,000).}
\label{tab:universal_learning_4}
\end{table}

\section{Computational Cost}
\label{sec:cost}
All experiments were conducted on a system equipped with a 14-core CPU, a 20-core integrated GPU, 24~GB of unified memory, and 1~TB of SSD storage. 

\textbf{Estimating $\mathbf{I}_{\text{pred}}$} The computational complexity of the proposed algorithm is primarily driven by the estimation of mutual information between learned representations. This process incurs a cost of $\mathcal{O}(B^2 d)$ per iteration, where $B$ is the batch size and $d$ is the dimensionality of the representations. The $\mathcal{O}(B^2 d)$ complexity reflects the need to compute pairwise interactions between samples within each batch in a high-dimensional space. The total computational cost per iteration is therefore $\mathcal{O}(B^2 d)$, excluding the contribution of the encoder $g$ and decoder $h$, whose complexity depends on their specific architectures.

\textbf{Estimating $\hat{\Lambda}(k)$} Using Corollary~\ref{cor:universal_learning_curve_parametric}, when \( k \) ranges from 1 to \( n \), the computational complexity of estimating the learning curve is \( \mathcal{O}(n B^2 d) \).

\textbf{Estimating $\mathcal{R}^{\infty}(Q^*)$} The total cost corresponds to training the model \( Q \) times over \( n \). However, this cost is generally unknown or difficult to quantify directly. Moreover, it must be added to the computational cost required to estimate the learning curve \( \hat{\Lambda}(k) \).

\section{Algorithms training procedures}
\label{alg:evopred}
\begin{algorithm}[H]
\caption{$\mathbf{I}_{\text{pred}}$: Data is sampled in a sequential manner with temporal alignment}
\begin{algorithmic}[1]
\State \textbf{for each training iteration do}
\State \quad Sample $\{(x_i, y_i)\}_{i=1}^B$ from a long sequence $\{z_t\}_{t=1}^N$ such that:
\[
x_i = [z_{t_i - k}, \ldots, z_{t_i - 1}], \quad y_i = [z_{t_i}, \ldots, z_{t_i + k' - 1}]
\]
\State \quad Compute critic scores $S_{ij} = f_\theta(x_i, y_j)$
\State \quad Compute $\mathbf{I}_{\text{pred}}$ using Eq.~(3):
\[
\mathbf{I}_{\text{pred}, i}(k, k')
 := \frac{1}{B} \sum_{i=1}^B f_\theta(x_i, y_i)
- \ln \left( \frac{1}{B(B-1)} \sum_{\substack{i,j=1 \\ i \ne j}}^B \exp(f_\theta(x_i, y_j)) \right)
\]
\State \quad Update critic parameters $\theta$ by maximizing $\mathbf{I}_{\text{pred}}$
\State \textbf{end for}
\end{algorithmic}
\end{algorithm}

% \section{Limitations}
% \label{sec:limitations}
% Estimating $\mathbf{I}_{\text{pred}}$ remains a challenging task in practice. Since the learning curve is derived directly from this quantity, any noise or bias in the estimation can lead to significant errors in the assessment of the optimal achievable risk. In particular, the tendency of current estimators to systematically underestimate $\mathbf{I}_{\text{pred}}$ can misrepresent the true predictive structure of the data. Future research should focus on improving both the computational efficiency and estimation accuracy of $\mathbf{I}_{\text{pred}}$, and on extending the applicability of the framework to real-world datasets where the complexity and noise levels are higher.
  
% A particularly valuable direction would be to evaluate this theoretical framework on real sequential tasks—such as those in natural language processing—where verifying its practical usefulness would provide strong evidence of its relevance beyond synthetic benchmarks.

\section{Limitations}
\label{sec:limitations}
Estimating $\mathbf{I}_{\text{pred}}$ remains a challenging task in practice. Since the learning curve is derived directly from this quantity, any noise or bias in the estimation can lead to significant errors in the assessment of the optimal achievable risk. In particular, the tendency of current estimators to systematically underestimate $\mathbf{I}_{\text{pred}}$ can misrepresent the true predictive structure of the data. Moreover, estimation can suffer from variance and occasional instability, especially in high-dimensional settings or when the underlying process has weak temporal dependencies. While developing a definitive and robust estimator of $\mathbf{I}_{\text{pred}}$ is not the main objective of this work, we view it as an important and promising direction for future research in its own right.

Another limitation lies in the fact that our estimator of the minimal achievable risk is model-dependent, in contrast to model-free approaches such as \texttt{EvoRate}~\cite{zeng2025towards}. We acknowledge that this dependency introduces potential confounds related to model capacity and training variability. However, our framework is designed with a different purpose: while \texttt{EvoRate} aims to detect the mere presence of temporal structure, our objective is to quantify how much of this structure is captured by a given model class. The core quantity of interest—the learning curve $\Lambda(k)$—remains model-independent and reflects the intrinsic predictability of the data across context lengths. The model-specific risk can then be seen as an added diagnostic, enabling one to assess whether predictive limitations stem from insufficient model capacity or from intrinsic data constraints.

Importantly, in our experiments, the estimated minimal achievable risk was generally stable across different architectures, which suggests that the diagnostic remains robust despite being evaluated within specific model classes. Nonetheless, we emphasize that mutual information estimation in sequential settings is still a young and evolving research area. We hope that our contribution helps lay the theoretical foundation for more robust methods—potentially by shifting focus from direct mutual information estimation to the more stable and interpretable learning curve $\Lambda(k)$.

Finally, an important future direction will be to evaluate the framework on large-scale real-world sequential tasks—such as those in natural language processing—where validating its usefulness beyond synthetic benchmarks would provide strong evidence of its practical relevance.

%%%%%%%%%%%%%%%%%%%%%%%%%%%%%%%%%%%%%%%%%%%%%%%%%%%%%%%%%%%%

\newpage
\section*{NeurIPS Paper Checklist}

%%% BEGIN INSTRUCTIONS %%%
The checklist is designed to encourage best practices for responsible machine learning research, addressing issues of reproducibility, transparency, research ethics, and societal impact. Do not remove the checklist: {\bf The papers not including the checklist will be desk rejected.} The checklist should follow the references and follow the (optional) supplemental material.  The checklist does NOT count towards the page
limit. 

Please read the checklist guidelines carefully for information on how to answer these questions. For each question in the checklist:
\begin{itemize}
    \item You should answer \answerYes{}, \answerNo{}, or \answerNA{}.
    \item \answerNA{} means either that the question is Not Applicable for that particular paper or the relevant information is Not Available.
    \item Please provide a short (1–2 sentence) justification right after your answer (even for NA). 
   % \item {\bf The papers not including the checklist will be desk rejected.}
\end{itemize}

{\bf The checklist answers are an integral part of your paper submission.} They are visible to the reviewers, area chairs, senior area chairs, and ethics reviewers. You will be asked to also include it (after eventual revisions) with the final version of your paper, and its final version will be published with the paper.

The reviewers of your paper will be asked to use the checklist as one of the factors in their evaluation. While "\answerYes{}" is generally preferable to "\answerNo{}", it is perfectly acceptable to answer "\answerNo{}" provided a proper justification is given (e.g., "error bars are not reported because it would be too computationally expensive" or "we were unable to find the license for the dataset we used"). In general, answering "\answerNo{}" or "\answerNA{}" is not grounds for rejection. While the questions are phrased in a binary way, we acknowledge that the true answer is often more nuanced, so please just use your best judgment and write a justification to elaborate. All supporting evidence can appear either in the main paper or the supplemental material, provided in appendix. If you answer \answerYes{} to a question, in the justification please point to the section(s) where related material for the question can be found.

IMPORTANT, please:
\begin{itemize}
    \item {\bf Delete this instruction block, but keep the section heading ``NeurIPS Paper Checklist"},
    \item  {\bf Keep the checklist subsection headings, questions/answers and guidelines below.}
    \item {\bf Do not modify the questions and only use the provided macros for your answers}.
\end{itemize}

%%% END INSTRUCTIONS %%%

\begin{enumerate}

\item {\bf Claims}
    \item[] Question: Do the main claims made in the abstract and introduction accurately reflect the paper's contributions and scope?
    \answerYes{}
    \item[] Justification: The claims are clearly stated in the abstract and introduction.
    \item[] Guidelines:
    \begin{itemize}
        \item The answer NA means that the abstract and introduction do not include the claims made in the paper.
        \item The abstract and/or introduction should clearly state the claims made, including the contributions made in the paper and important assumptions and limitations. A No or NA answer to this question will not be perceived well by the reviewers. 
        \item The claims made should match theoretical and experimental results, and reflect how much the results can be expected to generalize to other settings. 
        \item It is fine to include aspirational goals as motivation as long as it is clear that these goals are not attained by the paper. 
    \end{itemize}

\item {\bf Limitations}
    \item[] Question: Does the paper discuss the limitations of the work performed by the authors?
    \item[] Answer: \answerYes{} % Replace by \answerYes{}, \answerNo{}, or \answerNA{}.
    \item[] Justification : See Appendix section ~\ref{sec:limitations}.
    \item[] Guidelines:
    \begin{itemize}
        \item The answer NA means that the paper has no limitation while the answer No means that the paper has limitations, but those are not discussed in the paper. 
        \item The authors are encouraged to create a separate "Limitations" section in their paper.
        \item The paper should point out any strong assumptions and how robust the results are to violations of these assumptions (e.g., independence assumptions, noiseless settings, model well-specification, asymptotic approximations only holding locally). The authors should reflect on how these assumptions might be violated in practice and what the implications would be.
        \item The authors should reflect on the scope of the claims made, e.g., if the approach was only tested on a few datasets or with a few runs. In general, empirical results often depend on implicit assumptions, which should be articulated.
        \item The authors should reflect on the factors that influence the performance of the approach. For example, a facial recognition algorithm may perform poorly when image resolution is low or images are taken in low lighting. Or a speech-to-text system might not be used reliably to provide closed captions for online lectures because it fails to handle technical jargon.
        \item The authors should discuss the computational efficiency of the proposed algorithms and how they scale with dataset size.
        \item If applicable, the authors should discuss possible limitations of their approach to address problems of privacy and fairness.
        \item While the authors might fear that complete honesty about limitations might be used by reviewers as grounds for rejection, a worse outcome might be that reviewers discover limitations that aren't acknowledged in the paper. The authors should use their best judgment and recognize that individual actions in favor of transparency play an important role in developing norms that preserve the integrity of the community. Reviewers will be specifically instructed to not penalize honesty concerning limitations.
    \end{itemize}

\item {\bf Theory assumptions and proofs}
    \item[] Question: For each theoretical result, does the paper provide the full set of assumptions and a complete (and correct) proof?
    \item[] Answer: \answerYes{} % Replace by \answerYes{}, \answerNo{}, or \answerNA{}.
    \item[] Justification: The full set of assumptions is described before each proposition and theorem and each corresponding proof can be found in the Appendix section ~\ref{appendix:partA}.
    \item[] Guidelines:
    \begin{itemize}
        \item The answer NA means that the paper does not include theoretical results. 
        \item All the theorems, formulas, and proofs in the paper should be numbered and cross-referenced.
        \item All assumptions should be clearly stated or referenced in the statement of any theorems.
        \item The proofs can either appear in the main paper or the supplemental material, but if they appear in the supplemental material, the authors are encouraged to provide a short proof sketch to provide intuition. 
        \item Inversely, any informal proof provided in the core of the paper should be complemented by formal proofs provided in appendix or supplemental material.
        \item Theorems and Lemmas that the proof relies upon should be properly referenced. 
    \end{itemize}

    \item {\bf Experimental result reproducibility}
    \item[] Question: Does the paper fully disclose all the information needed to reproduce the main experimental results of the paper to the extent that it affects the main claims and/or conclusions of the paper (regardless of whether the code and data are provided or not)?
    \item[] Answer: \answerYes{} % Replace by \answerYes{}, \answerNo{}, or \answerNA{}.
    \item[] Justification: See Appendix sections ~\ref{appendix:partB} and ~\ref{sec:experiment}.
    \item[] Guidelines:
    \begin{itemize}
        \item The answer NA means that the paper does not include experiments.
        \item If the paper includes experiments, a No answer to this question will not be perceived well by the reviewers: Making the paper reproducible is important, regardless of whether the code and data are provided or not.
        \item If the contribution is a dataset and/or model, the authors should describe the steps taken to make their results reproducible or verifiable. 
        \item Depending on the contribution, reproducibility can be accomplished in various ways. For example, if the contribution is a novel architecture, describing the architecture fully might suffice, or if the contribution is a specific model and empirical evaluation, it may be necessary to either make it possible for others to replicate the model with the same dataset, or provide access to the model. In general. releasing code and data is often one good way to accomplish this, but reproducibility can also be provided via detailed instructions for how to replicate the results, access to a hosted model (e.g., in the case of a large language model), releasing of a model checkpoint, or other means that are appropriate to the research performed.
        \item While NeurIPS does not require releasing code, the conference does require all submissions to provide some reasonable avenue for reproducibility, which may depend on the nature of the contribution. For example
        \begin{enumerate}
            \item If the contribution is primarily a new algorithm, the paper should make it clear how to reproduce that algorithm.
            \item If the contribution is primarily a new model architecture, the paper should describe the architecture clearly and fully.
            \item If the contribution is a new model (e.g., a large language model), then there should either be a way to access this model for reproducing the results or a way to reproduce the model (e.g., with an open-source dataset or instructions for how to construct the dataset).
            \item We recognize that reproducibility may be tricky in some cases, in which case authors are welcome to describe the particular way they provide for reproducibility. In the case of closed-source models, it may be that access to the model is limited in some way (e.g., to registered users), but it should be possible for other researchers to have some path to reproducing or verifying the results.
        \end{enumerate}
    \end{itemize}

\item {\bf Open access to data and code}
    \item[] Question: Does the paper provide open access to the data and code, with sufficient instructions to faithfully reproduce the main experimental results, as described in supplemental material?
    \item[] Answer: \answerYes{} % Replace by \answerYes{}, \answerNo{}, or \answerNA{}.
    \item[] Justification : The GitHub link is provided in the introduction.
    \item[] Guidelines:
    \begin{itemize}
        \item The answer NA means that paper does not include experiments requiring code.
        \item Please see the NeurIPS code and data submission guidelines (\url{https://nips.cc/public/guides/CodeSubmissionPolicy}) for more details.
        \item While we encourage the release of code and data, we understand that this might not be possible, so “No” is an acceptable answer. Papers cannot be rejected simply for not including code, unless this is central to the contribution (e.g., for a new open-source benchmark).
        \item The instructions should contain the exact command and environment needed to run to reproduce the results. See the NeurIPS code and data submission guidelines (\url{https://nips.cc/public/guides/CodeSubmissionPolicy}) for more details.
        \item The authors should provide instructions on data access and preparation, including how to access the raw data, preprocessed data, intermediate data, and generated data, etc.
        \item The authors should provide scripts to reproduce all experimental results for the new proposed method and baselines. If only a subset of experiments are reproducible, they should state which ones are omitted from the script and why.
        \item At submission time, to preserve anonymity, the authors should release anonymized versions (if applicable).
        \item Providing as much information as possible in supplemental material (appended to the paper) is recommended, but including URLs to data and code is permitted.
    \end{itemize}

\item {\bf Experimental setting/details}
    \item[] Question: Does the paper specify all the training and test details (e.g., data splits, hyperparameters, how they were chosen, type of optimizer, etc.) necessary to understand the results?
    \item[] Answer: \answerYes{} % Replace by \answerYes{}, \answerNo{}, or \answerNA{}.
    \item[] Justification: See Appendix sections ~\ref{appendix:partB} and ~\ref{sec:experiment}. 
    \item[] Guidelines:
    \begin{itemize}
        \item The answer NA means that the paper does not include experiments.
        \item The experimental setting should be presented in the core of the paper to a level of detail that is necessary to appreciate the results and make sense of them.
        \item The full details can be provided either with the code, in appendix, or as supplemental material.
    \end{itemize}

\item {\bf Experiment statistical significance}
    \item[] Question: Does the paper report error bars suitably and correctly defined or other appropriate information about the statistical significance of the experiments?
    \item[] Answer: \answerYes{} % Replace by \answerYes{}, \answerNo{}, or \answerNA{}.
    \item[] Justification: Confidence intervals are reported whenever computationally feasible. See Section~\ref{sec:Experiment} for details and Appendix~\ref{sec:partC} for additional results.

    \item[] Guidelines:
    \begin{itemize}
        \item The answer NA means that the paper does not include experiments.
        \item The authors should answer "Yes" if the results are accompanied by error bars, confidence intervals, or statistical significance tests, at least for the experiments that support the main claims of the paper.
        \item The factors of variability that the error bars are capturing should be clearly stated (for example, train/test split, initialization, random drawing of some parameter, or overall run with given experimental conditions).
        \item The method for calculating the error bars should be explained (closed form formula, call to a library function, bootstrap, etc.)
        \item The assumptions made should be given (e.g., Normally distributed errors).
        \item It should be clear whether the error bar is the standard deviation or the standard error of the mean.
        \item It is OK to report 1-sigma error bars, but one should state it. The authors should preferably report a 2-sigma error bar than state that they have a 96\% CI, if the hypothesis of Normality of errors is not verified.
        \item For asymmetric distributions, the authors should be careful not to show in tables or figures symmetric error bars that would yield results that are out of range (e.g. negative error rates).
        \item If error bars are reported in tables or plots, The authors should explain in the text how they were calculated and reference the corresponding figures or tables in the text.
    \end{itemize}

\item {\bf Experiments compute resources}
    \item[] Question: For each experiment, does the paper provide sufficient information on the computer resources (type of compute workers, memory, time of execution) needed to reproduce the experiments?
    \item[] Answer: \answerYes{} % Replace by \answerYes{}, \answerNo{}, or \answerNA{}.
    \item[] Justification: See Appendix section ~\ref{sec:cost}.
    \item[] Guidelines:
    \begin{itemize}
        \item The answer NA means that the paper does not include experiments.
        \item The paper should indicate the type of compute workers CPU or GPU, internal cluster, or cloud provider, including relevant memory and storage.
        \item The paper should provide the amount of compute required for each of the individual experimental runs as well as estimate the total compute. 
        \item The paper should disclose whether the full research project required more compute than the experiments reported in the paper (e.g., preliminary or failed experiments that didn't make it into the paper). 
    \end{itemize}
    
\item {\bf Code of ethics}
    \item[] Question: Does the research conducted in the paper conform, in every respect, with the NeurIPS Code of Ethics \url{https://neurips.cc/public/EthicsGuidelines}?
    \item[] Answer: \answerYes{} % Replace by \answerYes{}, \answerNo{}, or \answerNA{}.
    \item[] Justification: Anonymity preserved. 
    \item[] Guidelines:
    \begin{itemize}
        \item The answer NA means that the authors have not reviewed the NeurIPS Code of Ethics.
        \item If the authors answer No, they should explain the special circumstances that require a deviation from the Code of Ethics.
        \item The authors should make sure to preserve anonymity (e.g., if there is a special consideration due to laws or regulations in their jurisdiction).
    \end{itemize}

\item {\bf Broader impacts}
    \item[] Question: Does the paper discuss both potential positive societal impacts and negative societal impacts of the work performed?
    Answer: \answerNo{} % Replace by \answerYes{}, \answerNo{}, or \answerNA{}.
    \item[] Justification: This work constitutes foundational research in machine learning rather than an applied study, and as such, it does not pose any foreseeable negative societal impact.
    \item[] Guidelines:
    \begin{itemize}
        \item The answer NA means that there is no societal impact of the work performed.
        \item If the authors answer NA or No, they should explain why their work has no societal impact or why the paper does not address societal impact.
        \item Examples of negative societal impacts include potential malicious or unintended uses (e.g., disinformation, generating fake profiles, surveillance), fairness considerations (e.g., deployment of technologies that could make decisions that unfairly impact specific groups), privacy considerations, and security considerations.
        \item The conference expects that many papers will be foundational research and not tied to particular applications, let alone deployments. However, if there is a direct path to any negative applications, the authors should point it out. For example, it is legitimate to point out that an improvement in the quality of generative models could be used to generate deepfakes for disinformation. On the other hand, it is not needed to point out that a generic algorithm for optimizing neural networks could enable people to train models that generate Deepfakes faster.
        \item The authors should consider possible harms that could arise when the technology is being used as intended and functioning correctly, harms that could arise when the technology is being used as intended but gives incorrect results, and harms following from (intentional or unintentional) misuse of the technology.
        \item If there are negative societal impacts, the authors could also discuss possible mitigation strategies (e.g., gated release of models, providing defenses in addition to attacks, mechanisms for monitoring misuse, mechanisms to monitor how a system learns from feedback over time, improving the efficiency and accessibility of ML).
    \end{itemize}
    
\item {\bf Safeguards}
    \item[] Question: Does the paper describe safeguards that have been put in place for responsible release of data or models that have a high risk for misuse (e.g., pretrained language models, image generators, or scraped datasets)?
    \item[] Answer: \answerNA{} % Replace by \answerYes{}, \answerNo{}, or \answerNA{}.
    \item[] Justification: No such risks are associated with this paper.
    \item[] Guidelines:
    \begin{itemize}
        \item The answer NA means that the paper poses no such risks.
        \item Released models that have a high risk for misuse or dual-use should be released with necessary safeguards to allow for controlled use of the model, for example by requiring that users adhere to usage guidelines or restrictions to access the model or implementing safety filters. 
        \item Datasets that have been scraped from the Internet could pose safety risks. The authors should describe how they avoided releasing unsafe images.
        \item We recognize that providing effective safeguards is challenging, and many papers do not require this, but we encourage authors to take this into account and make a best faith effort.
    \end{itemize}

\item {\bf Licenses for existing assets}
    \item[] Question: Are the creators or original owners of assets (e.g., code, data, models), used in the paper, properly credited and are the license and terms of use explicitly mentioned and properly respected?
    \item[] Answer: \answerYes{} % Replace by \answerYes{}, \answerNo{}, or \answerNA{}.
    \item[] Justification: Relevant references are appropriately acknowledged in the Related Work section.
    \item[] Guidelines:
    \begin{itemize}
        \item The answer NA means that the paper does not use existing assets.
        \item The authors should cite the original paper that produced the code package or dataset.
        \item The authors should state which version of the asset is used and, if possible, include a URL.
        \item The name of the license (e.g., CC-BY 4.0) should be included for each asset.
        \item For scraped data from a particular source (e.g., website), the copyright and terms of service of that source should be provided.
        \item If assets are released, the license, copyright information, and terms of use in the package should be provided. For popular datasets, \url{paperswithcode.com/datasets} has curated licenses for some datasets. Their licensing guide can help determine the license of a dataset.
        \item For existing datasets that are re-packaged, both the original license and the license of the derived asset (if it has changed) should be provided.
        \item If this information is not available online, the authors are encouraged to reach out to the asset's creators.
    \end{itemize}

\item {\bf New assets}
    \item[] Question: Are new assets introduced in the paper well documented and is the documentation provided alongside the assets?
    \item[] Answer: \answerYes{} % Replace by \answerYes{}, \answerNo{}, or \answerNA{}.
    \item[] Justification: Documentation of the new assets can be found in Appendix sections \ref{appendix:partB} and ~\ref{sec:experiment}.
    \item[] Guidelines:
    \begin{itemize}
        \item The answer NA means that the paper does not release new assets.
        \item Researchers should communicate the details of the dataset/code/model as part of their submissions via structured templates. This includes details about training, license, limitations, etc. 
        \item The paper should discuss whether and how consent was obtained from people whose asset is used.
        \item At submission time, remember to anonymize your assets (if applicable). You can either create an anonymized URL or include an anonymized zip file.
    \end{itemize}

\item {\bf Crowdsourcing and research with human subjects}
    \item[] Question: For crowdsourcing experiments and research with human subjects, does the paper include the full text of instructions given to participants and screenshots, if applicable, as well as details about compensation (if any)? 
    \item[] Answer: \answerNA{} % Replace by \answerYes{}, \answerNo{}, or \answerNA{}.
    \item[] Justification: This paper does not involve crowdsourcing or research involving human subjects.
    \item[] Guidelines:
    \begin{itemize}
        \item The answer NA means that the paper does not involve crowdsourcing nor research with human subjects.
        \item Including this information in the supplemental material is fine, but if the main contribution of the paper involves human subjects, then as much detail as possible should be included in the main paper. 
        \item According to the NeurIPS Code of Ethics, workers involved in data collection, curation, or other labor should be paid at least the minimum wage in the country of the data collector. 
    \end{itemize}

\item {\bf Institutional review board (IRB) approvals or equivalent for research with human subjects}
    \item[] Question: Does the paper describe potential risks incurred by study participants, whether such risks were disclosed to the subjects, and whether Institutional Review Board (IRB) approvals (or an equivalent approval/review based on the requirements of your country or institution) were obtained?
    \item[] Answer: \answerNA{} % Replace by \answerYes{}, \answerNo{}, or \answerNA{}.
    \item[] Justification: This paper does not involve crowdsourcing or research involving human subjects.
    \item[] Guidelines:
    \begin{itemize}
        \item The answer NA means that the paper does not involve crowdsourcing nor research with human subjects.
        \item Depending on the country in which research is conducted, IRB approval (or equivalent) may be required for any human subjects research. If you obtained IRB approval, you should clearly state this in the paper. 
        \item We recognize that the procedures for this may vary significantly between institutions and locations, and we expect authors to adhere to the NeurIPS Code of Ethics and the guidelines for their institution. 
        \item For initial submissions, do not include any information that would break anonymity (if applicable), such as the institution conducting the review.
    \end{itemize}

\item {\bf Declaration of LLM usage}
    \item[] Question: Does the paper describe the usage of LLMs if it is an important, original, or non-standard component of the core methods in this research? Note that if the LLM is used only for writing, editing, or formatting purposes and does not impact the core methodology, scientific rigorousness, or originality of the research, declaration is not required.
    %this research? 
    \item[] Answer: \answerNA{} % Replace by \answerYes{}, \answerNo{}, or \answerNA{}.
    \item[] Justification: LLM are only used for editing and formatting purposes.
    \item[] Guidelines:
    \begin{itemize}
        \item The answer NA means that the core method development in this research does not involve LLMs as any important, original, or non-standard components.
        \item Please refer to our LLM policy (\url{https://neurips.cc/Conferences/2025/LLM}) for what should or should not be described.
    \end{itemize}

\end{enumerate}

\end{document}